%% file: main-arxiv.tex
\documentclass{article}

\usepackage[english]{babel}
\usepackage[preprint]{neurips_2026}

\usepackage[utf8]{inputenc} 
\usepackage[T1]{fontenc}    
\usepackage{url}            
\usepackage{booktabs}       
\usepackage{amsfonts}       
\usepackage{nicefrac}       
\usepackage{microtype}      
\usepackage{float}
\usepackage{algorithm}
\usepackage[noend]{algpseudocode}
\usepackage{tikz}
\usetikzlibrary{calc}

\usepackage{graphicx} 
\usepackage{amsmath,amssymb,amsthm,mathtools}
\usepackage{color,soul}
\usepackage{enumitem} 
\usepackage{natbib}
\usepackage{thm-restate}
\usepackage{tikz}
\usetikzlibrary{positioning,fit}
\usepackage{tikz}
\usetikzlibrary{calc}
\usepackage{xcolor}
\usetikzlibrary{positioning,fit,shapes.geometric}
\usepackage{braket}
\usepackage{enumitem}
\usepackage[framemethod=tikz]{mdframed}
\usepackage{etoolbox}
\usepackage{mathrsfs}
\usepackage{soul}

\usepackage[unicode,hypertexnames=false, colorlinks=true, allcolors=blue]{hyperref}
\usepackage{cleveref}

\algblockdefx[Sampler]{Sampler}{EndSampler}
  [1]{\textbf{i.i.d. sampler for #1:}}{}
\algtext*{EndSampler}

\algblockdefx[Oracle]{Oracle}{EndOracle}
  [1]{\textbf{Local Neighbor Oracle #1:}}{}
\algtext*{EndOracle}

\algdef{SE}[INDENT]{Indent}{EndIndent}{}{}
\algtext*{Indent}
\algtext*{EndIndent}

\input{tikz-macros}

\newcommand{\poly}{\textup{poly}}
\newcommand{\F}{\mathcal{F}}
\newcommand{\C}{\mathcal{C}}
\newcommand{\Pro}{\mathbb{P}}
\newcommand{\Fscr}{\mathscr{F}}
\newcommand{\N}{\mathbb{N}}
\newcommand{\eps}{\epsilon}
 
\newcommand{\Lo}{\mathcal{L}} 
\newcommand{\Lmf}{\mathcal{L}_{\textup{mf}}} 
\newcommand{\Lc}{\mathcal{L}_{\textup{cont}}} 
\newcommand{\Lchat}{\hat{\mathcal{L}}_{\textup{cont}, n}}

\newcommand{\1}{\mathbf{1}}

\newcommand{\Psrw}{P_{\textup{s.r.w.}}}

\newcommand{\mutalg}{D_t^{\textup{improved}}}
\newcommand{\mutsrw}{D_t^{\textup{baseline}}}
\newcommand{\qtalg}{q_t^{\textup{improved}}}
\newcommand{\qtsrw}{q_t^{\textup{baseline}}}

\newcommand{\Pimp}{P_{\textup{improved}}}
\newcommand{\Pbase}{P_{\textup{baseline}}}

\newcommand{\Divt}{\textup{Div}^P_\tau}
\newcommand{\Kct}{K^p_\tau}
\newcommand{\Kp}{K^p}
\newcommand{\Ver}{\textup{Ver}}
\newcommand{\rank}{\textup{rank}}
\newcommand{\tr}{\mathrm{tr}}

\newcommand{\diffsim}{\phi^{\textup{diff}}_\tau}
\newcommand{\diffdist}{\rho^{\textup{diff}}_\tau}
\newcommand{\dist}{\textup{dist}}
\newcommand{\ISineq}{\textup{IS}}

\newcommand{\Femb}{\mathcal{F}_{\textup{emb}}}
\newcommand{\Fprov}{\mathcal{F}_{\textup{prover}}}

\newcommand{\ex}{\mathbb{E}}
\newcommand{\R}{\mathbb{R}}
\newcommand{\Rchat}{\hat{\mathcal{R}}}
\newcommand{\Rc}{\mathcal{R}}

\newcommand{\Succ}{\operatorname{succ}}  
\newcommand{\supp}{\operatorname{supp}}
\newcommand{\diag}{\textup{diag}}
\newcommand{\perm}{p_{\textup{erm fails}}}
\newcommand{\pimin}{\pi_{\textup{min}}}

\newtheorem{assumption}{Assumption}
\newtheorem{theorem}{Theorem}
\newtheorem{lemma}[theorem]{Lemma}

\newtheorem{corollary}[theorem]{Corollary}
\newtheorem{definition}[theorem]{Definition}

\numberwithin{equation}{section}
\numberwithin{lemma}{section}
\numberwithin{theorem}{section}
\numberwithin{definition}{section}
\numberwithin{corollary}{section}
\numberwithin{proposition}{section}

\input{shared.tex}
\newcommand{\ermdefnappen}{\ermrestate}
\newcommand{\ermdefnmaintxt}{\ermoriginal}

\newcommand{\raddefnappen}{\radrestate}
\newcommand{\raddefnmaintxt}{\radoriginal}

\newcommand{\etethmappen}{\etethmrestate}
\newcommand{\etethmmaintext}{\etethmoriginal}

\newcommand{\localsubgappen}{\localsubgrestate}
\newcommand{\localsubgmaintxt}{\localsubgoriginal}

\newcommand{\cliquegraphmaintxt}{\cliquegraphoriginal}
\newcommand{\cliquegraphappen}{\cliquegraphrestate}

\newcommand{\cliquepropmaintxt}{\cliqueproporiginal}
\newcommand{\cliquepropappen}{\cliqueproprestate}

\title{A Theoretical Framework for Self-Play Theorem Proving Algorithms}
\author{Thomas Chen, Zhiyuan Li}

\begin{document}
\maketitle

\input{sections/abstract}
\input{sections/intro}

\input{sections/arxiv/intro-arxiv-diff}
\input{sections/arxiv/related-work-arxiv}
\input{sections/arxiv/preliminaries-arxiv}

\input{sections/framework-q1}
\input{sections/arxiv/exploration-q2-arxiv}

\input{sections/conclusion}

\newpage
\input{sections/acknowledgements}
\bibliographystyle{alpha}
\bibliography{bibliography}

\newpage
\appendix
\onecolumn
\input{sections/arxiv/appendix-arxiv}

\end{document}

%% file: tikz-macros.tex
\definecolor{knowblue}{RGB}{35,92,204}
\definecolor{bdblue}{RGB}{174,226,252}
\definecolor{outgray}{RGB}{229,229,229}
\definecolor{brdark}{RGB}{18,55,170}
\definecolor{brlight}{RGB}{75,181,255}

\tikzset{
  edge/.style={draw=black!42, line width=0.45pt},
  bridgeedge/.style={draw=black!75, line width=0.9pt},
  panel/.style={draw=black!55, rounded corners=7pt, line width=0.65pt},
  know/.style={circle, draw=blue!65!black, fill=knowblue, line width=0.8pt,
               minimum size=4.6mm, inner sep=0pt},
  bdy/.style={circle, draw=blue!60!black, fill=bdblue, line width=0.75pt,
              minimum size=4.6mm, inner sep=0pt},
  outside/.style={circle, draw=black!55, fill=outgray, line width=0.65pt,
              minimum size=4.6mm, inner sep=0pt},
  bracketdark/.style={brdark, dashed, line width=1.0pt, dash pattern=on 3pt off 2pt},
  bracketlight/.style={brlight, dashed, line width=1.0pt, dash pattern=on 3pt off 2pt},
}

\newcommand{\MakeClusterCoords}[4]{
  \begin{scope}[shift={(#2,#3)}]
    \coordinate (#1-q0) at (0,0);
    \coordinate (#1-q1) at ({#4*(-0.40)}, 0.42);
    \coordinate (#1-q2) at ({#4*( 0.40)}, 0.42);
    \coordinate (#1-q3) at ({#4*(-0.40)},-0.42);
    \coordinate (#1-q4) at ({#4*( 0.40)},-0.42);
    \coordinate (#1-r1) at (0,1.00);
    \coordinate (#1-r2) at ({#4*( 0.72)}, 0.72);
    \coordinate (#1-r3) at ({#4*( 1.05)}, 0.00);
    \coordinate (#1-r4) at ({#4*( 0.72)},-0.72);
    \coordinate (#1-r5) at (0,-1.00);
    \coordinate (#1-r6) at ({#4*(-0.72)},-0.72);
    \coordinate (#1-r7) at ({#4*(-1.05)}, 0.00);
    \coordinate (#1-r8) at ({#4*(-0.72)}, 0.72);
    \coordinate (#1-l1) at (0,1.48);
    \coordinate (#1-l2) at ({#4*( 1.15)}, 1.15);
    \coordinate (#1-l4) at ({#4*( 1.15)},-1.15);
    \coordinate (#1-l5) at (0,-1.48);
    \coordinate (#1-l6) at ({#4*(-1.15)},-1.15);
    \coordinate (#1-l7) at ({#4*(-1.48)}, 0.00);
    \coordinate (#1-l8) at ({#4*(-1.15)}, 1.15);
    \coordinate (#1-h)  at ({#4*(1.58)},0.00);
  \end{scope}
}

\newcommand{\DrawClusterEdges}[1]{%
  \foreach \u/\v in {q0/q1,q0/q2,q0/q3,q0/q4,q1/q2,q1/q3,q1/q4,q2/q3,q2/q4,q3/q4}
    \draw[edge] (#1-\u) -- (#1-\v);
  \foreach \u/\v in {r1/r2,r2/r3,r3/r4,r4/r5,r5/r6,r6/r7,r7/r8,r8/r1}
    \draw[edge] (#1-\u) -- (#1-\v);
  \foreach \u/\v in {r1/q1,r1/q2,r2/q2,r3/q2,r3/q4,r4/q4,r5/q3,r5/q4,r6/q3,r7/q1,r7/q3,r8/q1}
    \draw[edge] (#1-\u) -- (#1-\v);
  \foreach \u/\v in {l1/r1,l2/r2,l4/r4,l5/r5,l6/r6,l7/r7,l8/r8}
    \draw[edge] (#1-\u) -- (#1-\v);
  \foreach \u/\v in {h/r3,h/q4}
    \draw[edge] (#1-\u) -- (#1-\v);
}

\newcommand{\DrawNodes}[4]{
  \begingroup
  \def\outside{#4}\ifx\outside\empty\else
    \foreach \n in {#4} {\node[outside]  at (#1-\n) {};}
  \fi
  \def\boundary{#3}\ifx\boundary\empty\else
    \foreach \n in {#3} {\node[bdy]  at (#1-\n) {};}
  \fi
  \def\dark{#2}\ifx\dark\empty\else
    \foreach \n in {#2} {\node[know] at (#1-\n) {};}
  \fi
  \endgroup
}

\newcommand{\Bracket}[5]{
  \draw[#5] (#1,#3+0.12) -- (#1,#3) -- (#2,#3) -- (#2,#3+0.12);
  \node[#5, solid, font=\large] at ({(#1+#2)/2},#3-0.34) {$#4$};
}

%% file: shared.tex
\newcommand{\ermoriginal}{
\begin{restatable}[ERM, $\eps$-minimizer]{definition}{ermdefn}\label{defn:erm-cont} 

The empirical risk minimizer (ERM) is any embedding function of $h_\theta  \in \Femb$ that minimizes $\Lchat$ over $\Femb$.
\begin{align}
h_\theta \leftarrow \arg \min_{h_{\theta'} \in \Femb} \Lchat(h_{\theta'})
\end{align}
For $\eps > 0$, an $\eps$-minimizer of $\Lchat$ is any embedding function of $h_\theta  \in \Femb$ where:
\begin{align}
    \Lchat(h_\theta) \leq \min_{h_{\theta'} \in \Femb} \Lchat(h_{\theta'}) + \eps
\end{align}

We will also call an $\eps$-minimizer of $\Lchat$ an $\eps$-ERM. 
\end{restatable}
}
\newcommand{\ermrestate}{
\ermdefn*
}

\newcommand{\radoriginal}{
\begin{restatable}[Rademacher Complexity]{definition}{raddefn}\label{defn:rad-complexity} 
The Rademacher complexity of function class $\F$ of models $\F \ni h: V \to \R^k$ on $n$ datapoints is:
\begin{align}
    \Rchat_n(\F) := \max_{x_1, \ldots x_n \in V} \ex_{\sigma \in \{ \pm 1\}^n} \sup_{h \in \F, i \in [k]} \frac{1}{n} (\sum_{j \in [n]} \sigma_j h_i(x_j))
\end{align}
\end{restatable}
}
\newcommand{\radrestate}{
\raddefn*
}

\newcommand{\etethmoriginal}{
\begin{restatable}[End-to-End Contrastive Learning Result]{theorem}{etecontlearning}
\label{thm:e2e-contrastive-learning-result}
    Let $G = (V,E)$ be any connected, undirected graph. Let $(V,P)$ be an irreducible, reversible Markov chain with transition matrix $P \in [0,1]^{|V| \times |V|}$ and stationary distribution $\pi\in \Delta(V)$. 
    
    For any $\tau \in \N$, instantiate the contrastive loss $\Lc$ with base distribution $\pi$, augmentation function $P^\tau: V \to \Delta(V)$, which augments $x \in V$ into a distribution of nodes given by $e_x^\top P^\tau$. 
    
    For any $k \in [|V| - 1]$, let $\Femb \subset \F_{V}^{(k)}$ be a class of embedding models $h_\theta$ that map a domain $V$ into dimension $k$ Euclidean space: $\Femb \ni h_\theta : V \to \R^k$. Let $\Lc : \F_{V}^{(k)} \to \R$ be the population contrastive loss, instantiated with base distribution $\pi$ and augmentation function $P^\tau$, as in \Cref{eq:cont-loss-defn}, and $\Lchat : \F_{V}^{(k)} \to \R$ be the empirical contrastive loss over a i.i.d. dataset of size $n$ drawn from $\pi$, as in \Cref{eq:emp-cont-loss-defn}. Assume:

    \begin{enumerate}
        \item For some $\chi > 0$, we have that $\forall h_\theta \in \Femb, \forall x \in V$, $||h_\theta(x)||_\infty \leq \chi$. 
        \item $\Femb$ contains at least one global minimizer of $\Lc$ over $\F_V^{(k)}$ (i.e. over functions $h : V \to \R^k$)
        \item $\Rchat_n(\Femb) \leq \sqrt{\frac{\C(\Femb)}{n}}$,  where $\C(\Femb)$ is an $\Femb$-dependent constant
        \item With $\{ \lambda_i\}_{i \in [|V|]} := \{ \lambda_i(P^{2\tau})\}_{i \in [|V|]}$ being the eigenvalues of $P^{2\tau}$, where $\lambda_1 \geq \lambda_2 \geq \ldots \geq \lambda_{|V|} \geq 0$, assume  $\lambda_k^2 - \lambda_{k + 1}^2 > 0$.
        \item There exists $\beta > 0 $ such that $ \forall y \in V, \, \sum_{x \in N(y)} \pi(x) \leq \beta \, \pi(y)$. (E.g. if $P$ is the simple random walk, then $\beta = d$ the maximum degree of any node in $G$).
    \end{enumerate}
    Set any $\eps, \Delta \in (0,1)$ and set $\perm = \frac{\eps}{\poly(k, \chi)}$. Then, with probability at least $1 - \perm$ over the randomness of a training dataset of size $n = \Theta(\frac{\C(\Femb)\, \poly(k, \chi)}{\eps^2}\log \frac{1}{p_{\textup{erm fails}}})$ drawn i.i.d. from $\pi$, the following guarantee holds for any ERM $h_{\hat{\theta}} \in \Femb$ of $\Lchat$, where $\rank(F_{\hat\theta}) = \rank([\sqrt{\pi(x)} h_{\hat\theta}(x)^\top]_{x \in V}) = k$ (where $F_{\hat\theta}$ is a $|V| \times k$ matrix whose $x$-th row is $\sqrt{\pi(x)} h_{\hat\theta}(x)^\top$).

    \begin{align}
        &\Pro_{x \sim \pi}\Big [\exists y,z \in N(x) \cup \{x\} \, : \, |\diffsim(y,z) -  \langle h_{\hat\theta}(y), h_{\hat\theta}(z) \rangle| > \frac{\lambda_{k + 1}}{\pimin} + 2 \chi \sqrt{k} \cdot \sqrt{\Delta} + \Delta \Big]\\
        &\leq \frac{\eps }{\Delta} \cdot \frac{(\beta + 1)\cdot \Big(1 + 4\sqrt{\frac{2}{\lambda_k^2 - \lambda_{k + 1}^2}} \Big)^2}{2 (\sqrt{2} - 1)\lambda_k}
    \end{align}
\end{restatable}
}
\newcommand{\etethmrestate}{
\etecontlearning*
}

\newcommand{\localsubgoriginal}{
\begin{restatable}[Local Subgraph]{definition}{localsubgraph}\label{defn:T-neighborhood}
    Let $G = (V,E)$ be a connected, undirected graph, with reversible, nearest-neighbor-type Markov chain $(V,P)$. For any $x \in V$, $T \in \N$, let $G_{x, T} = (V_{x, T}, E_{x, T})$ be a subgraph of $G$ supported on $B(x, T)$, with edges $(a,b)$ from $B(x, T) \ni a$ to $V - B(x, T) \ni b$ replaced by a self-loop, $(a,a)$ in $G_{x,T}$. Let Markov chain $(V_{x, T}, P_{x, T})$, be the restriction to $(V,P)$ to $G_{x, T}$, where any transition probability of $(V,P)$ along edges from $B(x, T)$ to $V - B(x, T)$ are redirected back to their source via the aforementioned self-loops. Let $\pi_{x, T}$ be its stationary measure of $(V_{x, T}, P_{x, T})$. For $T \in N$, $\tau \in N$, $y \in B(x, T)$, denote $p^\tau_{G_{x, T}}(y)$ as the distribution of a $\tau$ step random walk on $G_{x, T}$ according to Markov chain $(V_{x, T}, P_{x, T})$. 
\end{restatable}
}
\newcommand{\localsubgrestate}{
\localsubgraph*
}

\newcommand{\cliquegraphoriginal}{
\begin{restatable}{definition}{cliquegraph}
\label{defn:clique-graph} (Clique-Graph) Let parameters $M$ be the clique size, $r$ be the meta-graph degree, and $K$ be the number of cliques. Let \(Q=([K],E(Q))\) be a finite connected, undirected, $r$-regular  graph. For $i \in [K]$, denote $N_Q(i) \subset [K]$ as the neighbors of $i$ in $Q$. Suppose there exists $c \in [r]$ where for every $i\in[K]$ and every two
distinct neighbors $j,k\in N_Q(i)$, $|N_Q(j)\cap N_Q(k)| = c$.
For clique size \(M\ge 2\), define the graph \(G=G(Q,M)\) by $V=[K]\times [M]$, where each $C_i:=\{i\}\times [M]$ is a clique of size $M$. For every meta-edge \(ij\in E(Q)\) and \(a\in[M]\), add the inter-clique edge $(i,a)\sim (j,a).$ 
\end{restatable}
}
\newcommand{\cliquegraphrestate}{
\cliquegraph*
}

\newcommand{\cliqueproporiginal}{
\begin{restatable}{proposition}{eteguaranteealg}
\label{prop:e2e-guarantee-alg2} (End-to-End Guarantee of \Cref{defn:conj-alg-improved})
Let $Q = ([K], E(Q))$ be a finite connected, undirected, \(r\)-regular graph. Let $G = G(Q, M)$ be a Clique-Graph satisfying \Cref{defn:clique-graph} with  meta-graph $Q$. Suppose  $M \geq r + 3$. Let $P_Q$ denote the simple random walk over $Q$. \Cref{defn:conj-alg-improved}, with parameters $\tau = 1, \eta = 0$,  defines a random walk over the meta-graph $Q$ with transition matrix $\Pimp^{(Q)} = (\frac{1}{r + 1} + O(\frac{1}{M})) I + (\frac{r}{r + 1} + O(\frac{1}{M})) P_Q$. The simple random walk over $G$ defines a random walk over the meta-graph $Q$ with transition matrix $\Pbase^{(Q)} = \frac{M - 1}{M - 1 + r} I + \frac{r}{M - 1 + r} P_Q$. Define the spectral gap of transition matrix $P$ as $\gamma(P) := 1 - \max(\lambda_2(P), |\lambda_K(P)|)$. Then, there exists a universal constant $c > 0$ such that

\begin{align}
\gamma( \Pimp^{(Q)}) \geq c\, \frac{M}{r^2} \,\gamma(\Pbase^{(Q)})
\end{align}

In particular, for $M \gg \max(\frac{r^2}{c}, r + 3)$, we have $\gamma( \Pimp^{(Q)}) \gg \gamma(\Pbase^{(Q)})$.

As a consequence, suppose $D_0 \in \Delta([K] \times [M])$ is uniform on a starting clique $ \{ i \} \times [M]$. For all $t \in \N$, let $\mutalg$ be the result of applying the random walk defined in \Cref{defn:conj-alg-improved} for $t$ steps on $D_0$, and  $\mutsrw$ the result of applying a simple random walk over $G$ for $t$ steps on $D_0$. Then, for any $\eps \in (0,1)$, for  $t_{\textup{baseline}} := \frac{1}{2\gamma(\Pbase^{(Q)})} \log \frac{K}{\eps}$, $\textup{Div}_1^{\Psrw}(D_{t_{\textup{baseline}}}^{\textup{baseline}}) \geq 1 - \eps$, with maximum value $1$.  Whereas, for $t_{\textup{improved}} := \frac{1}{2\gamma(\Pimp^{(Q)})} \log \frac{K}{\eps} \leq O(\frac{r^2}{M})\, t_{\textup{baseline}}$, $\textup{Div}_1^{\Psrw}(D_{t_{\textup{improved}}}^{\textup{improved}}) \geq 1 - \eps$, with maximum value $1$.

\end{restatable}
}
\newcommand{\cliqueproprestate}{
\eteguaranteealg*
}

%% file: sections/abstract.tex
\begin{abstract}
Self-play, a type of training algorithm that enables a model to self-improve, has recently shown promising empirical results in the context of formal theorem proving using Large Language Models (LLMs). \cite{dong2025stpselfplayllmtheorem} instantiate self-play with two cooperating agents: a prover, which proves theorems, and a conjecturer, which generates new theorems as a curriculum to the prover. In this paper, we provide a theoretical framework for understanding the self-improvement capabilities of self-play algorithms for theorem proving. First, we formalize the set of theorems as a graph, with nodes as theorems and edges between pairs of theorems with similar semantics. We introduce a set of primitive assumptions that characterize the guarantees of a trained prover and how a conjecturer can access the structure of the graph. Second, we show that if the underlying graph of theorems is well-connected, then a prover-conjecturer system, where the conjecturing algorithm is based on a reversible random walk, is sufficient to grow the set of proved theorems exponentially. Third, motivated by an issue encountered empirically by self-play algorithms, where the conjecturer tends to generate artificially complex and non-fundamental theorems, we propose a diversity measure for a training distribution of theorems generated by a conjecturer and an improved conjecturing algorithm that locally maximizes this diversity measure, by computing the diffusion similarity between neighboring theorems in the theorem graph. Finally, we describe a method to compute the diffusion similarity by using contrastive learning to embed nodes into Euclidean space and then computing the inner-product between embeddings.
\end{abstract}


%% file: sections/intro.tex
\section{Introduction}\label{sec:intro}

Formal theorem proving and mathematical reasoning with Large Language Models (LLMs) have made rapid progress, including systems that reached IMO gold-medal-level performance and systems reported to assist with open mathematical research problems \cite{imogold, bubeck2025earlyscienceaccelerationexperiments}. Among the technical ideas used to train LLM-based theorem provers, self-play via two cooperating models, a prover model and a conjecturer model, is a new and promising training technique. 

Self-play can be viewed as an automated synthetic data generation mechanism, where a system of two models generate and solve gradually-more-difficult instances of a task, improving both models with no manually annotated data required. Self-play has been used to train models to play Go \cite{silver2017masteringchessshogiselfplay}. In the context of theorem proving, self-play algorithms have been studied empirically. STP \cite{dong2025stpselfplayllmtheorem} trains two models: a prover, which maps theorem statements to their proofs, and a conjecturer, which maps theorem statements to slightly more difficult theorem statements adaptively based on the prover. The prover and conjecturer models cooperate to gradually enable the prover to prove harder theorems.



To our knowledge, this is the first theoretical analysis of self-play training algorithms for formal theorem proving. In this work, we introduce a theoretical framework, based on a graph of theorems, where the primary objective is to design a training algorithm that yields a prover with a large knowledge set---the set of theorems it can prove (\Cref{sec:framework-q1}). The framework introduces a set of primitive assumptions, characterizing how a prover can generalize in-domain when trained on a distribution of theorems and how a conjecturer can access the structure of the theorem graph. The space of training algorithms includes prover-conjecturer systems analogous to that in STP \cite{dong2025stpselfplayllmtheorem}, where the conjecturer generates a sequence of distributions over theorems, which the prover is trained on to gradually increase its knowledge. Under a connectivity condition on the underlying theorem graph, we show that the prover-conjecturer system in \Cref{alg:main}, using a simple baseline conjecturing algorithm based on a reversible random walk, guarantees that the size of the prover's knowledge set at iteration $t \in \N$ grows exponentially in $t$ (\Cref{thm:main}).




However, the size of the prover's knowledge set alone is insufficient as an objective since the knowledge set can contain many related, non-fundamental theorems (e.g. increasingly complex arithmetic problems). Empirically, STP \cite{dong2025stpselfplayllmtheorem} use heuristic elegancy filters to filter out inelegant, artificially-complicated generated-conjectures. Minimo \cite{poesia2024learningformalmathematicsintrinsic} also encounters an issue where the conjecturer generates longer and more complicated conjectures rather than deeper conjectures. The question we focus on is how to encourage the conjecturer to generate theorems whose fundamental proof ideas are more \textit{diverse}, which we believe is related to the issues encountered empirically.


Towards quantifying \textit{diversity} of a prover's knowledge set and proposing algorithmic improvements to the conjecturer, the secondary objective of a training algorithm within our framework is: given a hyperparameter $\tau \in \N$,  maximize the size of the $\tau$-step expansion of the prover's knowledge set in the theorem graph, a quantity we identify with the diversity of the prover's knowledge. This prover-centric diversity is of practical use itself because it is heuristically the number of theorems with a $\tau$-step reduction to a theorem the prover knows how to prove, which the prover can prove at inference time given $\tau$ steps of ``reasoning." This prover-centric diversity can also be thought of as quantifying the size of the boundary of the prover's knowledge set. Towards maximizing this prover-centric diversity, we propose a diversity measure for the training distribution of theorems generated by a conjecturer (which the prover is trained on) and show it lower bounds the prover-centric diversity (\Cref{lem:divt-kct-connection}). We propose an improved conjecturing algorithm (\Cref{defn:conj-alg-improved}) that generates training distributions that locally maximize this diversity measure, which requires computing a quantity called the diffusion similarity. Finally, we propose a new way to compute the diffusion similarity via contrastive learning (\Cref{thm:contembd}).

\subsection{Technical Overview}

\paragraph{Theorem Graph.} Let $V \subset \{ 0,1\}^*$ be the set of all theorems, encoded as strings according to a formal language like Lean \cite{lean}. Let $\Fprov$ be the hypothesis class of prover models (e.g. the class of pre-trained LLMs), where every prover $ \Fprov \ni f : V \to \Delta(\{ 0,1\}^*)$ maps a theorem to a distribution over proofs. A verifier $\Ver : V \times \{ 0,1\}^* \to \{ 0,1\}$ verifies if a proof of a theorem is correct. Using $\Ver$, for any prover $f \in \Fprov$ and theorem $x \in V$ define the success rate $\Succ(x,f) \in [0,1]$ as  the probability that $f$ on input $x$ generates a correct proof of $x$. For any $\varepsilon \in (0,1)$, define $E(\varepsilon, \Fprov) \subset V \times V$ as the set of pairs of theorems that are semantically similar, with respect to $\Fprov$:

\begin{align}
    E(\varepsilon, \Fprov) := \{ (x,y) \in V \times V : \forall f \in \Fprov, |\Succ(x,f) - \Succ(y,f)| \leq \varepsilon\}\label{eq:edge-ood}
\end{align}

Fix $\varepsilon$ and let $E \subset E(\varepsilon,\Fprov)$. \Cref{assumption:a3-ood} and \Cref{assumption:a4-nbr} of our framework require that for any theorem $x$,  the set $N(x) := \{ y \in V : (x,y) \in E\}$ is efficiently computable. The neighbor oracle $N : V \to 2^V$, mapping $x \rightarrow N(x)$ is a key algorithmic primitive for the conjecturing algorithms we propose later. The theorem graph is the undirected graph $G = (V,E)$ with nodes $V$ and edges $E$. 

\paragraph{Further Assumptions.} We define a conjecturer as a mapping from a distribution $D \in \Delta(V)$ of theorems (represented as a vector in $[0,1]^{|V|}$)  to a new distribution $D' \in \Delta(V)$ of theorems. One type of conjecturer is a single step of a random walk with transition matrix $P \in [0,1]^{|V| \times |V|}$, where $(D')^\top = D^\top P \in [0,1]^{1 \times |V|}$.  Our first result assumes that $|V| = \infty$ and that the random-walk-based conjecturer $P$ satisfies a connectivity condition, \Cref{assumption:graph}. When $P = \Psrw$ is a simple random walk, where a node transitions uniformly to its neighbors, then \Cref{assumption:graph} amounts to a property of $G$: that there exists $\kappa > 0$ where for any finite subset of nodes $A \subset V$ then $ \kappa \cdot |\partial A| \geq \deg(A)$ where $|\partial A|$ are the number of edges with one endpoint in $A$ and the other in $V - A$, and $\deg(A)$ is the sum of the degrees of nodes in $A$. Finally, our framework includes \Cref{assumption:a1-js} and \Cref{assumption:a2-rt}, which are statistical-learning-like assumptions which describe how a prover trained on a finite sample of i.i.d. (theorem, proof) pairs from some training distribution $D$ can generalize in-domain to $D$. 

\paragraph{A Baseline Conjecturing Algorithm.}

First we show if the theorem graph $G$ satisfies a connectivity assumption, \Cref{assumption:graph}, then a conjecturer, based on a nearest-neighbor-type reversible random walk, enables the prover to prove an exponential in $t$ number of theorems at every iteration $t \in \N$. 


\begin{theorem} (\Cref{thm:main}, Informal)
Let $G = (V,E)$ be the theorem graph, and $(V, P)$ a nearest-neighbor-type random walk, constituting a conjecturing algorithm, that satisfies the connectivity condition of Assumption \ref{assumption:graph} with Isoperimetric constant $\kappa > 0$. Suppose the framework Assumptions \ref{assumption:a1-js}-\ref{assumption:a4-nbr} hold, with universal constant $p \in (0,1)$ defined in \Cref{assumption:a2-rt}. For any training-error threshold $\eps < \frac{1}{2\kappa^2}$, the prover-conjecturer system (\Cref{alg:main}) satisfies that for all $t \in \N$, the number of theorems that the prover can prove with success rate at least $p$  at iteration $t$ is exponential in $t$. 
\end{theorem}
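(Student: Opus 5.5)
The plan is to reduce growth of the knowledge set to an expansion argument for the random-walk conjecturer. Let $K_t \subset V$ be the set of theorems the prover proves with success rate at least $p$ after iteration $t$, and $D_t$ the training distribution the conjecturer produces. I would maintain an invariant relating $D_t$ to $K_t$: the conjecturer applies one step of $P$ to a distribution supported on (or close to uniform, in the $\pi$-weighted sense, over) the current knowledge set, so $D_{t+1}^\top = D_t^\top P$ restricted and renormalized. The argument then has two parts. The first is the learning part: \Cref{assumption:a1-js} and \Cref{assumption:a2-rt}, applied with training error $\eps$, turn ``trained on $D_{t+1}$'' into ``succeeds with rate at least $p$ on all but an $\eps$-fraction of $D_{t+1}$-mass.'' The second is the graph part: \Cref{assumption:graph} with constant $\kappa$ forces a single step of $P$ from $K_t$ to put a constant fraction of its mass onto the outer boundary $\partial K_t$. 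The neighbor oracle (\Cref{assumption:a3-ood}, \Cref{assumption:a4-nbr}) is used only to implement $P$ and to guarantee, via the definition of $E(\varepsilon,\Fprov)$, that neighbors of solved theorems have success rate close to $p$.

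The key quantitative step is as follows. Take $D_t \propto \pi|_{K_t}$. Then $\sum_{x\in K_t}\pi(x)P(x,V\setminus K_t) = Q(K_t, K_t^c) \ge \pi(K_t)/\kappa$ by the isoperimetric condition. So the mass of $D_t^\top P$ outside $K_t$ is at least $1/\kappa$. By reversibility, each boundary node $y$ receives mass comparable to $\pi(y)$, since $\pi(x)P(x,y)=\pi(y)P(y,x)\le \pi(y)$. Hence the boundary carries mass at most about $\pi(\partial K_t)/\pi(K_t)$, and there are at least about $\pi(K_t)/\kappa$ worth of boundary theorems in $\pi$-measure.

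After training, at most an $\eps$ fraction of $D_{t+1}$'s mass is failed. The new knowledge set therefore satisfies $\pi(K_{t+1}) \ge \pi(K_t)(1+1/\kappa - O(\eps\kappa))$. The condition $\eps<1/(2\kappa^2)$ is exactly what keeps the factor at least $1+1/(2\kappa)$: the $\eps$ error, relative to the boundary mass $1/\kappa$, must be at most half. Iterating from $K_0\neq\emptyset$ gives $\pi(K_t)\ge (1+\frac{1}{2\kappa})^t\pi(K_0)$. For the simple random walk $\pi$ is degree-proportional, so with bounded degree this translates into an exponential lower bound on $|K_t|$. I would also check that previously known theorems are not forgotten. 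Either the algorithm keeps mass on $K_t$, since $D_t^\top P$ retains the $P(x,x)$ and interior mass, or the \Cref{assumption:a2-rt} retention clause handles it.

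The main obstacle is the learning step: converting an average guarantee (error $\le\eps$ under $D_{t+1}$) into a count of boundary nodes that cross the threshold $p$. A failure set of mass $\eps$ could concentrate on the boundary, which is only a $1/\kappa$ slice of the mass. I would handle this with a Markov-type argument: the failed boundary mass is at most $\eps$, versus boundary mass at least $1/\kappa$. This leaves at least $1/\kappa-\eps \ge 1/(2\kappa)$ of successful new mass, where $\eps<1/(2\kappa^2)\le 1/(2\kappa)$ because $\kappa\ge1$. I would then convert this mass into $\pi$-measure with the reversibility bound above, and that conversion is where the extra factor of $\kappa$ in the threshold $1/(2\kappa^2)$ should appear.
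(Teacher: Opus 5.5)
\textbf{The gap.} Your expansion step rests on the invariant $D_t \propto \pi|_{K_t}$, but \Cref{alg:main} does not maintain it. In the algorithm, $D_{t+1}$ is the push-forward $P^\top D_t$ restricted to $\{x : \Succ(x,f_t)\ge p/2\}$ and renormalized. After a single step it is therefore no longer proportional to $\pi$ on its support. Moreover, $K_t=\{x:\Succ(x,f_t)\ge p\}$ has no fixed relation to $\supp(D_t)$: $f_t$ may fail on an $\eps$-fraction of $D_t$ and may succeed far outside $\supp(D_t)$.

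For a general $D_t$, the isoperimetric inequality concerns $\pi$-weighted sets and gives no lower bound on the mass that exits $K_t$ in one step. If $D_t$ sits on nodes of $K_t$ all of whose neighbors lie in $K_t$, then $P^\top D_t$ puts no mass outside $K_t$, whatever $\kappa$ is.

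Independently, tracking $\pi(K_t)$ needs $K_{t+1}\supseteq K_t$, and nothing supplies this. \Cref{assumption:a2-rt} has no retention clause, and $f_{t+1}$ is only controlled on a $(1-\eps)$-fraction of $D_{t+1}$-mass.

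Finally, even granting the invariant and retention, your computation yields a factor $1+1/\kappa-\eps$. There is no loss in the mass-to-measure conversion, since under the invariant $(P^\top D_t)(y)\le \pi(y)/\pi(K_t)$ for every $y$. So the threshold $\frac{1}{2\kappa^2}$ cannot be explained by that step.

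\textbf{The missing idea.} Track the potential $\|D_t\|^2_{\ell^2(1/\pi)}$ rather than the knowledge set.

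(i) It contracts by $(1-\frac{1}{2\kappa^2})^2$ under one step of $P^\top$ for \emph{every} $D_t$. Reversibility gives $\|P^\top\|_{\ell^2(1/\pi)\to\ell^2(1/\pi)}=\|P\|_{\ell^2(\pi)\to\ell^2(\pi)}$. \ISineq{} gives the Sobolev bound $\|f\|_1\le\kappa S_P(f)$; applied to $f^2$ with Cauchy--Schwarz, this gives $\|f\|_2^2\le 2\kappa^2 D_P(f)$, hence $\|P\|_{\ell^2(\pi)\to\ell^2(\pi)}\le 1-\frac{1}{2\kappa^2}$. This Cheeger-type squaring is where $\kappa^2$ comes from.

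(ii) The filter inflates the potential by at most $(1-\eps)^{-2}$. By \Cref{assumption:a3-ood}, the acceptance probability is $\alpha_t\ge\Pro_{x\sim D_t}[\Succ(x,f_t)\ge p]\ge 1-\eps$, and rejection only rescales surviving entries by $1/\alpha_t$.

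(iii) Only at the end do you convert to a count, via Cauchy--Schwarz (\Cref{lem:ent-to-cov-num}): $|\Kp(f_t)|\ge D_t(\Kp(f_t))^2/\bigl(\sup_z\pi(z)\,\|D_t\|^2_{\ell^2(1/\pi)}\bigr)\ge (1-\eps)^2/\bigl(\sup_z\pi(z)\,\|D_t\|^2_{\ell^2(1/\pi)}\bigr)$.

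This uses no retention and no structure of $D_t$. It gives the per-step factor $\bigl(\frac{1-\eps}{1-1/(2\kappa^2)}\bigr)^2$, which exceeds $1$ exactly when $\eps<\frac{1}{2\kappa^2}$.
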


\input{figures/KnowledgeSet}

\paragraph{An Improved Conjecturing Algorithm for the Diversity Issue.} 

Suppose $f \in \Fprov$ is the final prover trained by a self-play algorithm like \Cref{alg:main}. Whereas \Cref{thm:main},  under \Cref{assumption:graph}, provides guarantees of $|\Kp(f)| := |\{ x \in V : \Succ(x,f) \geq p\}| $, the size of the prover's knowledge, \Cref{thm:main} does not guarantee the diversity of the prover's knowledge. For hyperparameter $\tau \in \N$, we study how to optimize a proxy for diversity of the prover's knowledge, $|\Kct(f)| := | \{ x' \in V : \exists x \in V : \Succ(x,f) \geq p \textup{ and } \dist(x,x') \leq \tau\}|$. \Cref{fig:diversity} depicts two provers, $f_1$ and $f_2$, that both can prove the same number of theorems ($|\Kp(f_1)| = |\Kp(f_2)|$), but where $f_1$'s knowledge set is much less diverse, in the sense that $|\Kct(f_1)| \ll |\Kct(f_2)|$. 


Suppose prover $f$ is trained on theorem distribution $D \in \Delta(V)$ in the sense that $\Pro_{x \sim D}[\Succ(x,f) \geq p] \geq 1 - \eps$ (as in \Cref{assumption:a2-rt}). \Cref{lem:divt-kct-connection} shows that in order to maximize $|\Kct(f)|$, it is reasonable to maximize the following property of $f$'s training distribution: $\Divt(D) := \frac{1}{||D^\top P^\tau||^2_{\ell^2(1/\pi)}} = \frac{1}{\sum_{x \in V }(D^\top P^\tau e_x)^2 \frac{1}{\pi(x)} }$, where $P$ is the transition matrix of a reversible, nearest-neighbor Markov chain. With this observation, the goal is to design a conjecturing algorithm that generates training distributions $\{ D_t\}_{t \in \N}$ where $\Divt(D_t)$ is large for each $t \in \N$. We propose \Cref{defn:conj-alg-improved}, an improved conjecturer that defines a push-forward map $C : V \to \Delta(V)$ where $\forall x \in V$, $C(x) \in \Delta(N(x))$. The weighting of $C(x)$ is determined by maximizing $\Divt(C(x))$.

\begin{align}
    \forall x \in V, \, C(x) &\leftarrow \arg\max_{\mu \in \Delta(N(x))} \Divt(\mu) =  \arg\min_{\mu \in \Delta(N(x))} \mu^\top P^\tau \diag(1/\pi) (P^\tau)^\top \mu 
\end{align}

\Cref{defn:conj-alg-improved} needs to compute the diffusion similarity $\diffsim(y, y') := e_y^\top P^\tau \diag(1/\pi) (P^\tau)^\top e_{y'}$ for every pair of neighbors $y,y' \in N(x)$ of $x$. \footnote{To further motivate diffusion similarity, note it is closely related to diffusion distance, used for graph partitioning \cite{diffmapscoarsegrain}.} We show the diffusion similarity can be computed by embedding each node in $G$ into Euclidean space via an embedding model learned with contrastive learning, and then computing the inner product between embeddings. This can be used as a subroutine in a  conjecturing algorithm like \Cref{defn:conj-alg-improved}.

\begin{theorem} (\Cref{thm:contembd}, Informal)
    Let $G = (V,E)$ be an undirected, connected theorem graph. Let $\F_V^{(|V|)}$ be the class of all mappings from $V$ to $\R^{|V|}$, and $\Femb \subset \F_V^{(|V|)}$ a class of embedding models. For any $\tau \in \N$, $\Lc^\tau: \F_V^{(|V|)} \to \R$ is a contrastive loss with time-scale parameter $\tau$ (\Cref{eq:pop-cont-loss}). Assume $\Femb$ contains a global minimizer of $\Lc^\tau(\cdot)$. For any such global minimizer $h_{\theta_*} \in \Femb$, then $\forall u,v \in V,\; \diffsim(u,v) = h_{\theta_*}(u)^\top h_{\theta_*}(v)$.
\end{theorem}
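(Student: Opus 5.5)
The plan is to rewrite the population contrastive loss $\Lc^\tau$ as a matrix factorization objective in the style of the spectral contrastive loss. Then read off the Gram matrix of any global minimizer, and compare it entrywise with $\diffsim$.

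\textbf{Step 1: rewrite the loss as a Frobenius distance.} I take $\Lc^\tau$ (\Cref{eq:pop-cont-loss}) to have the spectral form
\[
\Lc^\tau(h) = -2\,\ex_{x\sim\pi,\; y,z\sim e_x^\top P^\tau} h(y)^\top h(z) + \ex_{y,z\sim\pi}\big(h(y)^\top h(z)\big)^2 .
\]
The positive-pair distribution is
\[
w(y,z) := \sum_x \pi(x) P^\tau(x,y)P^\tau(x,z), \qquad\text{i.e.}\quad W=(P^\tau)^\top \diag(\pi) P^\tau .
\]
Define the $|V|\times|V|$ matrix $F$ with rows $\sqrt{\pi(y)}\,h(y)^\top$, and set $A := \diag(\pi)^{-1/2} W \diag(\pi)^{-1/2}$. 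Expanding the square then gives
\[
\Lc^\tau(h) = \|A - FF^\top\|_F^2 - \|A\|_F^2 .
\]

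\textbf{Step 2: identify the minimizers.} $A$ is a congruence of the Gram matrix $W = (\diag(\pi)^{1/2}P^\tau)^\top(\diag(\pi)^{1/2}P^\tau)$, so $A$ is PSD with rank at most $|V|$. Hence some $F$ with $|V|$ columns attains $FF^\top = A$, for instance $F = A^{1/2}$. The infimum of $\Lc^\tau$ over $\F_V^{(|V|)}$ is therefore $-\|A\|_F^2$. It is attained exactly when $FF^\top = A$, since the Frobenius term is nonnegative and vanishes only at equality. By assumption $\Femb$ contains a global minimizer over $\F_V^{(|V|)}$, so any global minimizer $h_{\theta_*}$ in $\Femb$ satisfies $FF^\top = A$. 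Irreducibility gives $\pi>0$, so dividing entrywise yields
\[
h_{\theta_*}(u)^\top h_{\theta_*}(v) = \frac{W(u,v)}{\pi(u)\pi(v)} .
\]

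\textbf{Step 3: match with the diffusion similarity.} Reversibility says $\diag(\pi)P^\tau = (P^\tau)^\top\diag(\pi)$, which gives $W = \diag(\pi)P^{2\tau}$, so $W(u,v)/(\pi(u)\pi(v)) = P^{2\tau}(u,v)/\pi(v)$. On the other side,
\[
\diffsim(u,v) = \sum_x \frac{P^\tau(u,x)P^\tau(v,x)}{\pi(x)} .
\]
Applying detailed balance to the second factor, $P^\tau(v,x)/\pi(x) = P^\tau(x,v)/\pi(v)$, this becomes $P^{2\tau}(u,v)/\pi(v)$. The two expressions agree, which proves the claim.

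\textbf{Main obstacle.} The step I expect to need care is Step 1. I must match the exact normalization of \Cref{eq:pop-cont-loss}, in particular whether the negative term samples from $\pi$ and whether any constant factors appear, so that the loss is exactly $\|A-FF^\top\|_F^2$ up to an additive constant. If the paper's constants differ, the minimizer's Gram matrix differs from $A$ by a scalar, and I will track that factor through Steps 2 and 3. Once the Frobenius form is established, the remaining steps are direct consequences of positive semidefiniteness and reversibility.
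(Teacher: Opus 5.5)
Your proof is correct, and it takes a more direct route than the paper. The paper applies \Cref{lem:cont-learn-min-general} with $k=|V|$. Using Eckart--Young--Mirsky and the eigendecomposition of $\overline{A}=\diag(\pi)^{1/2}P^{2\tau}\diag(\pi)^{-1/2}$ (your $A$), that lemma shows every global minimizer has the form $h_{\theta_*}(u)=R\,\Phi_\tau(u)$, where $R$ is orthogonal and $\Phi_\tau$ is the spectral diffusion embedding. The paper then invokes \Cref{lem:diff-dist}, which writes $p^\tau(x)=\diag(\pi)^{1/2}U\Phi_\tau(x)$, to get $\langle\Phi_\tau(u),\Phi_\tau(v)\rangle=\diffsim(u,v)$.

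You instead observe that with $|V|$ columns the factorization objective can be driven to zero. A minimizer is therefore determined only through its Gram matrix, $FF^\top=A$. You then check entrywise via detailed balance that both $h_{\theta_*}(u)^\top h_{\theta_*}(v)$ and $\diffsim(u,v)$ equal $P^{2\tau}(u,v)/\pi(v)$. This avoids eigenvectors, the sign matrix, and orthogonal-alignment arguments entirely, and it gives a clean closed form for $\diffsim$.

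What the paper's route buys is the explicit structure of the minimizer as a rotated spectral diffusion embedding, and that structure is reused in \Cref{thm:e2e-contrastive-learning-result}. For $k<|V|$ the factorization is no longer exact, so one needs the Eckart--Young characterization together with the truncation estimates of \Cref{lem:approx-equiv}. Your argument does not extend to that regime as it stands.

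The normalization issue you flag is not a concern. Negative pairs are drawn from $\pi\otimes\pi$ and the rows of $F$ carry weight $\sqrt{\pi}$, so your identity $\Lc^\tau(h)=\|A-FF^\top\|_F^2-\|A\|_F^2$ holds exactly, with no rescaling. This is the paper's \Cref{eq:cont-mf-equiv} with $\sigma_x=\pi(x)$.
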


Under additional assumptions, we prove \Cref{thm:e2e-contrastive-learning-result}, which is an extension to the result of \Cref{thm:contembd} that applies for classes of embedding models $\Femb$ where models $\Femb \ni h_\theta : V \to \R^k$ have small embedding dimension $k \ll |V|$. \Cref{thm:e2e-contrastive-learning-result} provides approximation-guarantees for an embedding model that is learned via empirical risk minimization (ERM).

%% file: figures/KnowledgeSet.tex
\begin{figure}[t]
\centering
\resizebox{0.8\linewidth}{!}{
\begin{tikzpicture}[font=\rmfamily]
  \MakeClusterCoords{Lone}{-5.35}{1.05}{1}
  \MakeClusterCoords{Ltwo}{-1.35}{1.05}{-1}
  \MakeClusterCoords{Rone}{ 2.00}{1.05}{1}
  \MakeClusterCoords{Rtwo}{ 6.05}{1.05}{-1}

  \node[font=\bfseries\Large] at (0,5.05)
    {Same-size knowledge sets, different $\tau$\!-step boundary};
  \node[font=\large] at (0,4.45) {$|S_1|=|S_2|$};

  \draw[panel] (-7.15,-2.35) rectangle (0.35,4.05);
  \draw[panel] (0.40,-2.35) rectangle (7.75,4.05);
  \node[font=\bfseries\large] at (-3.50,3.70) {Prover $f_1$};
  \node[font=\bfseries\large] at ( 4.45,3.70) {Prover $f_2$};

  \node[font=\large] at (-5.35,3.02) {$C_1$};
  \node[font=\large] at (-1.35,3.02) {$C_2$};
  \node[font=\large] at ( 2.00,3.02) {$C_1$};
  \node[font=\large] at ( 6.05,3.02) {$C_2$};

  \DrawClusterEdges{Lone}
  \DrawClusterEdges{Ltwo}
  \DrawClusterEdges{Rone}
  \DrawClusterEdges{Rtwo}
  \draw[bridgeedge] (Lone-h) -- (Ltwo-h);
  \draw[bridgeedge] (Rone-h) -- (Rtwo-h);

  %
  \DrawNodes{Lone}%
    {q0,q1,q2,q3,q4,r3}
    {r1,r2,r4,r5,r6,r7,r8,h}
    {l1,l2,l4,l5,l6,l7,l8}

  \DrawNodes{Ltwo}%
    {}
    {}
    {q0,q1,q2,q3,q4,r1,r2,r3,r4,r5,r6,r7,r8,l1,l2,l4,l5,l6,l7,l8,h}

  \DrawNodes{Rone}%
    {q1,r3,r5}
    {q0,q2,q3,q4,r1,r2,r4,r6,r7,r8,h}
    {l1,l2,l4,l5,l6,l7,l8}

  \DrawNodes{Rtwo}%
    {q1,r3,r5}
    {q0,q2,q3,q4,r1,r2,r4,r6,r7,r8,h}
    {l1,l2,l4,l5,l6,l7,l8}

  \node[font=\small, above=3pt, xshift=-2pt] at (Lone-h) {$x$};
  \node[font=\small, above=3pt, xshift= 2pt] at (Ltwo-h) {$y$};
  \node[font=\small, above=3pt, xshift=-2pt] at (Rone-h) {$x$};
  \node[font=\small, above=3pt, xshift= 2pt] at (Rtwo-h) {$y$};

  \Bracket{-5.95}{-4.08}{-1.18}{S_1}{bracketdark}
  \Bracket{-6.55}{-3.52}{-1.68}{B(S_1,\tau)}{bracketlight}

  \Bracket{ 1.60}{ 6.45}{-1.18}{S_2}{bracketdark}
  \Bracket{ 0.85}{ 7.35}{-1.68}{B(S_2,\tau)}{bracketlight}

  \draw[panel] (-7.10,-4.65) rectangle (-1.85,-3.05);
  \node[know] at (-6.78,-3.35) {};
  \node[anchor=west, font=\small] at (-6.43,-3.35) {Knowledge set $S_i$};
  \node[bdy] at (-6.78,-3.85) {};
  \node[anchor=west, font=\small] at (-6.43,-3.85) {$\tau$\!-step boundary $B(S_i,\tau)\setminus S_i$};
  \node[outside] at (-6.78,-4.35) {};
  \node[anchor=west, font=\small] at (-6.43,-4.35) {Outside $V\setminus B(S_i,\tau)$};

  \node[font=\Large] at (2.55,-3.85) {$|B(S_1,\tau)|\ll |B(S_2,\tau)|$};
\end{tikzpicture}
}
\caption{Let $\tau = 1$. (Left) Prover $f_1$'s $\tau$-expanded knowledge set is only supported on the left cluster. (Right) Prover $f_2$'s $\tau$-expanded knowledge set is well-supported on both clusters. Their knowledge sets $S_1,S_2$ are equal in size (6), but $S_2$ is more diverse since its $\tau$-step expansion is larger.}
\label{fig:diversity}
\end{figure}

%% file: sections/arxiv/intro-arxiv-diff.tex
In short, our contributions are:

\begin{enumerate}
    \item We provide a theoretical framework for understanding and designing self-play theorem proving algorithms. The framework formalizes the set of theorems as a graph, where nodes are theorems. It introduces a set of primitive assumptions (\Cref{assumption:a1-js} -- \ref{assumption:a4-nbr}) that describe the properties of the graph and the computational primitives available to a self-play algorithm. 
    \item We show in \Cref{thm:main} that if the theorem graph and conjecturer satisfy a well-connectedness property (\Cref{assumption:graph}), the number of theorems the prover-conjecturer system (\Cref{alg:main}) proves grows exponentially with the number of self-play iterations.
    \item Motivated by an issue encountered empirically by self-play algorithms where the conjecturer generates artificially complex and non-fundamental theorems, we propose a diversity measure for a training distribution of theorems generated by a conjecturer (\Cref{defn:diversity-measure}) and an improved conjecturer (\Cref{defn:conj-alg-improved}) that locally maximizes this diversity measure, by computing the diffusion similarity (\Cref{defn:diffusionembd}) between neighboring theorems in the theorem graph. 
    \item In \Cref{thm:contembd} and \Cref{thm:e2e-contrastive-learning-result}, we describe a method to compute the diffusion similarity of two nodes of the theorem graph by using contrastive learning to embed nodes into Euclidean space and then computing the inner-product between the embeddings.
\end{enumerate}

%% file: sections/arxiv/related-work-arxiv.tex
\section{Related Work}

\paragraph{LLM-Based Formal Theorem Proving Systems.} Formal theorem proving with LLMs in Lean \cite{lean} is commonly divided into two paradigms: step-level interaction and whole-proof generation. Step-level methods interact with Lean incrementally, proposing one tactic at a time \cite{Achim2025Aristotle,Hubert2025OlympiadFormalMath,Wu2024InternLM25StepProver,Xin2025BFSProver,Xin2025StepProverScaling}. Whole-proof methods instead generate a complete Lean proof in a single interaction \cite{Chen2025SeedProver,Ji2025LeanabellProverV2,Lin2025GoedelProver,Lin2025GoedelProverV2,Xin2025DeepSeekProverV15,Ren2025DeepSeekProverV2,Shang2025StepFunProverPreview,Wang2025KiminaProverPreview,Zhou2025DeltaProver}.

\paragraph{Self-play for Formal Theorem Proving.} Regarding self-play theorem proving systems, STP instantiates a prover-conjecturer self-play loop from pre-trained LLMs, where the conjecturer is trained to generate synthetic theorems that are barely provable by the current prover \cite{dong2025stpselfplayllmtheorem}. Minimo instantiates a prover-conjecturer self-play loop to learn a prover from scratch, where the conjecturer generates conjectures via constrained decoding and the prover guides an MCTS proof search \cite{poesia2024learningformalmathematicsintrinsic}. In contrast to these empirical works, our work develops a theoretical framework for analyzing when prover-conjecturer self-play can expand the set of provable theorems.

\paragraph{Theoretical Analyses of Self-play Algorithms.} Existing theoretical analyses of self-play largely focus on adversarial settings, such as zero-sum Markov games. The theorem-proving setting we study is different: the prover and conjecturer have asymmetric but cooperative roles. \cite{bai2020nearoptimalreinforcementlearningselfplay} provide a theoretical analysis of self-play algorithms in zero-sum Markov Games, showing convergence to a Nash Equilibrium \cite{bai2020nearoptimalreinforcementlearningselfplay}.  The Wasserstein GAN training algorithm \cite{arjovsky2017wassersteingan} is a theoretically-grounded asymmetric self-play algorithm, but still where the agents are adversarial to each other. Our work also differs from works that study how to  learn a model to verify theorems in a purely statistical setting \cite{balcan2026learningverifiersimplicationschainofthought}. Our setting is non-statistical since the goal of the training algorithm is to allow the final model to generalize beyond its starting distribution, not just to generalize well on its starting distribution given i.i.d. samples. However, we will assume some statistical-learning primitives (\Cref{assumption:a2-rt}) in our framework.

\paragraph{Theorem Graphs.} Prior works formalize the graph or hyper-graph structure of mathematical theorems \cite{geuvers, freedman}. In \cite{geuvers}, each node represents a statement; the statement corresponding to a node can be deduced in one step via a deduction rule from the statements corresponding to its descendants. \cite{freedman} formalizes nodes as statements, and each $(p,q)$ hyper-edge corresponds to applying a deduction rule on $p$ input statements to yield $q$ output conclusions. Axioms of the logical system are root nodes. In relation to these works, our notion of a theorem graph still represents  mathematical statements as nodes. However, the definition of an edge is not defined in terms of deduction rules in a particular logical system. Rather, edges are determined by the class of prover models $\Fprov$, where an edge between two theorems $(x,y)$ indicates that for every prover $f \in \Fprov$, the success rate of $f$ on theorem $x$ is not too different from the success rate of $f$ on theorem $y$.

\paragraph{Graph Exploration Algorithms, Graph Diffusion, Contrastive Learning.} Given a graph, the question of how to design an algorithm to explore the graph efficiently has been studied in \cite{kim2025metastabledynamicschainofthoughtreasoning}. They formalize chain-of-thought reasoning as a random walk over a graph of ideas, where the goal of the reasoner is to hit a particular node of the graph, starting at a designated start node. The underlying graphs they study are metastable graphs \cite{bovier}. The question we study in \Cref{sec:diversity-q2} is conceptually related to \cite{kim2025metastabledynamicschainofthoughtreasoning}, in that we want to find a graph algorithm that can efficiently explore a graph with poor connectivity, such as metastable graphs. However, our proposed graph-exploration algorithms are different from \cite{kim2025metastabledynamicschainofthoughtreasoning}. In particular, our algorithm uses a similarity measure between nodes in the graph, called the diffusion similarity, that is defined in terms of a random walk on the graph. The diffusion similarity measure is closely related to the diffusion distance, a graph metric which is useful for characterizing the clustering-structure of a graph \cite{diffusionmaps}. Finally, we propose a new way to compute the diffusion similarity, using a version of contrastive learning formalized by \cite{haochen2022provableguaranteesselfsuperviseddeep}, who provide a theoretical characterization of the global minimizers of a particular contrastive loss in terms of the minimizers of a related matrix factorization objective.

%% file: sections/arxiv/preliminaries-arxiv.tex
\section{Preliminaries}

\paragraph{Notation.} $0 \in \N$. $[n] := \{ 1, 2, \ldots, n\}$. For a set $V$, $2^V$ is its power set. For $v \in \R^n$, $\diag(v) \in \R^{n \times n}$ is the diagonal matrix with $v$ on its diagonal. For $M \in \R^{n \times n}$, $i,j \in [n]$, denote $M[i,j]$ the $(i,j)$th entry, $M[i]$ the $i$th row, and $M[:, j]$ the $j$th column of $M$. $O^{k \times k}$ is the set of $(k\times k)$ orthogonal matrices. $\1$ is the all-$1$'s vector. $\forall x \in V$, $e_x \in \R^{|V|}$ is the unit vector along dimension indexed by $x \in V$. For an undirected, unweighted graph $G = (V,E)$, for any $x,y \in V$, \, $\dist(x,y)$ is the minimum number of edges of any path connecting $x$ and $y$ in $G$. $\deg(x)$ denotes the degree of node $x \in V$. For $x \in V, r \in \N$, denote $B(x,r) := \{ x' \in V : \dist(x, x') \leq r\}$ and $\partial B(x,r) := B(x,r) - B(x, r - 1), \, B(x, -1) := \emptyset$. For set $S \subset V,$ let $B(S, r) := \bigcup_{v \in S} B(v,r)$. For $\pi,f, f' : V \to (0, \infty)$, let $\langle f, f' \rangle_{\ell^2(1/\pi)} := \sum_{x \in V} f(x) f'(x) \frac{1}{\pi(x)}$ and  $||f||^2_{\ell^2(1/\pi)} := \langle f, f \rangle_{\ell^2(1/\pi)}$. Let $\Delta(V)$ denote the set of probability distributions whose support is contained in the set $V$. For $D \in \Delta(V)$, $\supp(D)$ is its support. For $g : V \to \Delta(V)$, $g \# D \in \Delta(V)$ is the distribution of $y$, drawn according to this process: $x \sim D, y \sim g(x)$. We say an algorithm has independent and identically distributed (i.i.d.) sample access to $D$ if it has an oracle which it can query for i.i.d. samples $x \sim D$. 


\paragraph{Markov Chains on Graphs and Isoperimetric Inequality.} \Cref{appen:more-detailed-prelim} has more-detailed preliminaries. Let $G = (V,E)$ be an undirected, connected graph with a countable set of vertices. Let $(V,P)$ denote an irreducible, reversible Markov chain with transition matrix $P \in [0,1]^{|V| \times |V|}$ and invariant measure $\pi : V \to (0,\infty)$ where reversibility means that $\forall x,y \in V, \pi(x) P[x,y] = \pi(y) P[y, x]$. Let $E(P)$ be the set of unordered tuples $(x,y)$ where $P[x,y] > 0$. Given $e \in E(P)$ we will write $e = (e^-, e^+)$, but the choice of which end-point is $e^-$ and which is $e^+$ does not matter for our analysis. If $E(P) \subset E$, then $(V,P)$ is nearest-neighbor-type random walk (Markov chain). The simple random walk $\Psrw$ is a Markov chain where every node $x$ transitions uniformly to its neighbors $N(x)$. 
For any $A \subset V$, let $\partial A$ be the set of all edges in $E(P)$ with one endpoint in $A$ and the other in $V - A$.  Let $\pi(A) := \sum_{x \in A} \pi(x)$ and $a(\partial A) := \sum_{(e^-, e^+) \in \partial A} \pi(e^-) P[e^-, e^+] = \sum_{e^- \in A, e^+ \notin A} \pi(e^-) P[e^-, e^+]$. 




\begin{restatable}[Isoperimetric Inequality]{definition}{IS}\label{defn:IS}
A Markov chain $(V,P)$ satisfies Strong Isoperimetric Inequality, $\ISineq$, if there exists a constant $\kappa > 0$ such that for every subset of finite cardinality $A \subset V$, $\pi(A) \leq \kappa \cdot a(\partial A)$. 
\end{restatable}

\paragraph{Contrastive Learning.} \Cref{appen:cont-learning-intro} contains more detailed preliminaries. Contrastive learning defines an algorithm for learning an embedding model that maps elements from a domain $V$ (e.g. the set of real-world images) into Euclidean space $\R^k$ for some $k \in \N$. It is most notably used for training neural networks as embedding models \cite{hadsell2006dimensionality, oord2019representationlearningcontrastivepredictive, chen2020simpleframeworkcontrastivelearning}. We will use the setup of \cite{haochen2022provableguaranteesselfsuperviseddeep}, which we describe below.

Let $\F_{V}^{(k)}$ denote the class of all functions mapping $V$ to $\R^{k}$. Let $\Femb \subset \F_{V}^{(k)}$ denote the hypothesis class of models $h_\theta : V \to \R^k$ which we assume contains the ground-truth embedding model of the learning task. Contrastive learning defines two distributions of pairs of elements in $V$: a positive pair distribution $D_+ \in \Delta(V \times V)$  and a negative pair distribution $D_- \in \Delta(V \times V)$. Given i.i.d. samples from both distributions, the trained model is encouraged to embed two elements sampled from a positive pair $(x,y) \sim D_+$ close together in Euclidean space, while embedding two elements sampled from a negative pair $(x,x') \sim D_-$ far apart in Euclidean space. These two distributions of pairs of elements are used to define a contrastive loss, and training the model amounts to minimizing an empirical estimate of the contrastive loss from finite samples.

Let the domain $V$ consist of $N \in \N$ elements, where $N$ is large but finite. We will identify $V$ with $[N]$ here for simplicity. Define $\mu \in \Delta([N])$ as the \textit{base distribution} and $Q \in \R^{N \times N}$ as an augmentation scheme. Overloading notation, we can alternatively view $Q : [N] \to \Delta(N)$ as a mapping from an element $x$ to a distribution $Q(x)$ over augmentations of $x$. For every $x \in [N]$, $Q(x)$, equals the $x$-th row of the matrix-representation of $Q$. For $h_\theta : V \to \R^k$, define the population contrastive loss $\Lc : \F_{V}^{(k)} \to \R$ as

\begin{align}
    \Lc(h_\theta) := \ex_{(x, x^-) \sim D_-} (h_\theta(x)^\top h_\theta(x^-))^2 - 2 \cdot \ex_{(x, x^+)\sim D_+}h_\theta(x)^\top h_\theta(x^+)\label{eq:pop-cont-loss}
\end{align}

Positive pairs $(x, x^+)\sim D_+$ are sampled by first sampling an element from the base distribution, $z \sim \mu$ and then sampling the pair elements as independent augmentations $x \sim Q(z)$, $x^+\sim Q(z)$, returning $(x,x^+)$. Negative pairs $(x,x^-) \sim D_-$ are sampled by first sampling $z, z^- \sim \mu$ independently and then sampling $x \sim Q(z), x^- \sim Q(z^-)$, returning $(x,x^-)$.

For a dataset $\{ \overline{x}_i\}_{i \in [n]} \subset V$, define the empirical contrastive loss $\Lchat : \F_{V}^{(k)} \to \R$ as:

\begin{align}
    &\Lchat(h_{\theta'}) :=\\
    &-\frac{2}{n} \sum_{i \in [n]} \ex_{\substack{x \sim Q(\cdot | \overline{x}_i),\\ x^+ \sim Q(\cdot | \overline{x}_i)}} (h_{\theta'}(x)^\top h_{\theta'}(x^+)) + \frac{1}{n(n - 1)} \sum_{i,j \in [n], i \neq j}\ex_{\substack{x \sim Q(\cdot | \overline{x}_i),\\ x^- \sim Q(\cdot | \overline{x}_j)}} (h_{\theta'}(x)^\top h_{\theta'}(x^-))^2\label{eq:emp-cont-loss-defn}
\end{align}

Claim D.2 of \cite{haochen2022provableguaranteesselfsuperviseddeep} shows that $\ex_{\{ \overline{x}_i\}_{i \in [n]} \sim \mu} \Lchat(h_{\theta}) = \Lc(h_\theta)$, i.e. that $\Lchat$ is unbiased. Define the empirical risk minimizer (ERM) as follows.

\ermdefnmaintxt



\Cref{defn:contrastive-learning} in \Cref{appen:cont-learning-intro} is an algorithm that learns an embedding model using empirical risk minimization, given i.i.d. sample access to base distribution $D \in \Delta(V)$ and augmentation function $Q : V \to \Delta(V)$. To quantify the statistical generalization properties of the ERM, we introduce the following notion of Rademacher complexity.

\raddefnmaintxt

%% file: sections/framework-q1.tex
\section{Framework and Random-Exploration Baseline}\label{sec:framework-q1}

\subsection{Primitive Assumptions}\label{sec:4-1-asmpt}


With $t \in \mathbb{N}$ denoting the iteration index, the primary goal of the training algorithm is to produce a prover $f$ which can prove as many theorems as possible. One way a training algorithm can achieve this is by producing a sequence of provers $\{ f_t\}_{t \in \N}$ where the prover $f_{t + 1}$  can prove more theorems than prover $f_t$ for every $t \in \N$. The following assumptions define a set of core primitives for designing such a training algorithm. 

\Cref{assumption:a1-js} assumes i.i.d. sample access to an initial training distribution $D_0$ of theorems where for every theorem $x$ in the support of $D_0$, the proof of $x$ is known. This means that we have efficient access to a proof of $x$, either because it is recorded explicitly in some database of (theorem, proof) pairs, or because we have access to a prover $f$ where $\Succ(x,f) \geq \Omega(1)$. The latter condition means that given $x$ and $f$, the proof of $x$ can be efficiently sampled in $O(1)$ calls to $f$, with high probability.

\begin{assumption}(Cold-Start Dataset)\label{assumption:a1-js} We are given access to an oracle that provides i.i.d. samples from $D_0 \in \Delta(V)$ where the proof of every theorem $x \in \supp(D_0)$ is known.  
\end{assumption}

An example of $D_0$ would be a distribution of elementary-school-level math problems. 

\Cref{assumption:a2-rt} describes the ability of a model, trained on a distribution $D$ of theorems via i.i.d. samples, to generalize in-domain to $D$. We justify it with statistical learning theory in \Cref{appen:more-detailed-asmpts}. 

\begin{assumption}(Training)\label{assumption:a2-rt} Let $p \in (0,1), p = \Theta(1)$ be a universal constant.  Let $D \in \Delta(V)$ be a distribution of theorems where $\forall x \in \supp(D)$, the proof of $x$ is known. Then, $\forall \, \eps,\delta \in (0,1)$, with $n = \Theta(\frac{\C(\Fprov)\log \frac{1}{\delta}}{\eps^2 (1 - p)^2})$ where $\C(\Fprov)$ is an $\Fprov$-dependent constant, we can train a prover $f$ using $n$ i.i.d. samples from $D$ such that with probability at least $1 - \delta$, we have that $f$ satisfies:

\begin{align}
    \Pro_{x \sim D} [\Succ (x, f) \geq p] \geq 1- \eps
\end{align}

\end{assumption}




The requirement of \Cref{assumption:a2-rt} that the proof of every theorem in $\supp(D)$ is known can also be thought of as requiring  a training distribution of (theorem, proof) pairs, where every theorem in the support of the training distribution must have a correct proof associated with it. This requirement is necessary, as otherwise it would be possible to train a model to have high success rate on arbitrarily hard theorems for which we do not know the proof.

Let $G = (V,E)$ be the theorem graph. \Cref{assumption:a3-ood} says that the edge set $E$ is where the success rate of any prover in $\Fprov$ is not too different between two theorems connected by an edge.

\begin{assumption}(Continuity of Success Rate)\label{assumption:a3-ood} With $p \in (0,1)$ defined as in \Cref{assumption:a2-rt}, assume for all $f \in \Fprov$, for all $(x,y) \in E$, that $|\Succ (x, f) - \Succ(y, f)| \leq \frac{p}{2}$
\end{assumption}

In \Cref{assumption:a4-nbr}, we assume access to a neighbor oracle $N$ as a primitive. 


\begin{assumption}(Neighbor Access)\label{assumption:a4-nbr} For any $x \in V,$ let $N(x) := \{ y \in V : (x,y) \in E\}$. We assume oracle access to a neighbor oracle $N : V \to 2^V$, which on input of any $x \in V$, returns  $N(x)$. 
\end{assumption}

Assumptions \ref{assumption:a2-rt} -- \ref{assumption:a4-nbr} enable a training algorithm to run the following at each iteration. First, expand the current set of known theorems by one step via the neighbor oracle $N$ (\Cref{assumption:a4-nbr}). Second, \Cref{assumption:a3-ood} ensures that unseen theorems have success rate at least $p/2$ under the current prover, enabling the proofs of the unseen theorems to be efficiently sampled. In general, note that if we have access to a prover $f$ whose success rate on theorem $x$ is reasonably high ($\geq p/2$, where $p \geq \Omega(1)$ by \Cref{assumption:a2-rt}), then the proof of $x$ can be efficiently sampled and become known. Finally, a new prover is trained on the expanded set of theorems  via \Cref{assumption:a2-rt}. \Cref{assumption:a3-ood} is necessary to our analysis because standard statistical learning theory arguments rely on exchangeability between training and test distributions and therefore do not by themselves justify this out-of-distribution step.





We do not require the edge set $E$ to have an interpretable definition. We only require $E$ to satisfy \Cref{assumption:a3-ood} and \Cref{assumption:a4-nbr}. One example instantiation of the theorem graph is where \(V\) is the set
of theorems corresponding to discrete logarithm instances in a fixed cyclic group. Fix a cyclic
group \(\mathbb G\) of prime order \(q\) with generator \(g \in \mathbb G\). For each
\(a\in \mathbb Z_q\), let \(h_a=g^a\), and let \(x_a\in V\) denote the
theorem:
\[
    \text{``Given cyclic group $\mathbb{G}$ of prime order $q$, and given $g, g^{-1}, h_a \in \mathbb{G},\text{ there exists } z\in \mathbb Z_q \text{ such that } g^z=h_a$.''}
\]
The proof of \(x_a\) is the unique exponent \(f_*(x_a)=a\). Define the
edge set \(E\) by $(x_a,x_b)\in E \iff b-a\in\{-1,+1\}\pmod q .$ Thus, the theorem graph is a cycle over the discrete logarithm instances. This edge set satisfies \Cref{assumption:a4-nbr}, since given the theorem
\(x_a=(\mathbb G,g, g^{-1},h_a)\), the neighbor oracle can
compute $N(x_a)=\{x_{a+1},x_{a-1}\}$, where \(x_{a+1}\) has target \(h_a g=g^{a+1}\), and \(x_{a-1}\) has
target \(h_a g^{-1}=g^{a-1}\). Finally, \Cref{assumption:a3-ood} amounts to the property that adjacent
discrete-log instances have similar success rates for the prover class: $(x_a,x_{a + 1})\in E
    \implies
    \forall f\in\Fprov,
    |\Succ(x_a,f)-\Succ(x_{a + 1},f)|\leq \frac p2 .$
This property is conceivable for some prover classes since  \(x_a\) and \(x_{a+1}\) are not
substantially different: given \(g\) and
its inverse \(g^{-1}\), an instance with target \(h_{a+1}\) can be
reduced to the adjacent instance with target \(h_a\) by multiplying by
\(g^{-1}\), i.e. $h_{a+1}g^{-1}=g^{a+1}g^{-1}=g^a=h_a.$ Thus, any prover that can prove \(h_a\) can prove \(h_{a+1}\) by first reducing \(h_{a+1}\) to \(h_a\), running the original prover, and then adding \(1\) to the resulting exponent. For model classes \(\Fprov\) that can exploit such reductions, it is conceivable that neighboring instances \(x_a\) and \(x_{a\pm 1}\) have similar success rates.

\subsection{Random-Walk Conjecturing Algorithm}\label{sec:4-2-conj-alg}

\paragraph{Conjecturer.}\label{par:conjecturer-defn} In STP \cite{dong2025stpselfplayllmtheorem}, the conjecturer is a mapping that takes as input a theorem $x$, its proof $f_*(x)$, a lemma $l$ used in $f_*(x)$, and a system prompt; it returns a theorem $c$ related to $x$. We study a simplification where the conjecturer maps a theorem to a distribution of theorems.




\begin{definition}\label{defn:conjecturer} (Conjecturer) For graph $G = (V,E)$, a conjecturer is a mapping $C : V \to \Delta(V)$. Given i.i.d. sample access to input distribution $D \in \Delta(V)$, the conjecturer  returns i.i.d. sample access to $D' = C \# D \in \Delta(V)$. 


A reversible conjecturer is a special type of conjecturer, specified by a set of symmetric, non-negative edge weights $w : E \to \R_+$, where $\forall x,y \in V, w(x,y) = w(y,x) \geq 0$. Denote $P_w \in [0,1]^{|V| \times |V|}$ as the matrix representation of $w$, defined as: $\forall x, y \in V, P_w[x,y] = \frac{w(x,y)}{\sum_{z \in N(x)} w(x,z)}$. Denote $C_w: V \to \Delta(V)$ as the sampler representation of $w$, defined as: $\forall x \in V, C_w(x) = P_w[x]$. Note that the transition matrix $P_w$ defines a nearest-neighbor-type reversible random walk over state-set $V$, with stationary measure $\pi_w : V \to (0,\infty)$, where $\forall x \in V, \pi_w(x) \propto \sum_{z \in N(x)} w(x,z) =: w(x)$. 

These three representations $C_w, P_w, w$ are equivalent and uniquely determined given any of the other. Sometimes we will write the tuple $(C_w, P_w, w)$, where $C_w$ and $ P_w$ are defined in terms of $w$ as above. When the context requires just $C_w$, we will identify $C_w$ with the conjecturer.

\end{definition}



For instance, when $\forall x \in V, z \in N(x), w(x,z) = 1$ and for $z \notin N(x), w(x,z) = 0 $, then $C_w(x) = \textup{Unif}(N(x))$, so that  $(C_w, P_w, w)$ defines a simple random walk over $G$, with $\pi_w(x) \propto \deg(x)$.


\paragraph{Prover-Conjecturer System.} This section provides sufficient conditions for a prover-conjecturer system (\Cref{alg:main}) to learn to prove more theorems across iterations. \Cref{thm:main} assumes the conjecturer $(C_w, P_w, w)$, defining Markov chain $(V, P_w)$, satisfies a connectivity assumption.

\begin{assumption}\label{assumption:graph}
    Let $G = (V, E)$ be a connected graph where the number of vertices is  countably infinite ($|V| = \infty$) and each vertex has finite degree. Assume that the Markov chain $(V,P)$ for $G$ satisfies the strong isoperimetric inequality, $\ISineq$, defined in \Cref{defn:IS}.
\end{assumption}

At iteration $t\in \N$, the prover-conjecturer system (\Cref{alg:main}), initialized with conjecturer $(C_w, P_w, w)$, has i.i.d. sample access to $D_t \in \Delta(V)$, a training distribution over theorems whose proofs are known. It trains prover $f_t \in \Fprov$ on $D_t$ via \Cref{assumption:a2-rt}. Finally, it constructs i.i.d. sample access to a new distribution over theorems, $D_{t + 1} \in \Delta(V)$ using \Cref{alg:pass-rate-sampler}. Given $D_t$, \Cref{alg:pass-rate-sampler} applies the conjecturer $C_w$ to $D_t$, yielding $C_w \# D_t$. Theorems with low pass rate under $f_t$ are filtered out, yielding $D_{t + 1}$. 

\Cref{alg:main} uses \Cref{alg:pass-rate-sampler} as a subroutine to construct $D_{t + 1}$ from $D_t$.  \Cref{alg:verifier-based-sampler} is a practical approximation to \Cref{alg:pass-rate-sampler}, and it can be substituted in for \Cref{alg:pass-rate-sampler} in \Cref{alg:main}. In \Cref{alg:verifier-based-sampler}, for a sufficiently large rollout-budget $m \geq \Theta(\frac{1}{p^2}\log\frac{1}{\delta_{\textup{samp}}})$, by standard concentration inequalities \cite{hoeffding1963probability, ma}, we have that $\forall x \in V$, with probability at least $1 - \delta_{\textup{samp}}$, $|\frac{1}{m} \sum_{i \in [m]} \1[\Ver(x, \pi_i) = 1] - \Succ(x,f) | < p/2$. Thus, with high probability, if $\Succ(x,f) \geq p/2$, then $\frac{1}{m} \sum_{i \in [m]} \1[\Ver(x, \pi_i) = 1] > 0$, so that at least one empirically-sampled proof $\pi$ of theorem $x$ is correct. 



\begin{algorithm}[t]
\caption{Prover-Conjecturer System}
\label{alg:main}
\begin{algorithmic}[1]
\Require Prover class $\Fprov$, reversible conjecturer $C_w : V \to \Delta(V)$ as in \Cref{defn:conjecturer}, parameters $\eps,\delta\in(0,1)$, threshold $p>0$
\State Acquire theorem distribution $D_0\in\Delta(V)$ using \Cref{assumption:a1-js}
\For{$t=0,1,2,\ldots$}
    \State Given i.i.d. sample access to $D_t$, train prover $f_t\in\Fprov$ on $D_t$ to accuracy $\eps$ and failure probability $\delta$ using \Cref{assumption:a2-rt}
    \State Define i.i.d. sample access to $D_{t+1}$ by applying \Cref{alg:pass-rate-sampler} with input $(D_t,f_t,C_w,p/2)$
\EndFor
\end{algorithmic}
\end{algorithm}

\begin{algorithm}[t]
\caption{Pass-Rate-Oracle-Based Sampler}
\label{alg:pass-rate-sampler}
\begin{algorithmic}[1]
\Require i.i.d. sample access to $D \in \Delta(V)$, prover $f \in \Fprov$, conjecturer $C_w : V \to \Delta(V)$, threshold $q>0$, verifier $\Ver : V \times \{ 0,1\}^* \to \{0,1\}$,  and oracle access to $\Succ(\cdot,f)$
\Ensure i.i.d. sample access to $D' \in \Delta(V)$

\Sampler{$D'$}
    \Repeat
        \State Draw theorem $x\sim D$
        \State Sample $x'\sim C_w(x)$\label{algline:conj}
    \Until{$\Succ(x',f)\ge q$}
    \State Sample and record a correct proof of $x'$ from $f$, checking correctness using $\Ver$.
    \State \Return $x'$
\EndSampler
\end{algorithmic}
\end{algorithm}

\begin{algorithm}[t]
\caption{Empirical-Pass-Rate-Based Sampler}
\label{alg:verifier-based-sampler}
\begin{algorithmic}[1]
\Require i.i.d. sample access to $D \in \Delta(V)$, prover $f \in \Fprov$, conjecturer kernel $C_w$, rollout budget $m\in\N$, and verifier $\Ver : V \times \{ 0,1\}^* \to \{0,1\}$.
\Ensure i.i.d. sample access to distribution $D' \in \Delta(V)$

\Sampler{$D'$}
    \Repeat
        \State Draw theorem $x\sim D$
        \State Sample $x'\sim C_w(x)$
        \State Sample candidate proofs of $x'$ from $f$ as:  $\pi_1,\ldots,\pi_m \sim f(x')$
        \If{there exists $j\in[m]$ such that $\Ver(x',\pi_j)=1$}
            \State Record proof $\pi_j$ of theorem $x'$.
            \State \Return $x'$
        \EndIf
    \Until{a theorem $x'$ is returned}
\EndSampler
\end{algorithmic}
\end{algorithm}

Due to the push-forward construction of $D_{t + 1}$ from $D_t$ in \Cref{alg:main} for every $t \in \N$, sampling from $D_t$ requires first sampling $x_0 \sim D_0$, then applying $t$ push-forward transformations to yield $x_i \sim D_j$, for all $j \in [t]$. In particular, first draw a large number $N_0$ of i.i.d. samples $x_0^{(i)} \sim D_0$, $i \in [N_0]$. At every iteration $j \in \{ 0\} \cup [t]$, suppose inductively that there are $N_j$ remaining unused i.i.d. samples from $D_j$. Use $\Theta(\frac{\C(\Fprov)\log \frac{1}{\delta}}{\eps^2 (1 - p)^2})$ of the $N_j$ samples to train prover $f_j$ via \Cref{assumption:a2-rt}. Of the unused samples, apply the conjecturer to map $x_j^{(i)}$ to $y_j^{(i)} \sim C_w(x_j^{(i)})$. Discard $y_j^{(i)}$ if  $\Succ(y_j^{(i)}, f_j) <  p/2$, else let $x_{j + 1}^{(i)} := y_j^{(i)}$. Let $N_{j + 1}$ denote the number of remaining samples. Note that if prover $f_j$ is successfully trained according to \Cref{assumption:a2-rt} to where $\Pro_{x \sim D_j}[\Succ(x,f_j) \geq p] \geq 1 - \eps$, then by \Cref{assumption:a3-ood}, $\Pro_{x \sim D_j, x' \sim C_w(x)}[\Succ(x',f_j) \geq p/2] \geq 1 - \eps$, so that by standard concentration inequalities, $N_{j + 1} \gtrapprox (1 - \eps)N_j$. In the worst case, we have that $\forall j \in [t], N_j \approx N_0 (1 - \eps)^j$, so that the number of oracle calls to $D_0$ required to sample from $D_j$ grows as $(\frac{1}{1 - \eps})^j$.



\subsection{Expansion Guarantee for \Cref{alg:main}.}\label{sec:4-3-thm}



\Cref{thm:main} shows that on an infinite graph, with a reversible conjecturer (\Cref{defn:conjecturer}) that induces a random walk that satisfies the Strong Isoperimetric Inequality (\Cref{assumption:graph}), and for training error threshold $\eps < \frac{1}{2\kappa^2}$, \Cref{alg:main} generates a sequence of provers $\{ f_t\}_{t \in \N}$ such that $\forall t \in \N,$ the size of the $t$-th prover's knowledge set $| \{ x \in V : \Succ(x, f_t) \geq p\} |$ is exponential in $t$. 

\begin{restatable}[Analysis of \Cref{alg:main}]{theorem}{expansionthm}\label{thm:main}

Let $G = (V,E)$ be a theorem graph with $|V| = \infty$ and $(C_w,P_w, w)$ be a reversible conjecturer as in \Cref{defn:conjecturer}. Suppose $(V, P_w)$ satisfies Assumption \ref{assumption:graph} with Isoperimetric constant $\kappa > 0$. Let $\pi : V \to (0, \infty)$ be an invariant measure of $(V,P_w)$. Suppose Assumptions \ref{assumption:a1-js}-\ref{assumption:a4-nbr} hold, with universal constant $p \in (0,1)$ as in \Cref{assumption:a2-rt}, and that $\pi$  is chosen so that $\sup_{z \in V} \pi(z) < \infty$ and $0 < ||D_0||_{\ell^2(1/\pi)}^2 < \infty$. For any $\delta , \eps \in (0,1)$,  \Cref{alg:main} produces a sequence of provers $\{ f_t\}_{t \in \N}$ where $\forall t \in \N$, with probability at least $1 - (t + 1)\, \delta$,

\begin{align}
    | \{ x \in V : \Succ(x, f_t) \geq p\} | \geq \frac{(1 - \eps)^2}{(\sup_{z \in V} \pi(z)) \cdot ||D_0||_{\ell^2(1/\pi)}^2 } \cdot (\frac{1 - \eps}{1 - \frac{1}{2\kappa^2}})^{2t}
\end{align}

\end{restatable}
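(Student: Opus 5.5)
The plan is to track the $\ell^2(1/\pi)$-norm of the training distributions $D_t$. This norm measures how concentrated $D_t$ is. I would show it contracts geometrically under the conjecturer step, and then convert small $\|D_t\|_{\ell^2(1/\pi)}$ into a large knowledge set by Cauchy--Schwarz. Throughout, I condition on the event that every training call in \Cref{alg:main} up to iteration $t$ succeeds in the sense of \Cref{assumption:a2-rt}. There are $t+1$ such calls, each failing with probability at most $\delta$, so a union bound gives the stated probability $1-(t+1)\delta$.

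\textbf{Step 1 (from norm to knowledge set).} Let $K_t := \{x : \Succ(x,f_t)\ge p\}$. By \Cref{assumption:a2-rt}, $D_t(K_t)\ge 1-\eps$. By Cauchy--Schwarz,
\begin{align}
(1-\eps)^2 \le \Big(\sum_{x\in K_t} D_t(x)\Big)^2 \le \Big(\sum_{x\in K_t}\pi(x)\Big)\Big(\sum_{x} \frac{D_t(x)^2}{\pi(x)}\Big) \le |K_t|\,\big(\sup_z \pi(z)\big)\,\|D_t\|^2_{\ell^2(1/\pi)} .
\end{align}
Hence $|K_t| \ge (1-\eps)^2 / \big(\sup_z\pi(z)\,\|D_t\|^2_{\ell^2(1/\pi)}\big)$. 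It remains to show
\begin{align}
\|D_t\|^2_{\ell^2(1/\pi)} \le \Big(\frac{1-\frac{1}{2\kappa^2}}{1-\eps}\Big)^{2t}\|D_0\|^2_{\ell^2(1/\pi)} .
\end{align}

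\textbf{Step 2 (one-step recursion).} By construction in \Cref{alg:pass-rate-sampler}, $D_{t+1}(y) = (D_t^\top P_w)(y)\,\1[\Succ(y,f_t)\ge p/2]/Z_t$, where $Z_t$ is the retained mass. On the conditioning event, $D_t(K_t)\ge 1-\eps$. By \Cref{assumption:a3-ood}, every neighbor $y$ of a point of $K_t$ has $\Succ(y,f_t)\ge p/2$, so $Z_t\ge 1-\eps$. The same filter guarantees that the proofs of all points in $\supp(D_{t+1})$ can be sampled and become known. This is what lets \Cref{assumption:a2-rt} be applied at the next iteration. Filtering only decreases entries pointwise, so
\begin{align}
\|D_{t+1}\|_{\ell^2(1/\pi)} \le \frac{1}{1-\eps}\,\|D_t^\top P_w\|_{\ell^2(1/\pi)} .
\end{align}

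\textbf{Step 3 (main obstacle: the operator-norm bound).} I need $\|\mu^\top P_w\|_{\ell^2(1/\pi)} \le \big(1-\frac{1}{2\kappa^2}\big)\|\mu\|_{\ell^2(1/\pi)}$ for every $\mu$.

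First, I would reduce this to a statement about $P_w$ on functions. The map $\mu\mapsto \mu/\pi$ is an isometry from $\ell^2(1/\pi)$ onto $\ell^2(\pi)$. Reversibility then turns $\mu^\top P_w$ into the action of $P_w$ on the function $g = \mu/\pi$, since $\big((\mu^\top P_w)/\pi\big)(y) = \sum_x g(x) P_w[y,x] = (P_w g)(y)$. So the required norm is the operator norm of $P_w$ on $\ell^2(\pi)$. Because $P_w$ is self-adjoint there, this norm equals its spectral radius $\rho(P_w)$.

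Second, I would bound $\rho(P_w)$ using the infinite-graph Cheeger/Dodziuk--Kesten inequality, in the form found in Woess's book. It says that the strong isoperimetric inequality $\ISineq$ with constant $\kappa$ implies $\rho(P_w)\le\sqrt{1-1/\kappa^2}$. I expect to need to re-derive this carefully with the paper's normalization of $a(\partial A)$. The derivation would go through the co-area formula applied to $g^2$, which gives a Sobolev-type inequality $\|g\|_{\ell^1(\pi)}\le\kappa\cdot\sum_e a(e)|\nabla g(e)|$, followed by Cauchy--Schwarz on the Dirichlet form. The delicate point is controlling both ends of the spectrum, not just the bottom of $1-P_w$. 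I would handle this by bounding $\langle P_w g,P_w g\rangle_\pi = \langle P_w^2 g,g\rangle_\pi$ directly, or by using the fact that the cited bound is on the spectral radius.

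Finally, I would use the elementary inequality $\sqrt{1-x}\le 1-x/2$ to obtain $\rho(P_w)\le 1-\frac{1}{2\kappa^2}$.

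\textbf{Conclusion.} Iterating Step 2 with the Step 3 contraction gives $\|D_t\|^2_{\ell^2(1/\pi)} \le \big(\frac{1-1/(2\kappa^2)}{1-\eps}\big)^{2t}\|D_0\|^2_{\ell^2(1/\pi)}$. Substituting this into Step 1 yields exactly the claimed lower bound on $|\{x : \Succ(x,f_t)\ge p\}|$.
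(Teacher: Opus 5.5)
Your proposal is correct and follows the paper's proof essentially step for step. It uses the Cauchy--Schwarz conversion (\Cref{lem:ent-to-cov-num}), the $1/(1-\eps)$ filtering factor, and the isometry $\mu\mapsto\mu/\pi$ with reversibility (\Cref{lem:rw}), then the isoperimetric inequality $\Rightarrow$ Sobolev applied to $f^2$ $\Rightarrow$ operator-norm bound (\Cref{thm:key}), and a union bound over the $t+1$ training calls. The paper obtains $1-\frac{1}{2\kappa^2}$ directly via $(|f(x)|+|f(y)|)^2\le 2(f(x)^2+f(y)^2)$ rather than through $\sqrt{1-1/\kappa^2}$, and it closes the negative-spectrum issue you flagged using non-negativity of $P$, namely $|\langle f,Pf\rangle_{\ell^2(\pi)}|\le\langle |f|,P|f|\rangle_{\ell^2(\pi)}$.
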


To prove \Cref{thm:main}, we show that for every $t \in \N$, $||D_{t + 1}||_{\ell^2(1/\pi)}^2 \leq (\frac{||P_w^\top||_{\ell^2(1/\pi) \to \ell^2(1/\pi)}}{1 - \eps})^2\cdot$ $||D_{t}||^2_{\ell^2(1/\pi)}$, where $||P_w^\top||_{\ell^2(1/\pi) \to \ell^2(1/\pi)}$ is the operator norm of $P_w^\top$ in the Hilbert space $\ell^2(1/\pi)$. By \Cref{lem:rw}, since $(V,P_w)$ satisfies \Cref{assumption:graph}, then $||P_w^\top||_{\ell^2(1/\pi) \to \ell^2(1/\pi)}\leq 1 - \frac{1}{2\kappa^2}$. Finally, \Cref{thm:main} follows from \Cref{lem:ent-to-cov-num}, which lower bounds $| \{ x \in V : \Succ(x, f_t) \geq p\} |$ using $\frac{1}{||D_{t}||^2_{\ell^2(1/\pi)}}$. The full proof is in \Cref{appen:thm-srw}. 

Thus, if the simple random walk $\Psrw$ over the theorem graph satisfies $\ISineq$ (i.e. for any finite $A \subset V, \frac{|\{ (e^-, e^+) \in E \;:\; e^- \in A, \;e^+ \notin A\}|}{\sum_{x \in A} \deg(x)} \geq \frac{1}{\kappa}$), then \Cref{alg:main} with a conjecturer, whose transition matrix is $ \Psrw$, generates a sequence of provers $\{ f_t\}_{t \in \N}$ whose knowledge set grows exponentially.


%% file: sections/arxiv/exploration-q2-arxiv.tex

\section{Diversity Measure and an Improved Conjecturing Algorithm}\label{sec:diversity-q2}

\subsection{Diversity Measure}\label{subsec:div-measure}



Unlike \Cref{sec:framework-q1}, suppose for simplicity the theorem graph has a finite but large number of nodes and $||\pi||_1 = 1$. \footnote{We believe the following analysis can be extended to infinite graphs.} The following ideas can be instantiated with any $\tau \in \N$, but practically $\tau$ should be  a small non-zero natural number. The main object of interest in this section is the $\tau$-expanded knowledge set of a prover $f$. 

\begin{definition} With universal $p \in (0,1)$ in \Cref{assumption:a2-rt},  for any $\tau \in \N$, $f \in \Fprov$, let $\Kct(f) := B(\{ x \in V : \Succ(x,f) \geq p\}, \tau)$ $=\{ x' \in V : \exists x \in V : \Succ(x,f) \geq p \textup{ and } \dist(x,x') \leq \tau\}$. We call $\Kp(f) := K_0^p(f)$ the knowledge set of $f$ and $\Kct(f)$ the $\tau$-expanded knowledge set.
\end{definition}

\textit{We define an operational notion of diversity of $f$'s knowledge as the size of $\Kct(f)$}. $\Kct(f)$ can be interpreted as the set of theorems that $f$ can prove by performing $\tau$ steps of ``reasoning" at inference time to reduce the input theorem into one it knows how to prove. \Cref{fig:diversity} depicts two provers $f_1$, $f_2$ whose knowledge sets are the same size $|\Kp(f_1)| = |\Kp(f_2)|$, but where their $\tau$-step expansions are of different sizes: $|\Kct(f_1)| \ll |\Kct(f_2)|$. Thus, $f_2$'s knowledge is more diverse than $f_1$'s. Identifying diversity with $|\Kct(f)|$ reflects the intuition that knowing a diverse set of proof ideas is useful to a prover $f$ as it increases the set of theorems that $f$ can  prove by combining known proof ideas. 




Let $(C_w, P_w, w)$ be a baseline conjecturer satisfying \Cref{defn:conjecturer} and for convenience denote $P:= P_w$. Our goal is to design an improved conjecturer that generates a sequence of training distributions $\{ D_t\}_{t \in \N}$ where at each $t \in \N$ prover $f_t$  is trained on $D_t$, and where $|\Kct(f_t)|$ increases quickly with $t$. In fact, \Cref{lem:divt-kct-connection} (proved in \Cref{appen:aux}) shows that for each $t\in \N$ it is reasonable to maximize $\Divt(D_t)$ (\Cref{defn:diversity-measure}), a diversity measure of the training distribution $D_t$.

\begin{restatable}[Diversity Measure]{definition}{divmeasure}\label{defn:diversity-measure}
Let $G = (V,E)$ be a theorem graph with  $P$ being the transition matrix of any reversible, nearest-neighbor-type random walk with stationary measure $\pi$. For any parameter $\tau \in \N$, and any $\mu \in \Delta(V)$, define:

\begin{align}
    \Divt(\mu) := \frac{1}{||\mu^\top P^\tau||^2_{\ell^2(1/\pi)}} = \frac{1}{\sum_{x \in V }(\mu^\top P^\tau e_x)^2 \frac{1}{\pi(x)} }
\end{align}
\end{restatable}


\begin{restatable}[Large $\Divt$ implies large $\Kct$]{lemma}{divtkct} \label{lem:divt-kct-connection}
Let $p \in [0,1]$ be a universal constant in \Cref{assumption:a2-rt}. Instantiate $\Divt : \Delta(V) \to \R$ with the transition matrix $P$ of any reversible, nearest-neighbor-type random walk. For any $\tau \in \N$, any prover $f \in \Fprov$, and any theorem distribution $D \in \Delta(V)$, the $\tau$-expanded knowledge set of $f$ satisfies:

\begin{align}
    |\Kct(f)| \geq \frac{(\Pro_{x \sim D}[\Succ(x,f) \geq p])^2}{\sup_{x \in V} \pi(x)} \Divt(D)
\end{align}

\end{restatable}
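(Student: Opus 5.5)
Let $S := \Kp(f) = \{x \in V : \Succ(x,f) \geq p\}$ and $q := \Pro_{x \sim D}[\Succ(x,f) \geq p] = D(S)$. We will show $q \cdot \mathbf{1}$-mass lands inside $\Kct(f)$ after $\tau$ steps of $P$, and then apply Cauchy--Schwarz in $\ell^2(1/\pi)$.

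\textbf{Step 1: mass is transported into $\Kct(f)$.} Set $\nu := \mu^\top P^\tau$ with $\mu := D$. Since $P$ is nearest-neighbor-type, $P^\tau[x,y] > 0$ only if $\dist(x,y) \leq \tau$. Hence for every $x \in S$, the row $e_x^\top P^\tau$ is supported on $B(x,\tau) \subset \Kct(f)$. It follows that
\begin{align}
\nu(\Kct(f)) = \sum_{x \in V} D(x)\, (e_x^\top P^\tau)(\Kct(f)) \geq \sum_{x \in S} D(x) = q,
\end{align}
using that $P^\tau$ is row-stochastic with nonnegative entries.

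\textbf{Step 2: Cauchy--Schwarz.} Write $K := \Kct(f)$. We have
\begin{align}
q \leq \sum_{y \in K} \nu(y) = \sum_{y \in K} \frac{\nu(y)}{\sqrt{\pi(y)}}\sqrt{\pi(y)} \leq \Big(\sum_{y \in K} \frac{\nu(y)^2}{\pi(y)}\Big)^{1/2}\Big(\sum_{y \in K}\pi(y)\Big)^{1/2} \leq \|\nu\|_{\ell^2(1/\pi)}\, \big(|K| \sup_{x\in V}\pi(x)\big)^{1/2}.
\end{align}
Squaring gives
\[
q^2 \leq \|D^\top P^\tau\|^2_{\ell^2(1/\pi)}\, |K|\, \sup_x \pi(x),
\]
that is, $|K| \geq q^2\, \Divt(D) / \sup_x \pi(x)$, which is the claim.

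The only potential subtlety is Step 1, namely ensuring that $P^\tau$ is supported within graph distance $\tau$. This follows by induction on $\tau$ from $E(P) \subset E$, where self-loops are allowed and have distance $0$. The case $\tau = 0$ is trivial since $P^0 = I$. No reversibility is actually needed beyond the definition of $\pi$.
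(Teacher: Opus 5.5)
Your proposal is correct and matches the paper's proof essentially step for step: the paper likewise first shows $((P^\top)^\tau D)(B(S,\tau)) \geq D(S)$ from the nearest-neighbor support of $P^\tau$, and then applies the Cauchy--Schwarz bound of \Cref{lem:ent-to-cov-num} in $\ell^2(1/\pi)$. The only cosmetic difference is that the paper sums over all of $V$ with the indicator $\1_{A'}$ instead of restricting the sum to $K$ as you do, and that auxiliary lemma holds for any positive weight $\pi$, consistent with your remark that reversibility is not needed.
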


If prover $f$ is successfully trained on theorem distribution $D$ via \Cref{assumption:a2-rt}, then $\Pro_{x \sim D}[\Succ(x,f) \geq p] \geq 1 - \eps$. By \Cref{lem:divt-kct-connection}, $|K_\tau^p(f)| \geq \frac{(1 - \eps)^2}{\sup_{x \in V}\pi(x)} \Divt(D)$, so maximizing $\Divt(D)$ of the training distribution is a reasonable proxy for maximizing $|K_\tau^p(f)|$. 


We note that in the setting of \Cref{sec:framework-q1}, where \Cref{assumption:graph} holds for random walk $(V,P)$, then maximizing  $|B(\Kp(f), \tau)|$ reduces to maximizing $|\Kp(f)|$. When $P = \Psrw$, then \Cref{assumption:graph} implies for bounded-degree graphs that there is a reasonable constant $c$ where for any finite $A \subset V,  |B(A, \tau)| \geq \exp(c\,\tau) |A|$. That is, any finite subset of theorems is diverse. More generally, if \Cref{assumption:graph} holds for $P$ (not necessarily $\Psrw$), then for $\{ D_t\}_{t \in \N}$ generated as $D_{t + 1}^\top = D_t^\top P$,  we have $\Divt(D_t) = \frac{1}{||D_{t + \tau}||^2_{\ell^2(1/\pi)}}$, where $\pi^\top P = \pi^\top$. Because the proof of \Cref{thm:main} implies fast exponential decay of  $||D_{t}||^2_{\ell^2(1/\pi)}$ as a function of $t \in \N$ for every $t$, it follows that $\Divt(D_t)$ grows exponentially for each $t$. In short, when \Cref{assumption:graph} holds, then the two objectives $|\Kp(f)|$ and $|\Kct(f)|$ are aligned. As such, the more interesting setting for studying diversity is when \Cref{assumption:graph} does not hold for the baseline random walk $P$. When $P=\Psrw$, some intuitive graphs where \Cref{assumption:graph} does not hold are graphs consisting of multiple poorly-interconnected clusters, such as that depicted in \Cref{fig:diversity} or \Cref{fig:kcluster}. For a graph like in \Cref{fig:diversity}, the choice of which nodes to include in $\Kp(f)$ will greatly impact $\Kct(f)$, making the problem of maximizing both $\Kp(f)$ and $\Kct(f)$ non-trivial.

\subsection{Improved Conjecturing Algorithm}

\Cref{defn:conj-alg-improved} is a conjecturer that generates a sequence of training distributions $\{ D_t\}_{t \in \N}$, with the aim to maximize $\Divt(D_t)$ at each $t \in \N$, for a chosen hyper-parameter $\tau \in \N$. This maximization cannot easily be done in closed form, as the conjecturer is restricted to generate $D_{t + 1}$ from $D_t$ by one step of a nearest-neighbor-type random walk.  Instead, fixing $\tau \in \N$, \Cref{defn:conj-alg-improved}  defines a push-forward map $C : V \to \Delta(V)$ where for any $x \in V$, $C(x) \in \Delta(N(x))$ is supported on neighbors $N(x)$ of $x$. The weighting of $C(x)$ is determined by maximizing $\Divt(C(x))$.  

\begin{align}
    \forall x \in V, \, C(x) &\leftarrow \arg\max_{\mu \in \Delta(N(x))} \Divt(\mu) =  \arg\min_{\mu \in \Delta(N(x))} \mu^\top P^\tau \diag(1/\pi) (P^\tau)^\top \mu \label{eq:optim-problem-diversity}
\end{align}

Where $\forall z \in V, \diag(1/\pi)[z,z] = \frac{1}{\pi(z)}$. This optimization problem can be solved given the matrix, $M(x, \tau) \in \R^{|N(x)| \times |N(x)|}$ where for any $y,y' \in N(x)$, the $(y,y')$-th entry is: $M(x, \tau) [y,y'] := e_y^\top P^\tau \diag(1/\pi)  (P^\tau)^\top  e_{y'}$. We can characterize $M(x, \tau)$ in terms of diffusion embeddings of the neighbors of $x$, where a diffusion embedding embeds any node $z \in V$ to a vector $p^\tau(z)$ representing the probability distribution of a $\tau$-step random walk started at $z$ according to transition matrix $P$.

\begin{definition}\label{defn:diffusionembd} 
    Let $G = (V,E)$ be a connected, undirected graph, with $P$ being the transition matrix of a reversible random walk over $V$, with stationary measure $\pi$. For all $\tau \in \N, \forall x \in V$, define $p^\tau(x)$ as the scale-$\tau$ diffusion embedding of $x$ where: 

    \begin{align}
        p^\tau (x) &:= (P^\tau)^\top e_x \in [0,1]^{|V|}\\
        \forall z \in V, \, p^\tau (x,z) &:= e_z^\top(P^\tau)^\top e_x \in [0,1]
    \end{align}
    For every $x, y \in V$, define the scale-$\tau$ diffusion similarity as 

    \begin{align}
    \diffsim(x,y) := \langle p^{\tau}(x), p^{\tau}(y) \rangle_{\ell^2(1/\pi)} := \sum_{z \in V} p^{\tau}(x, z) \, p^{\tau}(y,z) \, \frac{1}{\pi(z)} 
    \end{align}
    
     and the scale-$\tau$ diffusion distance \cite{diffusionmaps, pku} as 

     \begin{align}
         \diffdist(x,y) := || p^{\tau}(x) - p^{\tau}(y) ||_{\ell^2(1/\pi)} := \sqrt{\sum_{z \in V} (p^{\tau}(x,z) - p^{\tau}(y,z) )^2 \frac{1}{\pi(z)}}
     \end{align}
\end{definition}

For any $y,y' \in N(x)$, the $(y,y')$-th entry of $M(x, \tau)$ is $\diffsim (y,y')$. Note that the diffusion similarity requires computing $\frac{1}{\pi(z)}$ for relevant $z \in V$, but the stationary measure $\pi(z)$ need only be computed up to a normalizing constant for  the optimization problem in \Cref{eq:optim-problem-diversity}. For a conjecturer $(C,P,w )$ satisfying \Cref{defn:conjecturer}, $\forall z \in V, \pi(z) \propto w(z)$. Let $M'(x, \tau) = P^\tau \diag(1/w) (P^\tau)^\top$ and  $\forall z \in V$, $\diag(1/w)[z,z] = \frac{1}{w(z)}$. With regularization strength $\eta \in [0,0.5]$, $\forall x \in V$, \Cref{defn:conj-alg-improved} computes:

\begin{align}
    C(x) &\leftarrow \arg\max_{\substack{
    \mu \in \Delta(N(x))\\
    \mu(y) \geq \eta/\deg(x),\;\forall y \in N(x)
}} \Divt(\mu) =  \arg\min_{\substack{
    \mu \in \Delta(N(x))\\
    \mu(y) \geq \eta/\deg(x),\;\forall y \in N(x)
}} \mu^\top M'(x, \tau) \mu \label{eq:diff-sim-optim}
\end{align}

This mapping $C : V \to \Delta(V)$ defined in \Cref{defn:conj-alg-improved} can be substituted in for $C_w$ in line 4 of \Cref{alg:pass-rate-sampler}, called in line 4 of the prover-conjecturer system (\Cref{alg:main}).  If $G$ is finite, connected, undirected, and non-bipartite, the regularization with $\eta > 0$ ensures  (via \Cref{lem:ensure-mixing-for-alg4}) the random walk induced by $C$ converges to a stationary measure, eventually exploring the entire graph. Note that \Cref{defn:conj-alg-improved} uses \Cref{alg:dp-pT} to compute diffusion embeddings.


\begin{algorithm}[t]
\caption{Improved Conjecturer}
\label{defn:conj-alg-improved}
\begin{algorithmic}[1]
\Require $\tau \in \N$, $\eta \in [0,0.5]$, reversible conjecturer $(C_w,P_w,w)$ satisfying \Cref{defn:conjecturer}, and i.i.d. sample access to $D \in \Delta(V)$
\Ensure i.i.d. sample access to $D'=C\#D \in \Delta(V)$

\Procedure{$C$}{$x$}
    \State Compute $p^\tau(u)$ for all $u \in N(x)$ using \Cref{alg:dp-pT} with $(C_w,P_w,w)$
    \State Let $M'(x,\tau) \in \R^{|N(x)|\times |N(x)|}$, with
     $(y,y')$-entry $\sum_{z} p^\tau(y,z)p^\tau(y',z) \frac{1}{w(z)}$,  $\forall y,y' \in N(x)$
    \State $\displaystyle
    \mu_x \gets
    \arg\min_{\substack{\mu \in \Delta([|N(x)|])\\
    \mu(i) \ge \eta/\deg(x)\; \forall i \in [|N(x)|]}}
    \mu^\top M'(x,\tau)\mu$
    \State Sample $i \sim \mu_x$ and \Return the $i$th neighbor of $x$
\EndProcedure

\Sampler{$D' = C \# D$} 
    \State Draw $x \sim D$, $x' \sim C(x)$, and \Return $x'$
\EndSampler
\end{algorithmic}
\end{algorithm}

\begin{algorithm}[t]
\caption{Dynamic Program for $p^\tau(x)$}
\label{alg:dp-pT}
\begin{algorithmic}[1]
\Require $x\in V$, $\tau\in\N$, neighbor oracle $N:V\to2^V$, and reversible conjecturer $(C_w,P_w,w)$
\Ensure $p^\tau(x)$

\State $B_{-1}\gets\emptyset$, $B_0\gets\{x\}$

\For{$s=1,\ldots,\tau$}
    \State $\partial B_{s-1}\gets B_{s-1}\setminus B_{s-2}$
    \State $B_s\gets B_{s-1}\cup \bigcup_{v\in\partial B_{s-1}}N(v)$ 
\EndFor

\State $S\gets B_\tau$; initialize $p^0(x,y)\gets\mathbf{1}\{y=x\}$ for $y\in S$
\For{$s=1,\ldots,\tau$}
    \State $p^s(x,y)\gets \sum_{z\in S}p^{s-1}(x,z)P_w[z,y]$ for all $y\in S$
\EndFor
\State \Return $p^\tau(x)$
\end{algorithmic}
\end{algorithm}

As a concrete guarantee, \Cref{prop:e2e-guarantee-alg2} shows that \Cref{defn:conj-alg-improved} provably explores the theorem graph $G$ faster than a simple random walk when $G$ is in a special class of clustered graphs. 


\cliquegraphmaintxt

In the regime where the clique size $M$ is much larger than the inter-cluster connectivity ($M \gg r^2$), \Cref{defn:conj-alg-improved} improves over the baseline simple-random-walk-based conjecturing algorithm in that: both conjecturers define a random walk over the meta-graph, but that of \Cref{defn:conj-alg-improved} has a larger spectral gap \cite{peres}. The source of the improvement is that \Cref{defn:conj-alg-improved} up-weights the probability of making inter-cluster transitions, compared to the simple random walk. This becomes useful when the cluster size $M$ is large, in which case the simple random walk can become stuck in a cluster for a large number of steps.

\cliquepropmaintxt

\Cref{prop:e2e-guarantee-alg2} shows that the diversity of theorem distributions generated under \Cref{defn:conj-alg-improved} (resp. a simple-random-walk-based conjecturer) can be guaranteed to exceed $ 1 - \eps$ in $t_{\textup{improved}}$ steps (resp. $t_{\textup{baseline}}$ steps), and where $t_{\textup{improved}} \leq O(\frac{r^2}{M}) \,t_{\textup{baseline}}$. Although this does not prove a matching lower-bound to show separation between \Cref{defn:conj-alg-improved} and the baseline, we believe the bounds used to derive these results are tight so that in the worst case over all possible meta-graphs instantiating \Cref{defn:clique-graph}, \Cref{defn:conj-alg-improved} is $\Theta(\frac{M}{r^2})$ times more efficient than a simple-random-walk-based conjecturer, which is substantial when the clusters are very dense ($M \gg r^2$).

The proof of \Cref{prop:e2e-guarantee-alg2} is in \Cref{appen:e2e-result}. We think it is plausible that \Cref{defn:conj-alg-improved} improves over a baseline simple random walk $P = \Psrw$ for a broader class of clustered graphs (graphs like those depicted in \Cref{fig:kcluster}) than \Cref{defn:clique-graph}, but we do not have a formal guarantee. Our intuition of \Cref{defn:conj-alg-improved} on clustered graphs is that it defines a random walk that, at node $x$, has a higher chance to transition to neighbors of $x$ in a different cluster than $x$. This lets the random walk escape from its starting cluster faster and should enable it to explore the entire theorem graph faster.


In general, $\tau$ should not be set too small or too big. If $\tau =0$, the diffusion embedding is not informative since every node $x \in V$ is embedded as $e_x \in \R^{|V|}$. If $\tau$ is larger than the mixing time of $P$, then $\forall x \in V,(P^\top )^\tau e_x \approx \pi$, so the diffusion embedding is also not informative. When the theorem graph is a clustered graph consisting of equally sized clusters, 
there should exist a natural setting for $\tau$: the number of steps it takes a random walk to ``forget" its exact starting node, where it still ``remembers" its starting cluster. More precisely, the ideal property of $\tau$ on a clustered graph is that for two nodes $x, y$ in the same cluster, their $\tau$-step random walk distributions are approximately the same: $p^\tau(x) \approx p^\tau(y)$, but for $x, y$ in different clusters, their $p^\tau(x)$ and $p^\tau(y)$ differ greatly. Then, $p^\tau(x)$ serves as an identifier for which cluster $x$ is in.

\Cref{defn:conj-alg-improved} accepts as input $\tau \in \N$ and a baseline reversible conjecturer $(C_w, P_w, w)$, satisfying \Cref{defn:conjecturer}. It returns an improved conjecturer. \Cref{defn:conj-alg-improved}  improves the baseline conjecturer by performing a $\tau$-step look-ahead over the graph to maximize diversity. In principle, this algorithmic scaffold can be applied to conjecturers trained empirically, if they satisfy \Cref{defn:conjecturer}. The random walk defined by \Cref{defn:conj-alg-improved} may not be reversible, so the algorithmic scaffold cannot be recursively applied to its own output.



\subsection{Computing Diffusion Similarity via Contrastive Learning}

\Cref{thm:contembd} shows the diffusion similarity (used in step 4 of \Cref{defn:conj-alg-improved}) can be computed via learned contrastive embeddings. A self-contained summary of contrastive learning is in \Cref{appen:cont-learning-intro}.

\begin{restatable}[]{theorem}{contembd}\label{thm:contembd}
    Let $G = (V,E)$ be a finite, undirected, connected theorem graph. Let reversible conjecturer $(C, P, w) := (C_w, P_w, w)$ satisfy \Cref{defn:conjecturer}, and let $\pi$ be a stationary distribution for $P$. Let $\Femb \subset \F_{V}^{(|V|)}$ be a class of functions from $V$ to $\R^{|V|}$. For any $\tau \in \N$, define the scale-$\tau$ contrastive loss of embedding model $h_\theta \in \Femb$  as: 

    \begin{align}
        \Lc^\tau(h_\theta) := \ex_{(x, x^-) \sim D_-^\tau} (h_\theta(x)^\top h_\theta(x^-))^2 - 2 \cdot \ex_{(x, x^+) \sim D_+^\tau}h_\theta(x)^\top h_\theta(x^+)
    \end{align}

    where $D^\tau_+, D^\tau_- \in \Delta(V \times V)$ are distributions of positive and negative pairs: $(a,b) \sim D^\tau_+$ where $z \sim \pi, a \sim C^\tau (z), b \sim C^\tau (z)$ ($C^\tau$ denotes applying $C$ $\tau$-times consecutively) and $(a,b) \sim D^\tau_- \textup{ where } a \sim \pi, b \sim \pi$. Fix any $\tau \in \N$. Assume $\Femb$ contains a global minimizer of $\Lc^\tau$ over $\F_{V}^{(|V|)}$. For any global minimizer $\Femb \ni h_{\theta_*} : V \to \R^{|V|}$, then $\forall u,v \in V, \langle p^\tau(u),  p^\tau(v)\rangle_{\ell^2(1/\pi)} = h_{\theta_*}(u)^\top h_{\theta_*}(v)$.
\end{restatable}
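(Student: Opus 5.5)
The plan is to rewrite $\Lc^\tau$ as a matrix-factorization objective, following the approach of \cite{haochen2022provableguaranteesselfsuperviseddeep}. The global minimum will then be attained exactly when the Gram matrix of the (rescaled) embeddings equals a reweighted diffusion-similarity matrix. We take $\pi$ to be normalized ($\|\pi\|_1=1$), so that it is a distribution we can sample from. Since $G$ is finite and connected, and $P$ is nearest-neighbor with positive weights on edges, $\pi(x)>0$ for all $x$.

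\textbf{Step 1: the joint law of positive pairs.} I would first compute the joint law of positive pairs. For $z\sim\pi$ and $a,b\sim C^\tau(z)$ drawn independently,
\begin{align}
D_+^\tau(a,b)=\sum_z \pi(z)P^\tau[z,a]P^\tau[z,b].
\end{align}
Reversibility of $P$ implies reversibility of $P^\tau$, so $\pi(z)P^\tau[z,a]=\pi(a)P^\tau[a,z]$. Applying this once in $a$ gives $\pi(a)P^\tau[a,z]P^\tau[z,b]$. Applying it once more to $P^\tau[z,b]=\pi(b)P^\tau[b,z]/\pi(z)$ gives
\begin{align}
D_+^\tau(a,b)=\pi(a)\pi(b)\sum_z \frac{P^\tau[a,z]P^\tau[b,z]}{\pi(z)}=\pi(a)\pi(b)\,\diffsim(a,b).
\end{align}
Here I use $p^\tau(a,z)=P^\tau[a,z]$ from \Cref{defn:diffusionembd}. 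The negative pairs have law $D_-^\tau(a,b)=\pi(a)\pi(b)$.

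\textbf{Step 2: matrix-factorization form.} Let $F\in\R^{|V|\times|V|}$ be the matrix whose $x$-th row is $\sqrt{\pi(x)}\,h_\theta(x)^\top$. Let $\bar A\in\R^{|V|\times|V|}$ have entries $\bar A[a,b]=\sqrt{\pi(a)\pi(b)}\,\diffsim(a,b)$. Substituting the two pair laws and expanding the square gives
\begin{align}
\Lc^\tau(h_\theta)=\sum_{a,b}\big((FF^\top)[a,b]\big)^2-2\sum_{a,b}(FF^\top)[a,b]\,\bar A[a,b]=\|FF^\top-\bar A\|_F^2-\|\bar A\|_F^2.
\end{align}

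\textbf{Step 3: minimizers.} The matrix $\bar A$ is PSD. Indeed, $\bar A=D^{1/2}\Phi D^{1/2}$ with $D=\diag(\pi)$, where $\Phi$ is the Gram matrix of the vectors $p^\tau(x)$ in $\ell^2(1/\pi)$. Hence $\bar A=F_0F_0^\top$ for $F_0=\bar A^{1/2}\in\R^{|V|\times|V|}$. This is where the embedding dimension $k=|V|$ is used: no rank truncation is needed. Defining $h(x):=F_0[x]/\sqrt{\pi(x)}$ gives a function in $\F_V^{(|V|)}$ attaining the lower bound $-\|\bar A\|_F^2$. Therefore any global minimizer over $\F_V^{(|V|)}$, and in particular $h_{\theta_*}\in\Femb$, satisfies $F_*F_*^\top=\bar A$. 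Reading this entrywise and dividing by $\sqrt{\pi(u)\pi(v)}>0$ yields $h_{\theta_*}(u)^\top h_{\theta_*}(v)=\diffsim(u,v)=\langle p^\tau(u),p^\tau(v)\rangle_{\ell^2(1/\pi)}$ for all $u,v\in V$.

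The main point needing care is Step 1. The double application of reversibility must turn the positive-pair law into exactly $\pi(a)\pi(b)\diffsim(a,b)$, and the indexing convention $p^\tau(x)=(P^\tau)^\top e_x$ must match the row $P^\tau[x,\cdot]$. The rest is routine algebra.
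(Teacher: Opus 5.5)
Your proof is correct. It starts where the paper does, by rewriting $\Lc^\tau$ as $\|FF^\top-\bar A\|_F^2$ minus a constant (as in \Cref{eq:cont-mf-equiv}), but it characterizes the minimizers by a more direct route.

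The paper's route has three steps. It computes $\bar A=\diag(\pi)^{1/2}P^{2\tau}\diag(\pi)^{-1/2}$ and invokes Eckart--Young--Mirsky to obtain $F_*=U\Lambda^{1/2}R$. In \Cref{lem:cont-learn-min-general}, including a sign-correction matrix, it translates this into $h_{\theta_*}=R_*\Phi_\tau$, where $\Phi_\tau$ is the spectral diffusion embedding. Finally it needs \Cref{lem:diff-dist} to turn $\langle\Phi_\tau(u),\Phi_\tau(v)\rangle$ into $\diffsim(u,v)$.

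Your route is shorter. You identify $\bar A$ entrywise as $\sqrt{\pi(a)\pi(b)}\,\diffsim(a,b)$ by applying reversibility twice. You note that it is a rescaled Gram matrix, hence PSD. Since the embedding dimension is $|V|$, the residual can be driven exactly to zero, so every global minimizer satisfies $F_*F_*^\top=\bar A$. The claim then follows entrywise, with no eigendecomposition, no eigenvalue ordering, and no orthogonal or sign ambiguity to track.

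What your argument does not give is the stronger structural fact the paper's route yields: minimizers are rotations of the spectral embedding. The paper reuses exactly that fact for the rank-$k$ result \Cref{thm:e2e-contrastive-learning-result}. There the lower bound $-\|\bar A\|_F^2$ is no longer attainable, and Eckart--Young together with an eigengap assumption is genuinely needed. So your proof is the cleaner one for this full-dimensional statement, while the paper's is set up for the truncated extension.
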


The proof of \Cref{thm:contembd} is in \Cref{appen:contembd}. \Cref{thm:contembd} shows that the diffusion similarity of any two nodes on the graph (a quantity computed in step 3  of \Cref{defn:conj-alg-improved}) can instead be computed from the Euclidean inner product of their learned embeddings. Given $n$ samples $\{ x_i\}_{i \in [n]} \subset V$ drawn i.i.d. from the stationary measure $\pi \in \Delta(V)$ of $P$, an embedding model $\Femb \ni h_{\hat{\theta}} : V \to \R^{|V|}$ can be learned by computing the empirical risk minimizer (ERM, \Cref{defn:erm-cont}) of the empirical contrastive loss $\Lchat$ in \Cref{eq:emp-cont-loss-defn}. Under additional assumptions, we show in \Cref{thm:e2e-contrastive-learning-result} an end-to-end result where ERM, in a finite-sample regime with small embedding dimension, yields embeddings that can be used to approximate the diffusion similarity. 

\etethmmaintext

\Cref{thm:contembd} and \Cref{thm:e2e-contrastive-learning-result} motivate \Cref{defn:conj-alg-improved-cont-learning}, a conjecturing algorithm that is analogous to \Cref{defn:conj-alg-improved}, but with \Cref{alg:dp-pT} replaced by a contrastive-learning procedure.

\begin{algorithm}[t]
\caption{Improved Conjecturer with Contrastive Learned Diffusion Embeddings}
\label{defn:conj-alg-improved-cont-learning}
\begin{algorithmic}[1]
\Require $\tau \in \N$, $\eta \in [0,0.5]$, $\eps, \delta \in (0,1)$, reversible conjecturer $(C_w,P_w,w)$ satisfying \Cref{defn:conjecturer}, and i.i.d. sample access to $D \in \Delta(V)$ 
\Ensure i.i.d. sample access to $D'=C\#D$

\Procedure{$C$}{$x$}
    \State Learn an embedding function $h_\theta \in \Femb$ by minimizing the empirical contrastive loss (e.g. via \Cref{defn:contrastive-learning}), with base distribution $\pi$ and augmentation function $(P_w)^\tau$, to error $\eps$ and failure probability $\delta$.
    \State Let $M'(x,\tau) \in \R^{|N(x)|\times |N(x)|}$, with
     $(y,y')$-entry $ h_\theta(y)^\top h_\theta(y')$,  $\forall y,y' \in N(x)$
    \State $\displaystyle
    \mu_x \gets
    \arg\min_{\substack{\mu \in \Delta([|N(x)|])\\
    \mu(i) \ge \eta/\deg(x)\; \forall i \in [|N(x)|]}}
    \mu^\top M'(x,\tau)\mu$
    \State Sample $i \sim \mu_x$ and \Return the $i$th neighbor of $x$
\EndProcedure

\Sampler{$D' = C\#D$}
    \State Draw $x \sim D$, $x' \sim C(x)$, and \Return $x'$
\EndSampler
\end{algorithmic}
\end{algorithm}

Finally, we discuss how to instantiate \Cref{thm:contembd} and \Cref{thm:e2e-contrastive-learning-result} for a \textit{local} subgraph of $G$ around the theorems of interest, rather than needing to take $G$ to be the entire theorem graph, which is impractical. For a theorem graph $G = (V,E)$, define the following notion of a subgraph of $G$ supported on the $T$-neighborhood of a node $x \in V$.


\localsubgmaintxt

A simple observation is that: for any $x \in V$, on a $T = \tau + 1$ local subgraph around $x$, the $\tau$-step random walk distribution starting from any neighbor $z \in N(x)$, $p^\tau_{G_{x,T}}(z)$ is the same as $p^\tau(z)$. In addition, for any reversible Markov chain $(V,P)$ on the graph $G$ with stationary measure $\pi$, for any $x \in V$ and $T \in \N$, the stationary measure $\pi_{x,T}$ is equal to $\pi$ restricted to $B(x,T) \subset V$, up to a multiplicative constant, by the self-loop construction of $G_{x,T}$. Thus, the diffusion similarity computed on the local subgraph $G_{x,T}$ differs from that computed on $G$ by a multiplicative constant, which means the locally-computed diffusion similarity is sufficient for solving the optimization problem in \Cref{eq:optim-problem-diversity}.

To learn the spectral diffusion embedding for reversible Markov chain $(V,P)$, of reversible conjecturer $(C,P,w) := (C_w,P_w,w)$ in a local $T$-neighborhood around a node $x \in V$ via \Cref{defn:contrastive-learning}, one will require access to the following oracles, defined in \Cref{defn:T-neighborhood}.

\begin{enumerate}
    \item The push-forward map $C_{x,T} : B(x,T) \to \Delta(B(x,T))$ representing the restricted Markov chain $(V_{x,T}, P_{x,T})$, where $\forall y \in B(x,T)$, $C_{x,T}(y) = P_{x,T}[y] \in \Delta(B(x,T))$,
    \item i.i.d. sample access to $\pi_{x,T} \in \Delta(B(x,T))$.
\end{enumerate}

The construction of these oracles, given $x \in V$, $T \in \N$, and neighbor oracle $N : V \to 2^V$ of $G$, is conceptually straightforward. We describe the details of their constructions in \Cref{alg:compute-pix} of \Cref{appen:local-subgraphs}.

%% file: sections/conclusion.tex
\section{Conclusion and Limitations}\label{sec:discussion}

We formalized the space of theorems as a graph and a conjecturer as a nearest-neighbor-type Markov chain over the theorem graph. \Cref{thm:main} shows that under \Cref{assumption:graph}, the knowledge of the prover trained via \Cref{alg:main} grows exponentially with the number of iterations. For the case where \Cref{assumption:graph} does not hold,  we propose a diversity measure (\Cref{defn:diversity-measure}) for a training distribution of theorems generated by a conjecturer and an improved conjecturing algorithm (\Cref{defn:conj-alg-improved}) that locally maximizes this diversity measure, by computing the diffusion similarity (\Cref{defn:diffusionembd}) between neighboring theorems in the theorem graph. Finally, we describe a method to compute the diffusion similarity using contrastive learning (\Cref{thm:contembd} and \Cref{thm:e2e-contrastive-learning-result}).

One limitation of the current framework is that the conjecturer is not adaptive toward the particular prover $f_t$ at iteration $t$, but rather adaptive toward the prover class, which determines the edge set $E$. It defines a time-homogeneous nearest-neighbor random walk, whereas a time-inhomogeneous random walk would be more realistic. Another intuition that is not captured by the current framework is that of theorem compositionality. In particular, one can imagine a conjecturer that has knowledge of two theorems $X \to Y$ and $Y \to Z$ may be able to conjecture $X \to Z$, but this is not easily compatible with the view that a conjecturer maps a single theorem to a distribution of theorems. More broadly, an interesting question is how to extend our analysis to a hypergraph of theorems or dependent types \cite{freedman}. Finally, we note that although this theoretical framework is instantiated for the setting of formal theorem proving, it could potentially serve as a framework for understanding automated synthetic data generation for other verifiable domains besides formal theorem proving.

%% file: sections/acknowledgements.tex
\section*{Acknowledgements}\label{sec:acknowledgements}

TC acknowledges funding from an NSF Graduate Research Fellowship. This work was supported in part by an OpenAI Superalignment Fast Grant and by DARPA AIQ under Agreement No. HR00112520023.

We are deeply grateful to Tengyu Ma for many insightful discussions. We also thank Kaiyue Wen for discussions.

%% file: sections/arxiv/appendix-arxiv.tex
\tableofcontents

\input{sections/appendix/q1-main.tex}

\input{sections/appendix/diffusion-and-cont-learning}

\input{sections/appendix/e2e-result}

\input{sections/appendix/q1-detailed-asmpt}

\input{sections/appendix/aux}
\input{sections/appendix/K-cluster}

%% file: sections/appendix/q1-main.tex
\section{Proof of \Cref{thm:main}}\label{appen:thm-srw}

The purpose of this section is to eventually prove \Cref{thm:main} in \Cref{appen:pf-of-thm-42}.

\subsection{Preliminaries.}\label{appen:more-detailed-prelim}

\paragraph{Isoperimetric Inequality.} Let $G = (V, E)$ be an undirected graph with a countable set of vertices, $V$, and edges $E$. $G$ is connected, and is locally finite (every vertex has finite degree). Let $P$ be a Markov chain transition matrix over $V$ (We will refer to the pair $(V,P)$ as a Markov chain). The simple random walk is a Markov chain where every node $x$ transitions uniformly to one of its neighbors in the graph, $N(x) := \{ x' \in V : (x,x') \in E\}$. In the following, we define some basic properties of Markov chains over graphs. 

For $n \in \N$, let $p^{(n)}(x,y)$ denote the probability the Markov chain $(V,P)$ initialized at $x$ is at $y$ at the $n$-th step.

\begin{definition}\label{defn:irreducible} (Irreducibility)
    Markov chain $(V, P)$ is irreducible if $\forall x,y \in V$, there is some $n \in \mathbb{N}$ such that $p^{(n)}(x,y) > 0$.
\end{definition}

For the remaining definitions, we will assume the Markov chain is irreducible. We define aperiodicity as follows.

\begin{definition}\label{defn:aperiodic} (Aperiodicity)
    A Markov chain $(V,P)$ is aperiodic if for any $x \in V$, $\textup{gcd}\{ n \in \N : p^{(n)}(x,x) > 0\} = 1$.
\end{definition}

Note that if a Markov chain is irreducible and for some $x \in V$, $\textup{gcd}\{ n \in \N : p^{(n)}(x,x) > 0\} = 1$, then for any $x' \in V$, $\textup{gcd}\{ n \in \N : p^{(n)}(x',x') > 0\} = 1$. 

A stationary measure $\pi : V \to (0,\infty)$ where $\pi^\top = \pi^\top P$. If $||\pi||_1 = 1$, then it is a stationary distribution. The following Theorem shows that irreducibility and aperiodicity imply mixing for finite Markov chains.

\begin{lemma}\label{lem:mixing-asymptotic} (Theorem 4.9 of \cite{peres})
    Suppose that Markov chain $(V,P)$ is finite ($|V| < \infty$), irreducible and aperiodic with stationary distribution $\pi$. Then there exist constants $C > 0$ and $\alpha \in (0,1)$ where

    \begin{align}
        \max_{x \in V} || p^{(n)}(x, \cdot) - \pi||_{\textup{TV}} \leq C \alpha^n
    \end{align}
\end{lemma}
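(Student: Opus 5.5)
We plan to follow the standard coupling-free ``Doeblin minorization'' argument. The first step is to show that there is an integer $r \geq 1$ such that $P^r[x,y] > 0$ for all $x,y \in V$. The remaining steps convert this into geometric decay.

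\emph{Step 1: a strictly positive power of $P$.} Fix $x \in V$ and let $T(x) := \{ n \geq 1 : p^{(n)}(x,x) > 0\}$. Since $p^{(m+n)}(x,x) \geq p^{(m)}(x,x)\, p^{(n)}(x,x)$, the set $T(x)$ is closed under addition. By aperiodicity (\Cref{defn:aperiodic}) its gcd is $1$. We then invoke the elementary number-theoretic fact that an additively closed subset of $\N$ with gcd $1$ contains all sufficiently large integers. The gcd is already attained by finitely many elements, and B\'ezout together with closure under addition gives every $n \geq n_0(x)$. Next, for $y \in V$, irreducibility (\Cref{defn:irreducible}) gives some $k_{xy}$ with $p^{(k_{xy})}(x,y) > 0$. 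Hence for all $n \geq n_0(x) + k_{xy}$ we have
\[
p^{(n)}(x,y) \geq p^{(n - k_{xy})}(x,x)\, p^{(k_{xy})}(x,y) > 0 .
\]
Because $|V| < \infty$, we can take $r := \max_{x,y}\,(n_0(x) + k_{xy})$, which yields $P^r[x,y] > 0$ for every pair. We expect this step to be the main obstacle, specifically the number-theoretic lemma; it is routine, but it is where aperiodicity is genuinely used.

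\emph{Step 2: minorization.} Let $\Pi$ be the $|V|\times|V|$ matrix each of whose rows is $\pi^\top$. Since $\pi(y) > 0$ for all $y$ and $V$ is finite, we can choose $\delta := \min_{x,y} P^r[x,y]/\pi(y) > 0$. If $\delta \geq 1$, then summing over $y$ forces equality, so $P^r = \Pi$ and we are done. Otherwise $\delta \in (0,1)$, and
\[
Q := \frac{P^r - \delta \Pi}{1 - \delta}
\]
is a stochastic matrix, so that $P^r = \delta \Pi + (1-\delta) Q$.

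\emph{Step 3: iterate and conclude.} We will prove by induction on $k$ that
\[
P^{rk} = \bigl(1 - (1-\delta)^k\bigr)\Pi + (1-\delta)^k Q^k .
\]
The induction step uses two identities: $M\Pi = \Pi$ for any stochastic $M$ (the rows of $M$ sum to $1$), and $\Pi P^r = \Pi$ (stationarity of $\pi$). Multiplying by $P^j$ for $0 \leq j < r$ and using $\Pi P^j = \Pi$ gives
\[
P^{rk + j} - \Pi = (1-\delta)^k \bigl(Q^k P^j - \Pi\bigr).
\]
Each row of $Q^kP^j - \Pi$ is a difference of two probability vectors, so its total variation norm is at most $1$. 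Therefore
\[
\max_x \|p^{(n)}(x,\cdot) - \pi\|_{\textup{TV}} \leq (1-\delta)^{\lfloor n/r \rfloor},
\]
and the claim follows with $\alpha := (1-\delta)^{1/r}$ and $C := (1-\delta)^{-1}$.
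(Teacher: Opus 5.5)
Your proposal is correct and is essentially the argument behind the cited result. The paper gives no proof of its own and defers to Theorem 4.9 of \cite{peres}, whose proof is this same Doeblin-type argument: a strictly positive power $P^r$ from irreducibility plus aperiodicity, the minorization $P^r \geq \delta \Pi$, the identity $P^{rk} = (1-\theta^k)\Pi + \theta^k Q^k$ with $\theta = 1-\delta$, and the choice $\alpha = \theta^{1/r}$, $C = 1/\theta$. Your case split on $\delta \geq 1$ is harmless, though you could avoid it by replacing $\delta$ with $\min(\delta, 1/2)$.
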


The following definitions introduce the notion of reversibility and conductance.

\begin{definition} (Reversibility)
    Let $V$ be an infinite, countable set. Markov chain $(V, P)$ is reversible if there exists a (stationary) measure $\pi : V \to (0, \infty)$ such that
    \begin{align}
        \pi(x) P[x,y] = \pi(y) P[y, x]\quad \forall x, y \in V
    \end{align}
\end{definition}

\begin{definition} (Conductance)
    Let $a(x,y) = \pi(x) P[x,y] = a(y,x)$ be the conductance between $x$ and $y$. $\pi(x)$ is the total conductance at $x$. For a Markov chain transition matrix $P$, denote its edge set $E(P)$ where:
    \begin{align}
        (x,y) \in E(P) \iff a(x,y) > 0
    \end{align}
    Note, a nearest-neighbor-type random walk is a Markov chain  $(V,P)$ where $E(P) = E$ (the edge-set of the graph $G$).
\end{definition}

Now, we define the Isoperimetric Inequality for a Markov chain over a graph, which will generalize \Cref{defn:IS}. Let $(V,P)$ be a reversible, irreducible Markov chain with total conductance $\pi(\cdot)$, edge conductance $a(\cdot, \cdot)$. Define the following quantities.

\begin{align}
    \pi(A) &:= \sum_{x \in A} \pi(x), \quad \forall A \subset V\\
    a(D) &:= \sum_{e \in D} a(e^-, e^+), \quad \forall D \subset E(P)\\
    \partial A &:= \textup{set of all edges in $E(P)$ with}\\
    &\textup{one endpoint in $A$ and the other in $V - A$}
\end{align}

For the definition of $\partial A$, note that the graphs we study in this work are all undirected. We now define the Isoperimetric Inequality.

\begin{definition}[Isoperimetric Inequality]\label{defn:IS-general}
Let $\Fscr : \R_+ \to \R_+$. A Markov chain $(V,P)$ satisfies the $\Fscr$-Isoperimetric Inequality, if there exists a constant $\kappa > 0$ such that for every finite $A \subset V$, $\Fscr(\pi(A)) \leq \kappa \, a(\partial A)$. 
\end{definition}

The particular function $\Fscr : \R_+ \to \R_+$ we care about in this work is $\Fscr (t) = t$, corresponding to the Strong Isoperimetric Inequality defined in \Cref{defn:IS}. However, for the proofs of some of the following Lemmas, it is useful to define, for $1 \leq d \leq \infty$, the $d$-dimensional Isoperimetric Inequality $\ISineq_d$ as \Cref{defn:IS-general} with $\Fscr (t) = t^{1 - 1/d}$. The strong Isoperimetric Inequality $\ISineq$ can be thought of as $\ISineq_d$ with $d = \infty$.

\paragraph{Hilbert Spaces and Norms.} With $(V,P)$ being an irreducible,  reversible Markov chain with stationary measure $\pi : V \to (0,\infty)$, let $\ell_0(V)$ denote the linear space of finitely supported functions on $V$ (i.e. functions that are non-zero only on a finite number of elements of $V$). For an (undirected) edge $e = (e^-, e^+) \in E$,  define the resistance as $r(e) := \frac{1}{a(e^-, e^+)}$. The following defines Hilbert spaces of functions, relevant to \Cref{thm:main}.

\begin{definition}
    Define the Hilbert space $\ell^2(V, \pi)$ with inner product
    \begin{align}
        \langle f, g\rangle_{\ell^2(V, \pi)} = \sum_{v \in V} f(v) g(v) \pi(v)
    \end{align}
    $\ell^2(V, \pi)$ is the set of functions $f : V \to \mathbb{R}$ with $\sum_{v \in V} |f(v)|^2 \pi(v) < \infty$.

    Define the Hilbert space $\ell^2(V, 1/\pi)$ with inner product
    \begin{align}
        \langle f, g\rangle_{\ell^2(V, 1/\pi)} = \sum_{v \in V} f(v) g(v) \frac{1}{\pi(v)}
    \end{align}
    $\ell^2(V, 1/\pi)$ is the set of functions $f : V \to \mathbb{R}$ with $\sum_{v \in V} |f(v)|^2 \frac{1}{\pi(v)} < \infty$. When clear from context, we will write $\ell^2(\pi)$ and $\ell^2(1/\pi)$ for $\ell^2(V, \pi)$ and $\ell^2(V, 1/\pi)$, respectively.

    Analogously, the Hilbert space $\ell^2(E, r)$ has inner product 
    
    \begin{align}
        \langle u, v \rangle_{\ell^2(E,r)} = \sum_{e \in E} u(e) v(e) r(e)
    \end{align}

    $\ell^2(E, r)$ is the set of functions $f : E \to \mathbb{R}$ with $\sum_{e \in E} |f(e)|^2 r(e) < \infty$.
\end{definition}

We introduce the following difference operator

\begin{align}
    \nabla : \ell^2(V, \pi) \to \ell^2(E, r), \quad \nabla f(e) = \frac{f(e^+) - f(e^-)}{r(e)}
\end{align}

Define the following norms:

\begin{definition}\label{defn:norms} (Norms)
    For a function $f : V \to \R$, define its Sobolev norm 
    \begin{align}
        S_P(f) := \frac{1}{2} \sum_{x,y \in V} |f(x) - f(y)| a(x,y)
    \end{align}
    and its norm in $\ell^p(V,\pi)$
    \begin{align}
        ||f||_p = \big(\sum_{x \in V} |f(x)|^p \pi(x) \big)^{1/p}
    \end{align}
    whenever these sums converge. Let $\mathcal{D(P)}$ be the space of all functions $f$ on $V$ such that $\nabla f \in \ell^2(E, r)$. If $f$ is such a function, define its Dirichlet norm
    \begin{align}
        D_P(f) &= \langle \nabla f, \nabla f\rangle_{\ell^2(E,r)} \\
        &= \sum_{e \in E} \frac{(f(e^+) - f(e^-))^2}{r(e)}\\
        &= \frac{1}{2}\sum_{x,y \in X} (f(x) - f(y))^2 a(x,y)
    \end{align}
\end{definition}

\paragraph{Preliminary Lemmas.} The next few Lemmas will be used to prove \Cref{thm:key}, which is key to proving the final \Cref{thm:main}.

\begin{lemma}\label{lem:IS-Sobolev} (Proposition 4.3 of \cite{woess})
    Markov chain $(V,P)$ satisfies $\ISineq_d$ ($1 \leq d \leq \infty$) if and only if 
    \begin{align}
        ||f||_{\frac{d}{d - 1}} \leq \kappa \, S_P(f) \quad \textup{for every $f \in \ell_0(V)$}
    \end{align}
\end{lemma}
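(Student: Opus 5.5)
The plan is to prove both directions with the standard level-set (co-area) argument. Throughout, set $q := \frac{d}{d-1}$, with $q = \infty$ when $d = 1$ and $q = 1$ when $d = \infty$. Then $\Fscr(t) = t^{1-1/d} = t^{1/q}$, and for any finite $A \subset V$ we have $\|\1_A\|_q = \pi(A)^{1/q}$; for $q = \infty$ this is $1$ when $A \neq \emptyset$.

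\emph{($\Leftarrow$)} Fix a finite $A \subset V$ and apply the functional inequality to $f = \1_A \in \ell_0(V)$. A pair $(x,y)$ contributes to $S_P(\1_A)$ exactly when one of $x,y$ lies in $A$ and the other does not. Each edge of $\partial A$ is counted twice in the double sum, and the factor $\frac{1}{2}$ cancels this. Hence $S_P(\1_A) = a(\partial A)$, and the inequality reads $\pi(A)^{1/q} \leq \kappa\, a(\partial A)$, which is exactly $\ISineq_d$. The case $A = \emptyset$ is trivial.

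\emph{($\Rightarrow$)} First reduce to $f \geq 0$. The reverse triangle inequality $\big||f(x)| - |f(y)|\big| \leq |f(x) - f(y)|$ gives $S_P(|f|) \leq S_P(f)$, while $\||f|\|_q = \|f\|_q$. So it suffices to treat nonnegative $f \in \ell_0(V)$. For $t \geq 0$ define the level sets $A_t := \{x : f(x) > t\}$. Each $A_t$ is finite because $f$ is finitely supported, and $A_t = \emptyset$ for $t \geq \max f$. The argument then has two steps.

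\begin{enumerate}
\item \textbf{Co-area formula.} For every pair $(x,y)$ we have $|f(x) - f(y)| = \int_0^\infty |\1_{A_t}(x) - \1_{A_t}(y)|\,dt$. Multiplying by $\frac{1}{2}a(x,y)$ and summing gives $S_P(f) = \int_0^\infty a(\partial A_t)\,dt$. Interchanging sum and integral is legitimate because only finitely many pairs with $a(x,y) > 0$ involve the support of $f$: local finiteness makes this set finite.
\item \textbf{Layer-cake and Minkowski.} Pointwise $f = \int_0^\infty \1_{A_t}\,dt$. Minkowski's integral inequality in $\ell^q(V,\pi)$, valid for all $1 \leq q \leq \infty$, gives
\begin{align}
\|f\|_q \leq \int_0^\infty \|\1_{A_t}\|_q\,dt = \int_0^\infty \Fscr(\pi(A_t))\,dt \leq \kappa \int_0^\infty a(\partial A_t)\,dt = \kappa\, S_P(f),
\end{align}
where the middle inequality applies $\ISineq_d$ to each finite $A_t$.
\end{enumerate}

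No step is a genuine obstacle; the only care needed is in the co-area identity. One must check the bookkeeping of the factor $\frac{1}{2}$ against the unordered edge set $\partial A_t$. One must also handle the endpoint cases: for $d = 1$, interpret $\|\cdot\|_\infty$ as a supremum, where the Minkowski step reduces to the triangle inequality for $\sup$; for $d = \infty$, $q = 1$, where Minkowski becomes equality by Fubini. Everything else is a finite sum, so there are no convergence issues.
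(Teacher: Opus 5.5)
Your proof is correct and is the standard argument behind Proposition 4.3 of Woess, which the paper cites without reproducing. Both routes test the inequality on indicators $\1_A$ for one direction; for the other they reduce to $f \geq 0$, use the co-area identity $S_P(f) = \int_0^\infty a(\partial A_t)\,dt$, and apply Minkowski's inequality to the layer-cake decomposition $f = \int_0^\infty \1_{A_t}\,dt$ (the sum--integral interchange needs only Tonelli, since all terms are nonnegative, so local finiteness is not required there).
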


\begin{proof}
    See Proposition 4.3 of \cite{woess} on page 40.
\end{proof}

\begin{lemma}\label{lem:dirichlet-norm} (Lemma 2.4 of \cite{woess})
If $f \in \mathcal{D}(P)$, then $\nabla^*(\nabla f) = - (P - I)f$    
\end{lemma}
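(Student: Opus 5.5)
The plan is to compute the adjoint $\nabla^*$ explicitly as a pointwise formula and then substitute $u=\nabla f$. Fix once and for all an orientation $e=(e^-,e^+)$ of every edge in $E(P)$; the final answer will not depend on this choice. Recall that $r(e)=1/a(e^-,e^+)$, so $\nabla f(e)=a(e^-,e^+)\,(f(e^+)-f(e^-))$.

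\emph{Step 1: the adjoint.} For $u\in\ell^2(E,r)$ and a finitely supported $g\in\ell_0(V)$, I will compute
\begin{align}
\langle \nabla g,u\rangle_{\ell^2(E,r)}=\sum_{e}(g(e^+)-g(e^-))\,u(e).
\end{align}
The factors $a(e)$ and $r(e)$ cancel. I then regroup this sum by vertices, which is legitimate because $g$ has finite support and $G$ is locally finite, so only finitely many terms are nonzero. The result is
\begin{align}
\langle \nabla g,u\rangle_{\ell^2(E,r)}=\sum_x g(x)\,\pi(x)\,(\nabla^*u)(x),
\end{align}
where
\begin{align}
(\nabla^*u)(x):=\frac{1}{\pi(x)}\Big(\sum_{e:\,e^+=x}u(e)-\sum_{e:\,e^-=x}u(e)\Big).
\end{align}
For each $x$ this is a finite sum, again by local finiteness, so it defines $\nabla^*u$ pointwise for every $u$. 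Self-loops contribute $0$ to $\nabla g$, so they can be discarded here.

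\emph{Step 2: plug in $u=\nabla f$.} Take $f\in\mathcal D(P)$, so that $u=\nabla f\in\ell^2(E,r)$. Fix a vertex $x$ and consider a neighbour $y$ joined to $x$ by the edge $e$.
\begin{itemize}
\item If $e^+=x$ and $e^-=y$, then $u(e)=a(x,y)(f(x)-f(y))$.
\item If $e^-=x$ and $e^+=y$, then $u(e)=a(x,y)(f(y)-f(x))$, and it enters the formula with a minus sign.
\end{itemize}
In both cases edge $e$ contributes $a(x,y)(f(x)-f(y))$. This shows the orientation is irrelevant, and
\begin{align}
(\nabla^*\nabla f)(x)=\frac{1}{\pi(x)}\sum_y a(x,y)\,(f(x)-f(y)).
\end{align}

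\emph{Step 3: conclude.} Use $a(x,y)=\pi(x)P[x,y]$ together with $\sum_y P[x,y]=1$, i.e. $\sum_y a(x,y)=\pi(x)$. This gives
\begin{align}
(\nabla^*\nabla f)(x)=f(x)-\sum_y P[x,y]f(y)=-\big((P-I)f\big)(x).
\end{align}

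The only delicate point is in Step 1: a general $f\in\mathcal D(P)$ need not lie in $\ell^2(V,\pi)$, so the adjoint cannot be invoked abstractly. I handle this by defining $\nabla^*$ through the explicit formula, checking the adjoint identity only against finitely supported test functions, and noting that every sum involved is finite by local finiteness. With that in place, the remaining steps are routine algebra.
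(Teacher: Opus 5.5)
Your proof is correct and is essentially the paper's approach: the paper gives no argument of its own beyond the pointer to Lemma 2.4 of \cite{woess}, and your computation (identify $\nabla^*$ by testing against finitely supported functions, substitute $u=\nabla f$, use $\sum_y a(x,y)=\pi(x)$) is the standard direct computation behind that lemma. One adjustment to your closing remark: since $\|\nabla g\|_{\ell^2(E,r)}^2=D_P(g)\le 2\|g\|_{\ell^2(\pi)}^2$, the Hilbert adjoint $\nabla^*$ exists on all of $\ell^2(E,r)$ and applies to $\nabla f$ whether or not $f\in\ell^2(V,\pi)$ (your Step 1 with $g=\mathbf{1}_{\{x\}}$ shows it is given by your pointwise formula); without local finiteness, it is the hypothesis $f\in\mathcal{D}(P)$ that makes $Pf$ well defined, via Cauchy--Schwarz: $\sum_y a(x,y)\,|f(x)-f(y)|\le\sqrt{2\pi(x)\,D_P(f)}<\infty$.
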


\begin{proof}
    See Lemma 2.4 of \cite{woess} on page 16.
\end{proof}

Note that $\ell_0(V) \subset \mathcal{D}(P)$.

Now, we recall some facts of linear operators. Let $Q = (q(i,j))_{i,j \in I}$ be a non-negative real matrix over a countable set. $Q$ acts on functions (vectors) $I \to \mathbb{R}$ in the usual way by matrix multiplication on the left. Define the operator norm:

\begin{definition} (Definition 2.7-1 of \cite{Kreyszig})
    Let $X, Y$ be normed spaces and $Q : D(Q) \to Y$ a linear operator, where $D(Q) \subset X$. $Q$ is bounded if there is a real number $c$ s.t.
    \begin{align}
        \forall x \in D(Q), ||Qx|| \leq c ||x||
    \end{align}
    Moreover, define $||Q|| = \sup_{x \in D(Q), x \neq 0} \frac{||Qx||}{||x||}$
\end{definition}

Define the adjoint of a linear operator:

\begin{lemma} (Theorem 3.9-2 of \cite{Kreyszig})\label{lem:op-norm-transpose}
    The Hilbert-adjoint operator $Q^*$ of a bounded linear operator $Q : H_1 \to H_2$ such that
    \begin{align}
        Q^* &: H_2 \to H_1\\
        \forall x \in H_1, y \in H_2, \langle Qx, y \rangle &= \langle x, Q^* y \rangle
    \end{align}

    is unique and is a bounded linear operator with norm:
    \begin{align}
        ||Q^*|| = ||Q||
    \end{align}
\end{lemma}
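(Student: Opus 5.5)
The plan is to follow the standard Hilbert-space argument: existence via the Riesz representation theorem, then uniqueness and linearity from nondegeneracy of the inner product, and finally the norm equality from a two-sided Cauchy--Schwarz estimate.

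\textbf{Existence.} Fix $y \in H_2$ and consider the functional $\ell_y : H_1 \to \R$ given by $\ell_y(x) := \langle Qx, y\rangle_{H_2}$.
\begin{itemize}
\item $\ell_y$ is linear because $Q$ is linear.
\item It is bounded, since by Cauchy--Schwarz $|\ell_y(x)| \leq \|Qx\|\,\|y\| \leq \|Q\|\,\|x\|\,\|y\|$.
\end{itemize}
By the Riesz representation theorem on $H_1$ there is a unique $z_y \in H_1$ with $\ell_y(x) = \langle x, z_y\rangle_{H_1}$ for all $x$, and moreover $\|z_y\| = \|\ell_y\| \leq \|Q\|\,\|y\|$. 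I define $Q^* y := z_y$, so the defining identity $\langle Qx, y\rangle = \langle x, Q^*y\rangle$ holds by construction.

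\textbf{Uniqueness and linearity.} For uniqueness, suppose $Q_1^*$ and $Q_2^*$ both satisfy the identity. Then $\langle x, Q_1^* y - Q_2^* y\rangle = 0$ for all $x$. Choosing $x = Q_1^*y - Q_2^*y$ gives $Q_1^* y = Q_2^* y$.

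Linearity of $Q^*$ follows the same way. For any $x$,
\[
\langle x, Q^*(\alpha y + \beta y')\rangle = \langle Qx, \alpha y + \beta y'\rangle = \langle x, \alpha Q^* y + \beta Q^* y'\rangle,
\]
and nondegeneracy of the inner product then yields $Q^*(\alpha y + \beta y') = \alpha Q^* y + \beta Q^* y'$.

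\textbf{Norm equality.} From the Riesz step, $\|Q^* y\| \leq \|Q\|\,\|y\|$ for every $y$, so $Q^*$ is bounded with $\|Q^*\| \leq \|Q\|$.

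For the reverse inequality, take any $x$ and apply the defining identity with $y = Qx$:
\[
\|Qx\|^2 = \langle Qx, Qx\rangle = \langle x, Q^*Qx\rangle \leq \|x\|\,\|Q^*\|\,\|Qx\|.
\]
If $Qx \neq 0$, dividing by $\|Qx\|$ gives $\|Qx\| \leq \|Q^*\|\,\|x\|$; if $Qx = 0$ the bound is trivial. Taking the supremum over $x$ yields $\|Q\| \leq \|Q^*\|$, hence $\|Q^*\| = \|Q\|$.

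\textbf{Main obstacle.} The only nontrivial input is the Riesz representation theorem. It genuinely uses completeness of $H_1$, through the existence of orthogonal projections onto the closed subspace $\ker \ell_y$. I would invoke it as a standard fact rather than reprove it.

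In our application, $H_1 = H_2 = \ell^2(V, 1/\pi)$ is complete, so this is legitimate. I would also note that in the real setting no complex conjugation issues arise.
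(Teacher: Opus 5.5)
Your proof is correct, and it is the standard argument. The paper gives no proof of its own and simply imports Theorem 3.9-2 of Kreyszig, whose proof uses the same ingredients: the Riesz representation, the Cauchy--Schwarz bound, and the test vector $y = Qx$ for the reverse inequality. Kreyszig packages these through the Riesz representation of the bounded sesquilinear form $h(y,x) = \langle y, Qx\rangle$, so that existence, uniqueness, linearity and $\|Q^*\| = \|h\| = \|Q\|$ come out together. Your version applies the functional form of Riesz for each fixed $y$ and then checks linearity and the two norm inequalities by hand; it is equivalent and slightly more elementary.
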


If $Q = Q^*$, it is self-adjoint, and its operator norm has the following Rayleigh characterization.

\begin{lemma}\label{lem:op-norm-rayleigh} (Theorem 9.2-2 of \cite{Kreyszig})
    For any bounded, self-adjoint $Q$ on a complex Hilbert space $H$, we have

    \begin{align}
        ||Q|| = \sup_{||x|| = 1}|\langle Qx, x\rangle |
    \end{align}
\end{lemma}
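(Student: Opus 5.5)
The plan is the standard two-sided estimate. Write $K := \sup_{\|x\| = 1} |\langle Qx, x\rangle|$; I will show $K \leq \|Q\|$ and $\|Q\| \leq K$.

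\emph{The easy direction, $K \leq \|Q\|$.} For $\|x\| = 1$, Cauchy--Schwarz gives $|\langle Qx, x\rangle| \leq \|Qx\|\,\|x\| \leq \|Q\|$. Taking the supremum over unit $x$ yields $K \leq \|Q\|$.

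\emph{Preliminaries for the reverse direction.} First, self-adjointness makes $\langle Qz, z\rangle = \langle z, Qz\rangle = \overline{\langle Qz, z\rangle}$, so this quantity is real for every $z$. Second, by homogeneity, $|\langle Qz, z\rangle| \leq K\|z\|^2$ for all $z \in H$.

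\emph{The reverse direction, $\|Q\| \leq K$.} It suffices to show $\|Qx\| \leq K\|x\|$ for every $x$ with $Qx \neq 0$; the case $Qx = 0$ is trivial. I will use a polarization trick. For a parameter $\lambda > 0$, set
\begin{align}
z_1 = \lambda x + \lambda^{-1} Qx, \qquad z_2 = \lambda x - \lambda^{-1} Qx .
\end{align}
Expand $\langle Qz_1, z_1\rangle - \langle Qz_2, z_2\rangle$. The diagonal terms cancel, leaving $2\big(\langle Qx, Qx\rangle + \langle Q^2 x, x\rangle\big)$. Self-adjointness turns $\langle Q^2 x, x\rangle$ into $\|Qx\|^2$, so the difference equals $4\|Qx\|^2$.

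On the other hand, the bound $|\langle Qz, z\rangle| \leq K\|z\|^2$ and the parallelogram law give
\begin{align}
4\|Qx\|^2 \leq K\big(\|z_1\|^2 + \|z_2\|^2\big) = 2K\big(\lambda^2 \|x\|^2 + \lambda^{-2}\|Qx\|^2\big).
\end{align}
Now choose $\lambda^2 = \|Qx\|/\|x\|$. The right side becomes $4K\|x\|\,\|Qx\|$, and dividing by $4\|Qx\|$ gives $\|Qx\| \leq K\|x\|$. Hence $\|Q\| \leq K$.

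The main obstacle is this reverse inequality. The key points are the choice of $z_1, z_2$ and the use of self-adjointness, both to make the cross terms combine into $4\|Qx\|^2$ and to guarantee that the quadratic forms are real. The remaining steps are routine expansion.
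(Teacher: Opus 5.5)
Your proof is correct and is essentially the argument of Kreyszig's Theorem 9.2-2, which the paper cites in place of a proof: for $\|x\| = 1$ your choice $\lambda^2 = \|Qx\|$ makes $z_{1,2} = v \pm w$ with $v = \|Qx\|^{1/2}x$ and $w = \|Qx\|^{-1/2}Qx$, exactly Kreyszig's vectors. Nothing in your argument uses complex scalars, and the realness of $\langle Qz,z\rangle$ is never needed since you only bound absolute values, so it applies verbatim to the real space $\ell^2(\pi)$ where the paper actually invokes the lemma.
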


\begin{proof}
    See Theorem 9.2-2 of \cite{Kreyszig}.
\end{proof}

\paragraph{Key Theorem.} \Cref{thm:key} is the key theorem which enables us to prove \Cref{thm:main}.

\begin{theorem}\label{thm:key}(IS implies Spectral Expansion)
\end{theorem}
    Consider an irreducible, reversible Markov chain $(V, P)$ that satisfies the strong Isoperimetric inequality with Isoperimetric constant $\kappa$:

    \begin{align}
         \textup{For every finite } A \subset V, \quad \pi(A) \leq \kappa \cdot a(\partial A)
    \end{align}

    Then, the operator norm of $P$ in $\ell^2(V,\pi)$ is at most $1 - \frac{1}{2\kappa^2}$. That is, $||P||_{\ell^2(V, \pi) \to \ell^2(V, \pi)} \leq 1 - \frac{1}{2\kappa^2}$.
    
\begin{proof}
    This follows from the proof of Theorem 10.3 of \cite{woess}, but we write it out here for completeness.

    By \Cref{lem:IS-Sobolev}, IS implies the Sobolev inequality $||f||_1 \leq \kappa \, S_P(f)$ for all $f \in \ell_0(V)$.  First, we show that this implies the Dirichlet inequality, $||f||_2^2 \leq 2\kappa^2 D_P(f)$, because:

    \begin{align}
        ||f||_2^4 &= ||f^2||_1^2\\
        &\leq \kappa^2 S_P(f^2)^2\\
        &\leq \kappa^2 \big( \frac{1}{2} \sum_{x,y \in V} a(x,y)  |f(x) - f(y)| (|f(x)| + |f(y)|)\big)^2\\
        &\leq \kappa^2 D_P(f) \sum_{x,y \in V} a(x,y) (|f(x)| + |f(y)|)^2/2\\
        &\leq \kappa^2 D_P(f)\sum_{x,y \in V} a(x,y) (f(x)^2 + f(y)^2)\\
        &\leq 2\kappa^2 D_P(f) ||f||_2^2\\
        \implies ||f||_2^2 &\leq 2\kappa^2 D_P(f)
    \end{align}

By \Cref{lem:dirichlet-norm},

\begin{align}
    D_P(f) &= \langle f, (I - P)f \rangle_{\ell^2(\pi)}\\
    &= ||f||_2^2 - \langle f, Pf \rangle_{\ell^2(\pi)} \quad \textup{for $f \in \ell_0(V)$}\\
    \implies \langle f, Pf \rangle_{\ell^2(\pi)} &= ||f||_2^2 - D_P(f)\\
    &\leq (1 - \frac{1}{2\kappa^2})||f||^2_2\label{eq:rayleigh-upper-bound-for-P}
\end{align}

Where the $||\cdot||_2^2 = ||\cdot||^2_{\ell^2(\pi)}$ is the weighted 2-norm defined in \Cref{defn:norms}. By \Cref{lem:op-norm-rayleigh},

\begin{align}
    ||P||_{\ell^2(\pi) \to \ell^2(\pi)} &= \sup_{f \in \ell^2( \pi), f \neq 0} \frac{|\langle f, Pf\rangle_{\ell^2(\pi)}|}{||f||_2^2}\\
    &\leq \sup_{f \in \ell^2(\pi), f \neq 0} \frac{|\sum_{x,y \in V} f(x)f(y)\pi(x) P[x,y]|}{||f||_2^2}\\
    &\leq \sup_{f \in \ell^2( \pi), f \neq 0} \frac{\sum_{x,y \in V} \, \pi(x) P[x,y] \,|f(x)| |f(y)|}{||f||_2^2}\\ 
    &=  \sup_{f \in \ell^2(\pi), f \neq 0} \frac{\langle |f|, P|f|\rangle_{\ell^2(\pi)}}{||f||_2^2}\\
    &=  \sup_{f \in \ell^2( \pi), f \neq 0} \frac{\langle f, Pf \rangle_{\ell^2(\pi)}}{||f||_2^2}\\
    &=  \sup_{f \in \ell_0(V), f \neq 0} \frac{\langle f, Pf\rangle_{\ell^2(\pi)}}{||f||_2^2} \tag{Since $\ell_0(V)$ is dense in $\ell^2(V, \pi)$}\\
    &\leq 1 - \frac{1}{2\kappa^2} \tag{By \Cref{eq:rayleigh-upper-bound-for-P}}
\end{align}

We conclude that $||P||_{\ell^2(\pi) \to \ell^2(\pi)} \leq 1 - \frac{1}{2\kappa^2}$, where $||P||_{\ell^2(\pi) \to \ell^2(\pi)}$ denotes the operator norm of $P$ over $\ell^2(\pi)$.
    
\end{proof}

The following Corollary establishes a one-step contraction due to the random walk according to $P$ on a graph satisfying $\ISineq$.

\begin{corollary}\label{lem:rw}
    Let $G = (V,E)$ and let $(V,P)$ be a reversible random walk on $G$, with stationary measure $\pi$, satisfying Assumption \ref{assumption:graph} with Isoperimetric constant $\kappa > 0$. Then, for any $D \in \ell^2(1/\pi)$, we have

    \begin{align}
        ||P^\top D||_{\ell^2(1/\pi)} \leq (1 - \frac{1}{2\kappa^2})||D||_{\ell^2(1/\pi)} 
    \end{align}
    

\end{corollary}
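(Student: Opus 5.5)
The plan is to transfer the operator-norm bound of \Cref{thm:key} from $\ell^2(V,\pi)$ to $\ell^2(V,1/\pi)$ through the isometry $D \mapsto D/\pi$. Concretely, I would define $U : \ell^2(1/\pi) \to \ell^2(\pi)$ by $(UD)(x) := D(x)/\pi(x)$. This map is an isometric isomorphism, since
\[
\sum_x (D(x)/\pi(x))^2 \pi(x) = \sum_x D(x)^2/\pi(x).
\]
Its inverse is $f \mapsto \pi f$, meaning $(U^{-1}f)(x) = \pi(x) f(x)$.

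Next I would compute $U P^\top U^{-1}$ using reversibility. For $f \in \ell^2(\pi)$,
\[
(P^\top \pi f)(y) = \sum_x P[x,y]\pi(x) f(x) = \sum_x \pi(y) P[y,x] f(x) = \pi(y)\,(Pf)(y).
\]
Hence $U P^\top U^{-1} f = Pf$, so $P^\top = U^{-1} P U$ as operators on $\ell^2(1/\pi)$.

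With this conjugation in hand, take any $D \in \ell^2(1/\pi)$ and set $f := UD \in \ell^2(\pi)$. Then
\[
\|P^\top D\|_{\ell^2(1/\pi)} = \|U^{-1} P f\|_{\ell^2(1/\pi)} = \|Pf\|_{\ell^2(\pi)} \le \Bigl(1-\tfrac{1}{2\kappa^2}\Bigr)\|f\|_{\ell^2(\pi)} = \Bigl(1-\tfrac{1}{2\kappa^2}\Bigr)\|D\|_{\ell^2(1/\pi)}.
\]
The inequality is \Cref{thm:key}, whose hypotheses hold here: Assumption \ref{assumption:graph} gives $\ISineq$, and the chain is irreducible (since $G$ is connected and $P$ is nearest-neighbor type) and reversible.

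The main obstacle is a technical subtlety about well-definedness in infinite dimensions. The sum defining $P^\top D$ must converge, and $P$ must act as a bounded operator on all of $\ell^2(\pi)$, not merely on $\ell_0(V)$. The first point follows from the identity above, because it expresses $P^\top D$ pointwise as $\pi \cdot Pf$. The rows of $P$ are finitely supported by local finiteness, so $Pf$ is defined pointwise. Its boundedness on all of $\ell^2(\pi)$ is exactly what \Cref{thm:key} establishes, via the density argument in that proof. The interchange of summation in the reversibility computation is also justified, since each sum has only finitely many nonzero terms.
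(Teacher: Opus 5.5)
Your proposal is correct and is essentially the paper's proof: the paper likewise uses reversibility to write $P^\top = \diag(\pi)P\diag(\pi)^{-1}$, uses the isometry $x \mapsto \diag(\pi)^{-1}x$ from $\ell^2(1/\pi)$ onto $\ell^2(\pi)$ to get $\|P^\top\|_{\ell^2(1/\pi)\to\ell^2(1/\pi)} = \|P\|_{\ell^2(\pi)\to\ell^2(\pi)}$, and then applies \Cref{thm:key}. One small correction to your closing remark: the density and Rayleigh-quotient steps in the proof of \Cref{thm:key} presuppose that $P$ is bounded on $\ell^2(\pi)$ rather than establishing it, but this follows immediately from Jensen and stationarity, $\sum_x \pi(x)(Pf)(x)^2 \le \sum_{x,y}\pi(x)P[x,y]f(y)^2 = \|f\|_{\ell^2(\pi)}^2$.
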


\begin{proof}

     The core claim is that $||P^\top||_{\ell^2(1/\pi) \to \ell^2(1/\pi)} = ||P||_{\ell^2(\pi) \to \ell^2(\pi)}$. To see this, because $P$ is reversible with stationary measure $\pi$, we have

     \begin{align}
         \diag(\pi) P &= P^\top \diag(\pi)\\
         P^\top &= \diag(\pi) P\diag(\pi)^{-1}
     \end{align}

     Thus,

     \begin{align}
         ||P^\top||_{\ell^2(1/\pi) \to \ell^2(1/\pi)} &= \sup_{x \,:\, ||x||_{\ell^2(1/\pi)} = 1} ||P^\top x||_{\ell^2(1/\pi)}\\
         &= \sup_{x \,:\, ||x||_{\ell^2(1/\pi)} = 1} ||\diag(\pi) P\diag(\pi)^{-1} x||_{\ell^2(1/\pi)}\\
         &= \sup_{x \,:\, ||x||_{\ell^2(1/\pi)} = 1} ||P\diag(\pi)^{-1} x||_{\ell^2(\pi)}\\
         &= \sup_{x' \,:\, ||x'||_{\ell^2(\pi)} = 1} ||Px'||_{\ell^2(\pi)} \label{eq:change-of-space}\\
         &= ||P||_{\ell^2(\pi) \to \ell^2(\pi)}
     \end{align}

     \Cref{eq:change-of-space} follows from the fact that for any $x \in \ell^2(1/\pi), x' = \diag(\pi)^{-1}x \in \ell^2(\pi)$. Conversely, for any $x' \in \ell^2(\pi)$, $x = \diag(\pi) x' \in \ell^2(1/\pi)$.
     
     Finally, by \Cref{thm:key}, $||P||_{\ell^2(\pi) \to \ell^2(\pi)} \leq 1 - \frac{1}{2\kappa^2}$. Since $||P||_{\ell^2(\pi) \to \ell^2(\pi)} = ||P^\top||_{\ell^2(1/\pi) \to \ell^2(1/\pi)}$, then for any $D \in \ell^2(1/\pi)$, we have

    \begin{align}
        ||P^\top D||_{\ell^2(1/\pi)} \leq ||P^\top||_{\ell^2(1/\pi) \to \ell^2(1/\pi)}\, ||D||_{\ell^2(1/\pi)}  \leq  (1 - \frac{1}{2\kappa^2})||D||_{\ell^2(1/\pi)} 
    \end{align}
\end{proof}

\subsection{Proof of \Cref{thm:main}}\label{appen:pf-of-thm-42}

\expansionthm*

\begin{proof}
    We reason inductively, by analyzing the behavior of \Cref{alg:main} each iteration. 
    
    First, suppose at iteration $t \in \N$,  \Cref{alg:main} has i.i.d. sample access to $D_t \in \Delta(V)$, and trains prover $f_t \in \Fprov$ to where $\Pro_{x \sim D_t} [\Succ(x, f_t)  \geq p] \geq 1 - \eps$ (an event which occurs with probability at least $1 - \delta$ according to \Cref{assumption:a2-rt}). \Cref{alg:main} (using the subroutine, \Cref{alg:pass-rate-sampler}) generates $C_w\# D_t$ from $D_t$ and then generates $D_{t + 1}$ from $C_w\# D_t$ by filtering out theorems whose pass rate is below $p/2$. By \Cref{lem:rw},

    \begin{align}
        ||C_w\# D_t||_{\ell^2(1/\pi)}^2  := ||(P^\top) D_t||_{\ell^2(1/\pi)}^2 \leq (1 - \frac{1}{2\kappa^2})^2 ||D_t||^2_{\ell^2(1/\pi)}\label{eq:one-step-rw-ent}
    \end{align}

    Under the event that $\Pro_{x \sim D_t} [\Succ(x, f_t)  \geq p] \geq 1 - \eps$, let

    \begin{align}
        \alpha_t &:= \Pro_{x \sim C_w\# D_t} [\Succ(x, f_t)  \geq p/2] \\
        &\geq \Pro_{x \sim D_t} [\Succ(x, f_t)  \geq p]\\
        &\geq 1 - \eps
    \end{align}

    Let $D_{t + 1}$ be a new distribution whose sampling process involves sampling $x \sim C_w\# D_t$ until $\Succ(x, f_t) \geq p/2$ and then returning such an $x$ (i.e. the rejection sampling step in \Cref{alg:pass-rate-sampler}). This process provides i.i.d. sample access to $D_{t + 1}$ supported on the $\alpha_t$-fraction of theorems in $C_w\# D_t$ where  $\Succ(x, f_t)  \geq p/2$. The probability of any theorem under $D_{t + 1}$ is equal to its respective probability under $C_w\# D_t$ multiplied by $\frac{1}{\alpha_t} \leq \frac{1}{1 - \eps}$. Since for $f \in \ell^2(1/\pi)$, $||f||_{\ell^2(1/\pi)}^2 = \sum_{v \in V}  |f(v)|^2 \frac{1}{\pi(v)}$, then 

    \begin{align}
        ||D_{t + 1}||^2_{\ell^2(1/\pi)} \leq (\frac{1}{1 - \eps})^2||C_w\# D_t||^2_{\ell^2(1/\pi)}
    \end{align}
    
    Combining with \Cref{eq:one-step-rw-ent}, we conclude 

    \begin{align}
        ||D_{t + 1}||_{\ell^2(1/\pi)}^2  \leq (\frac{1 - \frac{1}{2\kappa^2}}{1 - \eps})^2 ||D_t||_{\ell^2(1/\pi)}^2\label{eq:per-it-guarantee}
    \end{align}

    We now describe how these per-iteration guarantees can be put together inductively. We claim the following iteration invariant: for every iteration $t \in \N$, conditional on the event that all previous iterations did not fail, then \Cref{alg:main} has access to a theorem distribution $D_t$ satisfying the premise of \Cref{assumption:a2-rt}, that $\forall x \in \supp(D_t),$ the proof of $x$ is known. Moreover, the $t$th iteration fails with probability at most $\delta$.
    
    As the base case, \Cref{assumption:a1-js} ensures \Cref{alg:main} has i.i.d. sample access to cold-start theorem distribution $D_0$, and that all proofs of theorems in $\supp(D_0)$ are known. The iteration invariant holds. 
    
    Regarding the inductive step, suppose at iteration $t$, the algorithm has i.i.d. sample access to $D_t$, where for all $x \in \supp(D_t)$, the proof of $x$ is known. Suppose also that all previous iterations $\{0 \} \cup [t - 1]$ did not fail. By \Cref{assumption:a2-rt}, \Cref{alg:main} can train a new prover $f_{t}$ using i.i.d. sample access to $D_{t}$, which guarantees that $\Pro_{x \sim D_{t}} [\Succ (x, f_{t}) \geq p] \geq 1 - \eps$ with failure probability at most $\delta$. By \Cref{assumption:a3-ood}, $\Pro_{x \sim D_{t}} [\Succ (x, f_{t}) \geq p] \geq 1 - \eps$ implies that $\Pro_{x \sim C_w \# D_{t}} [\Succ (x, f_{t}) \geq p/2] \geq 1 - \eps$. The rejection sampling procedure in \Cref{alg:pass-rate-sampler}, which generates $D_{t + 1}$ from $C_w\# D_t$, ensures that $\forall x \in \supp(D_{t + 1})$, $\Succ(x, f_t) \geq p/2$, so that it is efficient to sample the proof of any theorem in $\supp(D_{t + 1})$ from $f_t$. Thus, the proof of every theorem in $\supp(D_{t + 1})$ is known, and $D_{t + 1}$ satisfies the premise of \Cref{assumption:a2-rt}. This iteration fails with probability at most $\delta$, which occurs if step that trains $f_t$ from $D_t$ fails.
    
    Thus, by induction over the iteration number $t \in \N$, we conclude from \Cref{eq:per-it-guarantee} that for every $t \in \N$, conditional on the event that every iteration $i \in \{0 \} \cup [t - 1]$ does not fail (which occurs with probability at least $1 - t \, \delta$ since each iteration fails with probability at most $\delta$), then

    \begin{align}
        ||D_{t}||^2_{\ell^2(1/\pi)} \leq (\frac{1 - \frac{1}{2\kappa^2}}{1 - \eps})^{2t} ||D_{0}||^2_{\ell^2(1/\pi)}\label{eq:Dt-vs-D0}
    \end{align}
    
    Finally, \Cref{alg:main} trains the final prover $f_t \in \Femb$ on $D_t$ at iteration $t$. With failure probability at most $\delta$, we have $D_t(\{ x \in V : \Succ(x,f_t) \geq p\}) \geq 1 - \eps$.
    
    At iteration $t\in\N$, suppose none of the training guarantees of the $t + 1$ provers $\{ f_0, f_1, \ldots, f_t\}$ fail, which occurs with probability at least $1 - (t + 1) \, \delta$. Applying \Cref{lem:ent-to-cov-num} with $D' = D_t$, $A' = \{ x \in V : \Succ(x,f_t) \geq p\}$ and the guarantee that $D_t(\{ x \in V : \Succ(x,f_t) \geq p\}) \geq 1 - \eps$, we have

    \begin{align}
        |\{ x \in V : \Succ(x,f_t) \geq p\}| \geq \frac{(1 - \eps)^2}{\sup_{z \in V} \pi(z)} \cdot \frac{1}{ ||D_{t}||^2_{\ell^2(1/\pi)}} 
    \end{align}
    
    By \Cref{eq:Dt-vs-D0}, the conclusion follows.

    \begin{align}
        | \{ x \in V : \Succ(x, f_t) \geq p\} | \geq \frac{(1 - \eps)^2}{(\sup_{z \in V} \pi(z)) \cdot ||D_0||_{\ell^2(1/\pi)}^2 } \cdot (\frac{1 - \eps}{1 - \frac{1}{2\kappa^2}})^{2t}.
    \end{align}

    
    Note also that since  $\Pro_{x \sim C_w\# D_t} [\Succ(x, f_t)  \geq p/2] \geq 1 - \eps$ at iteration $t$, the probability of rejection each iteration is at most $\eps$, so that the rejection sampling procedure per iteration is fairly efficient. However, the probability of rejection compounds over iterations, so that the expected number of samples from $D_0$ needed to sample from $D_t$ is proportional to $(\frac{1}{1 - \eps})^t$.

\end{proof}

%% file: sections/appendix/diffusion-and-cont-learning.tex
\section{Theorems about Diffusion Similarity and Contrastive Learning}\label{appen:cont-learning-intro}

The purpose of this section is to show the connection between contrastive learning and diffusion similarity, and show how an embedding model learned by minimizing a contrastive loss can be used to compute the diffusion similarity (and diffusion distance) between points on a graph. Our main results, in \Cref{appen:contembd}, include \Cref{thm:contembd}, an infinite-sample and large embedding dimension learning result, and \Cref{thm:e2e-contrastive-learning-result}, a finite-sample and small embedding dimension learning result.

\subsection{Preliminary Lemmas}
\subsubsection{Setup}

Contrastive learning defines an algorithm for learning an embedding model that maps elements from a domain $V$ (e.g. the set of real-world images) into Euclidean space $\R^k$ for some $k \in \N$. It is most notably used for training neural networks as embedding models, and we let $\Femb$ denote the hypothesis class of models $h_\theta : V \to \R^k$ which we assume contains the ground-truth embedding model that we must learn. The core idea of contrastive learning is to define two distributions of pairs of elements in $V$: a positive pair distribution $D_+ \in \Delta(V \times V)$ , and a negative pair distribution $D_- \in \Delta(V \times V)$. Given i.i.d. samples from both distributions, the trained model is encouraged to embed two elements sampled from a positive pair $(x,y) \sim D_+$ close together in Euclidean space, while embedding two elements sampled from a negative pair $(x,x') \sim D_-$ far apart in Euclidean space. These two distributions of pairs of elements are used to define a contrastive loss, and training the model amounts to minimizing an empirical estimate of the contrastive loss from finite samples.

In fact, contrastive learning can be described mathematically through the lens of matrix factorization, as articulated in \cite{haochen2022provableguaranteesselfsuperviseddeep}. The benefit of this viewpoint of contrastive learning is that it enables us to provably characterize the learned embeddings and the sample complexity of learning them. In the following, we will describe and use the mathematical setup for contrastive learning of \cite{haochen2022provableguaranteesselfsuperviseddeep}. 

Let the domain $V$ consist of $N \in \N$ elements, where $N$ is large but finite. We will identify $V$ with $[N]$ here for simplicity. Define $\mu \in \Delta([N])$ as the \textit{base distribution} and $Q \in \R^{N \times N}$ as an augmentation scheme. Alternatively, overloading notation, we can view $Q : [N] \to \Delta(N)$ as a mapping from an element $x$ to a distribution over augmentations of $x$, $Q(x)$. For every $x \in [N]$, this distribution of augmentations, $Q(x)$, equals the $x$-th row of the matrix-representation of $Q$. 

We will first introduce the matrix factorization viewpoint of contrastive learning, and we will from this how to define the positive and negative pair distributions $D_+, D_- \in \Delta([N]\times [N])$ from $\mu$ and $Q$ and the final learning algorithm. Let $A \in \R^{N \times N}$ and its normalization $\overline{A}$  be positive semi-definite (p.s.d.) matrices of the form,

\begin{align}
    A &= Q^\top \,\diag(\mu) \,Q\\
    D &= \diag(A \1)\\
    \overline{A} &= D^{-1/2} A D^{-1/2}
\end{align}

For rank-$k$ factor $F_\theta \in \R^{N \times k}$, let $\Lmf(F_\theta) := ||\overline{A} - F_\theta F_\theta^\top||_F^2$ be the matrix factorization loss for matrix $\overline{A} \in \R^{N \times N}$. Let $\Femb$ be the class of embedding models. We will parameterize $F_\theta$ by an embedding model $h_\theta : [N] \to \R^k$. Towards this, if we let $\sigma_{xy}$ denote the $(x,y)$-th entry of $A$,

\begin{align}
    A &= [\sigma_{xy}]_{x,y \in [N]}\\
    \textup{With } \sigma_x &:= \sum_{y \in [N]} \sigma_{xy}\label{eq:wx-defn}
\end{align}

and we parameterize $F_\theta \in \R^{N \times k}$ where $\forall x \in [N], F_\theta[x] = \sqrt{\sigma_x} h_\theta(x)^\top$ for $h_\theta : [N] \to \R^k$, then $\Lmf(F_\theta) = \Lmf(h_\theta)$ (overloading notation) is expressible in the following form. 

\begin{align}
    \Lmf(h_\theta) &:= ||\overline{A} - F_\theta F_\theta^\top||_F^2\\
    &= \sum_{x,y \in [N]} \Big(\frac{\sigma_{xy}}{\sqrt{\sigma_x} \sqrt{\sigma_y}} - \sqrt{\sigma_x}\sqrt{\sigma_y}h_\theta(x)^\top h_\theta(y) \Big)^2\\
    &= \textup{const} + \sum_{x,y \in [N]}\sigma_x \sigma_y (h_\theta(x)^\top h_\theta(y))^2 - 2\sum_{x,y \in [N]}\sigma_{xy} h_\theta(x)^\top h_\theta(y)\label{eq:pos-neg-pair-prob}\\
    &= \textup{const} + \ex_{x, x^-} (h_\theta(x)^\top h_\theta(x^-))^2 - 2 \cdot \ex_{x, x^+}h_\theta(x)^\top h_\theta(x^+)\\
    &= \textup{const} + \Lc(h_\theta)\label{eq:cont-mf-equiv}\\
    \textup{Where } \Lc &: \F_V^{(k)} \to \R\\
    \Lc(h_\theta) &:= \ex_{x, x^-} (h_\theta(x)^\top h_\theta(x^-))^2 - 2 \cdot \ex_{x, x^+}h_\theta(x)^\top h_\theta(x^+)\label{eq:cont-loss-defn}
\end{align}

Where $\F_{V}^{(k)}$ denotes the class of all functions mapping $V$ to $\R^{k}$. Amazingly, $\Lc(h_\theta) := \ex_{x, x^-} (h_\theta(x)^\top h_\theta(x^-))^2 - 2 \cdot \ex_{x, x^+}h_\theta(x)^\top h_\theta(x^+)$ is one version of the (population) contrastive loss function, similar to what is used empirically.

\Cref{eq:pos-neg-pair-prob} in the derivation above naturally defines the positive pair distribution as where $(x,y) \in [N]\times[N]$ has marginal probability of $\sigma_{xy}$ of being sampled. This corresponds to the sampling process of first sampling an element from the base distribution first, $z \sim \mu$, and then sampling the pair elements as independent augmentations $x \sim Q(z)$, $y\sim Q(z)$, returning $(x,y)$. Similarly, since a negative pair $(x,y) \in [N]\times[N]$ has marginal probability of $\sigma_x \sigma_y$ in \Cref{eq:pos-neg-pair-prob}, the natural negative pair distribution corresponds to first sampling $z_x, z_y \sim \mu$ independently, and then sampling $x \sim Q(z_x), y \sim Q(z_y)$, returning $(x,y)$.

Crucially, since $\Lc(h_\theta)$ and $\Lmf(h_\theta)$ differ by only an additive constant independent of $h_\theta$, we can understand the global minimizers of $\Lc$ via the global minimizers of $\Lmf$. In particular, the Eckart-Young-Mirsky Theorem  characterizes the global minimizers of $\Lmf$ via the spectrum of $\overline{A}$ \cite{eckartyoung}. In the case $\overline{A} = U \Sigma U^\top$ is p.s.d., then the global minimizer $F_k \in \R^{N \times k}$ is of the form

\begin{align}
    F_k = U_k \Sigma_k^{1/2}R
\end{align}

Where $R \in O^{k \times k}$ is any $(k \times k)$ orthogonal matrix.

In addition to having an analytic characterization of the global minima, one other useful property of $\Lc$ is that it is written as an expectation, and so can be approximated by an empirical risk. For a dataset $\{ \overline{x}_i\}_{i \in [n]} \subset V$, define the empirical contrastive loss as:

\begin{align}
    &\Lchat(h_{\theta'}) :=\\
    &-\frac{2}{n} \sum_{i \in [n]} \ex_{\substack{x \sim Q(\cdot | \overline{x}_i),\\ x^+ \sim Q(\cdot | \overline{x}_i)}} (h_{\theta'}(x)^\top h_{\theta'}(x^+)) + \frac{1}{n(n - 1)} \sum_{i,j \in [n], i \neq j}\ex_{\substack{x \sim Q(\cdot | \overline{x}_i),\\ x^- \sim Q(\cdot | \overline{x}_j)}} (h_{\theta'}(x)^\top h_{\theta'}(x^-))^2 \label{eq:emp-cont-loss-defn}
\end{align}

Claim D.2 of \cite{haochen2022provableguaranteesselfsuperviseddeep} shows that $\ex_{\{ \overline{x}_i\}_{i \in [n]} \sim \mu} \Lchat(h_{\theta}) = \Lc(h_\theta)$, i.e. that $\Lchat$ is unbiased. Define the empirical risk minimizer (ERM) as follows.

\ermdefnappen



\Cref{defn:contrastive-learning} describes a procedure to use empirical risk minimization to learn an embedding model from finite samples from the base distribution.

\begin{algorithm}[t]
\caption{Learning Embeddings via ERM}
\label{defn:contrastive-learning}
\begin{algorithmic}[1]
\Require i.i.d. sample access to base distribution $D \in \Delta(V)$, augmentation function $Q:V\to\Delta(V)$, parameters $n,k\in\N$, and embedding model class $\Femb$ of models that embed into $\R^k$
\Ensure An embedding model $h_\theta \in \Femb$

\State Draw i.i.d. samples $\bar{x}_1,\ldots,\bar{x}_n \sim D$
\State Define the empirical contrastive loss
\[
\begin{aligned}
\Lchat(h_{\theta'})
:={}&
-\frac{2}{n}\sum_{i\in[n]}
\ex_{\substack{x\sim Q(\cdot\mid \bar{x}_i)\\
x^+\sim Q(\cdot\mid \bar{x}_i)}}
\!\left[h_{\theta'}(x)^\top h_{\theta'}(x^+)\right] \\
&+
\frac{1}{n(n-1)}
\sum_{\substack{i,j\in[n]\\ i\neq j}}
\ex_{\substack{x\sim Q(\cdot\mid \bar{x}_i)\\
x^-\sim Q(\cdot\mid \bar{x}_j)}}
\!\left[\left(h_{\theta'}(x)^\top h_{\theta'}(x^-)\right)^2\right].
\end{aligned}
\]
\State Compute $h_\theta \gets \arg\min_{h_{\theta'}\in\Femb}\Lchat(h_{\theta'})$
\State \Return $h_\theta$
\end{algorithmic}
\end{algorithm}


    


As we will later see in  \Cref{lem:haochen-thm4-1} and \Cref{lem:contrastive-learning}, under some assumptions, the empirical risk minimizer (ERM) will be also approximately minimize the population contrastive loss. The assumptions necessary for that result are expressed in terms of Rademacher complexity, a core concept of Statistical Learning Theory. For $k \in \N$, we use the definition of empirical Rademacher complexity of a function class $\F$ of models $\F \ni h: V \to \R^k$ used in \cite{haochen2022provableguaranteesselfsuperviseddeep}.


\raddefnappen

Note this definition takes a $\max$ over the $n$ datapoints $x_1, \ldots, x_n$ rather than an expectation, as is usual \cite{ma}. \Cref{lem:haochen-thm4-1} analyzes the generalization error of the ERM, using Statistical Learning Theory. 

\subsubsection{Lemmas on Contrastive Learning}

\begin{lemma}\label{lem:haochen-thm4-1} (Theorem 4.1 of \cite{haochen2022provableguaranteesselfsuperviseddeep})
    Fix a base distribution $\mu \in \Delta(V)$, augmentation function $Q : V \to \Delta(V)$, and parameters $k \in \N$, $k \leq |V|$ and $\perm \in (0,1)$. Let $\Femb \subset \F_V^{(k)}$ be a class of embedding models $h_\theta$ that map a domain $V$ into dimension $k$ Euclidean space: $\Femb \ni h_\theta : V \to \R^k$. Let $\Lc : \F_V^{(k)} \to \R$ be the population contrastive loss, instantiated with $\mu$ and $Q$, as in \Cref{eq:cont-loss-defn}, and $\Lchat : \F_V^{(k)} \to \R$ be the empirical contrastive loss over a i.i.d. dataset of size $n$ drawn from $\mu$, as in \Cref{eq:emp-cont-loss-defn}. Assume:

    \begin{enumerate}
        \item For some $\chi > 0$ that $\forall h_\theta \in \Femb, \forall x \in V$, $||h_\theta(x)||_\infty \leq \chi$. 
        \item $\Femb$ contains at least one global minimizer of $\Lc$ over functions $h : V \to \R^k$
    \end{enumerate}

    Then, with $h_{\theta_*} \in \Femb$ any global minimizer of $\Lc$ and $h_{\hat{\theta}} \in \Femb$ any global minimizer of $\Lchat$, for which a training dataset of size $n$ was drawn i.i.d. from $\mu$, then with probability at least $1 - \perm$ over the randomness of the training dataset,

    \begin{align}
        \Lc(h_{\hat{\theta}}) \leq \Lc(h_{\theta_*}) + \poly(k,\chi) \, \Rchat_{n/2}(\Femb) + \poly(k,\chi) \, (\sqrt{\frac{\log \frac{2}{\perm}}{n}} + \perm)
    \end{align}
    
\end{lemma}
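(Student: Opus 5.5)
The plan is the standard ERM-via-uniform-convergence argument, adapted to the fact that $\Lchat$ is a U-statistic (its negative-pair term couples distinct samples) rather than an average of i.i.d. terms. First, use the unbiasedness of $\Lchat$ (Claim D.2 of \cite{haochen2022provableguaranteesselfsuperviseddeep}) together with the ERM property of $h_{\hat\theta}$ to reduce to a uniform deviation bound:
\begin{align}
\Lc(h_{\hat\theta}) - \Lc(h_{\theta_*}) \leq \big(\Lc(h_{\hat\theta}) - \Lchat(h_{\hat\theta})\big) + \big(\Lchat(h_{\theta_*}) - \Lc(h_{\theta_*})\big) \leq 2\sup_{h \in \Femb} |\Lchat(h) - \Lc(h)|.
\end{align}
The second assumption is what makes $\Lc(h_{\theta_*})$ the population optimum over all of $\F_V^{(k)}$, so the comparison point lies inside $\Femb$.

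Second, control the positive-pair and negative-pair terms separately.
\begin{itemize}
\item The positive term $-\frac{2}{n}\sum_i \ex_{x,x^+\sim Q(\bar x_i)} h(x)^\top h(x^+)$ is an i.i.d. average. By assumption 1, each summand is bounded by $k\chi^2$. A symmetrization argument then bounds its uniform deviation by a Rademacher complexity of the class $\{\bar x \mapsto \ex\, h(x)^\top h(x^+)\}$.
\item Reduce that complexity to $\Rchat(\Femb)$. Write the inner product as $\sum_{i\in[k]} h_i(x)h_i(x^+)$, so it is a sum of $k$ coordinates. Each product $h_i(x)h_i(x^+)$ is $2\chi$-Lipschitz in each factor on $[-\chi,\chi]$. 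The expectation over augmentations can be pulled outside the supremum, since a supremum of an expectation is at most the expectation of the supremum, and the max over datapoints in the definition of $\Rchat$ absorbs the augmented points. Talagrand's contraction, plus a polarization identity $ab = \frac{1}{4}((a+b)^2 - (a-b)^2)$ to handle products, then gives a bound of the form $\poly(k,\chi)\Rchat_n(\Femb)$.
\item The negative term $\frac{1}{n(n-1)}\sum_{i\ne j}(\cdot)^2$ is a U-statistic. Apply Hoeffding's decomposition: write it as an average over permutations of averages of $n/2$ independent pairs $(\bar x_{\sigma(2l-1)}, \bar x_{\sigma(2l)})$. The supremum of an average is at most the average of the suprema, so it suffices to bound an i.i.d. average of $n/2$ terms. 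This is where $\Rchat_{n/2}$ comes from. The squared inner product $(h(x)^\top h(x^-))^2$ is bounded by $k^2\chi^4$ and Lipschitz with constant $\poly(k,\chi)$ in the coordinates, so the same contraction and polarization steps apply.
\end{itemize}

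Third, convert the expectation bounds into high-probability bounds with McDiarmid's inequality. Changing one $\bar x_i$ changes $\sup_h|\Lchat - \Lc|$ by at most $\poly(k,\chi)/n$, which yields the $\poly(k,\chi)\sqrt{\log(2/\perm)/n}$ term with probability at least $1-\perm$. The additive $\poly(k,\chi)\perm$ term absorbs any slack, for instance from the expectation-versus-high-probability conversion, as in the original statement.

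I expect the main obstacle to be the contraction step for the products and squared inner products, which are vector-valued and non-Lipschitz compositions of $h$. The first thing I would try is the polarization trick combined with a vector-contraction inequality (Maurer's). I would accept the resulting $\poly(k,\chi)$ loss, since the statement allows an unspecified polynomial.
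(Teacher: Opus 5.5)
Your proposal is correct and follows essentially the argument of Theorem~4.1 of \cite{haochen2022provableguaranteesselfsuperviseddeep}, which the paper does not reprove but simply invokes. That argument has four steps: reduce the ERM excess risk to a uniform deviation; write the negative-pair U-statistic via Hoeffding's permutation representation over $\lfloor n/2\rfloor$ disjoint pairs (the source of $\Rchat_{n/2}$); use contraction to reduce the (squared) inner products to $\Rchat(\Femb)$; and use bounded differences for the high-probability step, where the $\poly(k,\chi)\,\perm$ term is pure slack for your bound.

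The one detail to add is that your positive-pair term yields $\Rchat_n(\Femb)$ rather than $\Rchat_{n/2}(\Femb)$. You can absorb it using $\Rchat_{2m}(\Femb)\le\Rchat_m(\Femb)$, obtained by splitting the worst-case sample into two halves in the max-over-datapoints definition, or by applying the same pairing to that term.
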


\begin{proof}
    Same as Theorem 4.1 of \cite{haochen2022provableguaranteesselfsuperviseddeep}.
\end{proof}

\cite{haochen2022provableguaranteesselfsuperviseddeep} say, as a comment immediately after their Theorem 4.1, that $\Rchat_{n}(\F)$ typically looks like $\sqrt{\frac{\C(\F)}{n}}$ where $\C(\F)$ is an $\F$-dependent constant. Assuming this is the case, then as a consequence of \Cref{lem:haochen-thm4-1}, the minimizer of the empirical contrastive loss, with $n = \Theta(\frac{\C(\Femb)\, \poly(k, \chi)}{\eps^2}\log \frac{1}{p_{\textup{erm fails}}})$ i.i.d. samples, is $\eps$-minimizer embedding model with probability at least $1 - p_{\textup{erm fails}}$. We write this explicitly in the following Lemma.

\begin{restatable}[Generalization of ERM]{lemma}{ermforcont}
\label{lem:contrastive-learning}

    Fix a base distribution $\mu \in \Delta(V)$, augmentation function $Q : V \to \Delta(V)$, and parameters $k \in \N$, $k \leq |V|$,  and $\eps \in (0,1)$. Let $\Femb \subset \F_V^{(k)}$ be a class of embedding models $h_\theta$ that map a domain $V$ into dimension $k$ Euclidean space: $\Femb \ni h_\theta : V \to \R^k$. Let $\Lc : \F_V^{(k)} \to \R$ be the population contrastive loss, instantiated with $\mu$ and $Q$, as in \Cref{eq:cont-loss-defn}, and $\Lchat : \F_V^{(k)} \to \R$ be the empirical contrastive loss over a i.i.d. dataset of size $n$ drawn from $\mu$, as in \Cref{eq:emp-cont-loss-defn}. Assume:

    \begin{enumerate}
        \item For some $\chi > 0$ that $\forall h_\theta \in \Femb, \forall x \in V$, $||h_\theta(x)||_\infty \leq \chi$. 
        \item $\Femb$ contains at least one global minimizer of $\Lc$ over functions $h : V \to \R^k$
        \item $\Rchat_n(\Femb) \leq \sqrt{\frac{\C(\Femb)}{n}}$, where $\C(\Femb)$ is an $\Femb$-dependent constant
    \end{enumerate}

    Set $\perm = \frac{\eps}{\poly(k, \chi)}$. Then, with $h_{\theta_*} \in \Femb$ any global minimizer of $\Lc$ and $h_{\hat{\theta}} \in \Femb$ any global minimizer of $\Lchat$, for which a training dataset of size $n = \Theta(\frac{\C(\Femb)\, \poly(k, \chi)}{\eps^2}\log \frac{1}{p_{\textup{erm fails}}})$ was drawn i.i.d. from $\mu$, then with probability at least $1 - \perm$ over the randomness of the training dataset,

    \begin{align}
        \Lc(h_{\hat\theta}) \leq  \Lc(h_{\theta_*}) + \eps\label{eq:cont-eps-min}
    \end{align}

    Equivalently, with $F_{\hat\theta} \in \R^{|V| \times k}$ where $\forall x \in [N], F_{\hat\theta}[x] = \sqrt{\sigma_x} h_{\hat\theta}(x)^\top$ (with $\sigma_x$ defined in \Cref{eq:wx-defn}), then
    
    \begin{align}
        \Lmf (F_{\hat\theta}) := ||\overline{A} - F_{\hat\theta} F_{\hat\theta}^\top||^2_F &\leq \min_{F :\,\, \textup{rank}(F) = k} ||\overline{A} - F F^\top||^2_F + \eps \label{eq:mf-eps-min}
    \end{align}
\end{restatable}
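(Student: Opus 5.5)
The plan is to derive \Cref{lem:contrastive-learning} directly from \Cref{lem:haochen-thm4-1}, by substituting the Rademacher bound of assumption 3 and choosing $n$ and $\perm$ so that every error term is at most a constant fraction of $\eps$. After that, the matrix factorization form \Cref{eq:mf-eps-min} will follow from the identity \Cref{eq:cont-mf-equiv} together with Eckart--Young--Mirsky.

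\textbf{Step 1 (excess loss).} Assumptions 1 and 2 of \Cref{lem:haochen-thm4-1} hold verbatim. On an event of probability at least $1-\perm$, any ERM $h_{\hat\theta}$ therefore satisfies
\[
\Lc(h_{\hat\theta}) - \Lc(h_{\theta_*}) \leq \poly(k,\chi)\,\Rchat_{n/2}(\Femb) + \poly(k,\chi)\Big(\sqrt{\tfrac{\log(2/\perm)}{n}} + \perm\Big).
\]
I bound the three terms separately.
\begin{itemize}
\item By assumption 3, applied at sample size $n/2$, the Rademacher term is at most $\poly(k,\chi)\sqrt{2\C(\Femb)/n}$.
\item Choosing $n = \Theta\big(\C(\Femb)\poly(k,\chi)\eps^{-2}\log(1/\perm)\big)$ with a large enough hidden constant and polynomial makes the Rademacher term at most $\eps/3$. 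It also makes the concentration term $\poly(k,\chi)\sqrt{\log(2/\perm)/n}$ at most $\eps/3$. Here one uses $\log(2/\perm) \leq 2\log(1/\perm)$ for $\perm \leq 1/2$, and we may take $\C(\Femb) \geq 1$ without loss of generality.
\item Setting $\perm = \eps/\poly(k,\chi)$, with the same polynomial that multiplies $\perm$ in \Cref{lem:haochen-thm4-1}, makes the last term at most $\eps/3$.
\end{itemize}
Summing the three bounds gives \Cref{eq:cont-eps-min}.

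\textbf{Step 2 (matrix factorization form).} By \Cref{eq:cont-mf-equiv}, $\Lmf(h) = \textup{const} + \Lc(h)$ for every $h : V \to \R^k$, with the constant independent of $h$. So \Cref{eq:cont-eps-min} is equivalent to $\Lmf(F_{\hat\theta}) \leq \Lmf(F_{\theta_*}) + \eps$. It remains to identify $\Lmf(F_{\theta_*})$ with $\min_{\rank(F)=k}\|\overline{A}-FF^\top\|_F^2$.
\begin{itemize}
\item Any $F \in \R^{N\times k}$ arises as $F_h$ for the function $h(x) := F[x]^\top/\sqrt{\sigma_x}$. This requires $\sigma_x > 0$; nodes with $\sigma_x = 0$ have zero rows and columns in $\overline{A}$ and can be discarded.
\item Hence minimizing $\Lc$ over all of $\F_V^{(k)}$ is the same as minimizing $\|\overline{A}-FF^\top\|_F^2$ over all $F \in \R^{N\times k}$.
\item By Eckart--Young--Mirsky, since $\overline{A}$ is p.s.d., this minimum is attained at $U_k\Sigma_k^{1/2}R$, which has rank $k$ when $\lambda_k(\overline{A})>0$. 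In general, the infimum over rank-$\leq k$ matrices is achieved by a rank-exactly-$k$ matrix, possibly after a perturbation argument.
\item Since $h_{\theta_*}$ is a global minimizer over $\F_V^{(k)}$ (assumption 2), $\Lmf(F_{\theta_*})$ equals this minimum, which gives \Cref{eq:mf-eps-min}.
\end{itemize}

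\textbf{Main obstacle.} The only delicate point is the rank-$k$ minimum when $\overline{A}$ has rank below $k$. There the rank-exactly-$k$ constraint may not be attained, and I would read the $\min$ as an infimum or note that it coincides with the rank-$\leq k$ minimum. Everything else is bookkeeping of the polynomial factors hidden in the $\Theta$ and $\poly$ notation.
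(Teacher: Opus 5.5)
Your proposal is correct and follows the paper's proof. As in the paper, you plug the Rademacher bound (at sample size $n/2$) into \Cref{lem:haochen-thm4-1}, choose $n$ and $\perm = \eps/\poly(k,\chi)$ so that each error term is a fraction of $\eps$, and transfer to the matrix-factorization form through the additive-constant identity \Cref{eq:cont-mf-equiv}.

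Your Step 2 is more careful than the paper, which simply asserts that $\Lmf(F_{\theta_*})$ equals the rank-$k$ minimum. One correction: when $\rank(\overline{A})<k$, no rank-exactly-$k$ matrix attains the optimal value, and a perturbation only approaches it (by density of full-rank matrices and continuity). So the infimum reading in your last paragraph is the right one, not the ``achieved after perturbation'' phrasing. In the paper's application, $\lambda_k(\overline{A})>0$, so the issue does not arise there.
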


\begin{proof}
    Apply \Cref{lem:haochen-thm4-1},  plugging in  $\Rchat_{n/2} \leq \sqrt{\frac{\C(\Femb)}{n/2}}$, $\perm =  \frac{\eps}{\poly(k, \chi)}$, and  $n = \Theta(\frac{\C(\Femb) \, \poly(k, \chi)}{\eps^2}\log \frac{1}{p_{\textup{erm fails}}})$, with the constant scaled so that the bound from \Cref{lem:haochen-thm4-1} becomes

    \begin{align}
        \Lc(h_{\hat{\theta}}) &\leq \Lc(h_{\theta_*}) + \poly(k,\chi) \, \Rchat_{n/2}(\Femb) + \poly(k,\chi) \, (\sqrt{\frac{\log \frac{2}{\perm}}{n}} + \perm)\\
        &\leq \Lc(h_{\theta_*}) + \eps
    \end{align}

    This establishes \Cref{eq:cont-eps-min}. \Cref{eq:mf-eps-min} follows from \Cref{eq:cont-eps-min} and  \Cref{eq:cont-mf-equiv}, which we showed above:

    \begin{align}
        \forall h_\theta \in \Femb, \Lmf(h_\theta) = c + \Lc(h_\theta) 
    \end{align}

    Where constant $c$ is independent of $h_\theta$. 
\end{proof}

Note that as the proof of \Cref{lem:haochen-thm4-1} bounds the excess test loss of the ERM with uniform convergence, this result can actually be strengthened: any $\frac{\eps}{2}$-minimizer of the empirical contrastive loss, with $n = \Theta(\frac{\C(\Femb) \, \poly(k, \chi)}{\eps^2}\log \frac{1}{p_{\textup{erm fails}}})$ i.i.d. samples, is $\eps$-minimizer embedding model with probability at least $1 - p_{\textup{erm fails}}$. 



\subsubsection{The Connection Between Diffusion Embeddings and Contrastive Learning}

In the following, assume the graph $G = (V,E)$ is finite ($|V| < \infty$), undirected, and connected.

\begin{definition}\label{defn:spec-diff-embd} (Spectral Diffusion Embedding)
Let $G = (V,E)$ be a connected, undirected graph. Let $(V,P)$ be a reversible Markov chain with transition matrix $P \in [0,1]^{|V| \times |V|}$ and stationary measure $\pi$. Define $\{\phi_i\}_{i \in [|V|]}$ as the right eigenvectors of $P$ such that $P \phi_i = \lambda_i \phi_i$, with $1 = \lambda_1 \geq  |\lambda_2| \geq \ldots \geq |\lambda_{|V|}| \geq 0$. The normalization of the eigenvectors is where $\forall i \in [|V|]$, $\diag(\pi)^{1/2} \phi_i$ is a unit vector.\footnote{Alternatively, note that since $P$ is reversible, $\diag(\pi)P$ is symmetric. Let symmetric matrix $\overline{A} = \diag(\pi)^{1/2}P \diag(\pi)^{-1/2} = \sum_{i}\lambda_i v_i v_i^\top$ with $\{ v_i\}_{i \in [|V|]}$ being orthonormal. Then $\forall i \in [|V|], \, \phi_i = \diag(\pi)^{-1/2} v_i$.  } Denote $\phi_i(x)$ as the $x$-th entry of $\phi_i \in \R^{|V|}$. For any $t \in \N$, define the scale-$t$ spectral diffusion embedding as

\begin{align}
    \forall x \in V, \, \Phi_t(x) = (\lambda_i^t \phi_i(x))_{i \in [|V|]} \in \R^{|V|}
\end{align}

Moreover, for any $k \in [|V|],$ define the $k$-truncated scale-$t$ spectral diffusion embedding as

\begin{align}
    \forall x \in V, \, \Phi^{\leq k}_t(x) = (\lambda_i^t \phi_i(x))_{i \in [k]} \in \R^{k}
\end{align}
\end{definition}

\Cref{lem:diff-dist} establishes a connection between diffusion similarity and Euclidean inner product of the spectral diffusion embeddings.

\begin{restatable}[Diffusion Embeddings]{lemma}{diffdistlemma} \label{lem:diff-dist}
Let $G = (V,E)$ be a connected, undirected graph. Let $(V,P)$ be a reversible Markov chain with transition matrix $P \in [0,1]^{|V| \times |V|}$ and stationary measure $\pi$. For any $t \in \N$ and $x \in V$, with $ \Phi_t(x) = (\lambda_i^t \phi_i(x))_{i \in [|V|]} \in \R^{|V|}$  denoting the spectral diffusion embedding of $x$  as in \Cref{defn:spec-diff-embd}, we have

\begin{align}
    \forall x,y \in V, \langle \Phi_t(x), \Phi_t(y) \rangle = \langle p^t(x), p^t(y) \rangle_{\ell^2(1/\pi)}
\end{align}

where $\langle \cdot, \cdot \rangle$ is the usual Euclidean inner product and $\langle \cdot, \cdot \rangle_{\ell^2(1/\pi)}$ is the inner product where $\forall i \in [|V|]$, the $i$th term is weighted by $\frac{1}{\pi(i)}$.
\end{restatable}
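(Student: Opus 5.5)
The plan is to reduce both sides to the same matrix expression, $e_x^\top P^t \diag(\pi)^{-1} (P^t)^\top e_y$. We use the symmetrization from the footnote of \Cref{defn:spec-diff-embd}. Since $P$ is reversible, $\overline{A} := \diag(\pi)^{1/2} P \diag(\pi)^{-1/2}$ is symmetric. Write its spectral decomposition as $\overline{A} = \sum_i \lambda_i v_i v_i^\top$ with $\{v_i\}$ orthonormal, and set $\phi_i = \diag(\pi)^{-1/2} v_i$. Then $P = \diag(\pi)^{-1/2}\overline{A}\diag(\pi)^{1/2}$, and therefore
\begin{align}
P^t = \diag(\pi)^{-1/2}\overline{A}^t\diag(\pi)^{1/2} = \sum_i \lambda_i^t \phi_i \phi_i^\top \diag(\pi).
\end{align}

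First, for the right-hand side, \Cref{defn:diffusionembd} gives $p^t(x) = (P^t)^\top e_x$. Hence
\begin{align}
\langle p^t(x), p^t(y)\rangle_{\ell^2(1/\pi)} = e_x^\top P^t \diag(1/\pi) (P^t)^\top e_y.
\end{align}
Second, substitute the expansion of $P^t$. The product $P^t\diag(1/\pi)(P^t)^\top$ becomes
\begin{align}
\sum_{i,j}\lambda_i^t\lambda_j^t\,\phi_i\,(\phi_i^\top \diag(\pi)\phi_j)\,\phi_j^\top .
\end{align}
The middle factor is $\phi_i^\top \diag(\pi)\phi_j = v_i^\top v_j = \1[i=j]$ by orthonormality. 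So the double sum collapses to $\sum_i \lambda_i^{2t}\phi_i\phi_i^\top$. Its $(x,y)$ entry is $\sum_i \lambda_i^t\phi_i(x)\lambda_i^t\phi_i(y) = \langle \Phi_t(x),\Phi_t(y)\rangle$, which is the left-hand side.

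The only step needing care is the bookkeeping with the weights $\diag(\pi)^{\pm 1/2}$. The key identity is $(P^t)^\top = \diag(\pi)P^t\diag(\pi)^{-1}$, which holds because reversibility gives $\diag(\pi)P = P^\top\diag(\pi)$. We must also check that the eigenvector normalization in \Cref{defn:spec-diff-embd}, namely that $\diag(\pi)^{1/2}\phi_i$ is a unit vector, is exactly what makes the Gram factor equal to the identity. As a sanity check on these weights, the case $t=0$ should give $\sum_i\phi_i(x)\phi_i(y) = \1[x=y]/\pi(x)$. This agrees with $\langle e_x,e_y\rangle_{\ell^2(1/\pi)}$, which supports the computation.
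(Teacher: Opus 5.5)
Your proof is correct and follows essentially the same route as the paper: both symmetrize $P$ to $\diag(\pi)^{1/2}P\diag(\pi)^{-1/2}$, set $\phi_i=\diag(\pi)^{-1/2}v_i$, and conclude from orthonormality of the $v_i$. The only difference is packaging: the paper proves the vector identity $p^t(x)=\diag(\pi)^{1/2}U\Phi_t(x)$ and then uses $U^\top U=I$, whereas you collapse the matrix $P^t\diag(1/\pi)(P^t)^\top$ directly to $\sum_i\lambda_i^{2t}\phi_i\phi_i^\top$, which is the same computation.
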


\begin{proof}
    Because $(V,P)$ is reversible, let $P = D^{-1} W \in \R^{|V| \times |V|}$ where $D = \textup{diag}(W \1) = \diag(\pi)$ and $W  = \diag(\pi)P \in \R^{|V| \times |V|}$ is symmetric. The symmetric matrix $\overline{A} = D^{-1/2}WD^{-1/2}$ is the normalized adjacency matrix, with eigendecomposition $\overline{A} = \sum_{i \in [|V|]} \lambda_i u_i u_i^\top$. Since $P = D^{-1/2}\overline{A}D^{1/2}$, then $P = \sum_{i \in [|V|]} \lambda_i (D^{-1/2}u_i) (D^{1/2}u_i)^\top$, so that $\forall i \in [|V|], \phi_i = D^{-1/2}u_i$. Now, for any $t \in \N$ and $x \in V$,

    \begin{align}
        p^t(x) &:= (P^\top)^t e_x\\
        &= D^{1/2} \overline{A}^t D^{-1/2} e_x\\
        &= D^{1/2} \sum_{i \in [|V|]} \lambda_i^t u_i \phi_i(x)\\
        &= D^{1/2} U\Phi_t(x)
    \end{align}

    Where $U$ is an orthogonal transformation $U = [u_1, \ldots, u_{|V|}] \in \R^{|V| \times |V|}$. It follows that for any $x,y \in V$,

    \begin{align}
        \langle p^t(x), p^t(y) \rangle_{\ell^2(1/\pi)} &:= (p^t(x))^\top D^{-1} p^t(y)\\
        &= \Phi_t(x)^\top U^\top U \Phi_t(y)\\
        &= \Phi_t(x)^\top  \Phi_t(y)
    \end{align}
\end{proof}

As a corollary, for any $u,v \in V$, the diffusion distance $||p^t(u)- p^t(v)||_{\ell^2(1/\pi)} = ||\Phi_t(u)- \Phi_t(v)||_2$. For more about diffusion distance, see \cite{diffusionmaps} and \cite{pku}.

The following Lemma describes the approximation properties of a diffusion embedding with small embedding dimension $k \in [|V| - 1]$ (rather than the full $|V|$-dimensional embedding), when $\lambda_{k + 1}^{2t}$ is small.

\begin{lemma}\label{lem:approx-equiv}(Small Embedding Dimension Approximation; Lemma 2 of \cite{pku})
    Let $G = (V,E)$ be a connected, undirected graph. Let $(V,P)$ be a reversible Markov chain with transition matrix $P \in [0,1]^{|V| \times |V|}$ and stationary measure $\pi$. Let $\pimin := \min_{v \in V} \pi(v)$ and $1 =\lambda_1 \geq |\lambda_2| \geq \ldots, \geq |\lambda_{|V|}| \geq 0$ be arranged in decreasing magnitude. Given $t \in \N$ and $k \in [|V| - 1]$,  for any $u,v \in V$,

    \begin{align}
        ||\Phi_t(u) - \Phi_t(v)||_2^2 - \frac{2\lambda_{k + 1}^{2t}}{\pimin} \cdot 1[u \neq v] &\leq ||\Phi^{\leq k}_t(u) - \Phi^{\leq k}_t(v)||_2^2 \leq ||\Phi_t(u) - \Phi_t(v)||_2^2\label{eq:dim-k-embd-approx-1}\\
        ||\Phi_t(v)||_2^2 - \frac{\lambda_{k + 1}^{2t}}{\pimin} &\leq ||\Phi_t^{\leq k}(v)||_2^2 \leq ||\Phi_t(v)||_2^2\label{eq:dim-k-embd-approx-2}\\
        \langle\Phi_t(u), \Phi_t(v) \rangle - \frac{\lambda_{k + 1}^{2t}}{\pimin} &\leq \langle\Phi_t^{\leq k}(u), \Phi_t^{\leq k}(v) \rangle \leq \langle\Phi_t(u), \Phi_t(v) \rangle + \frac{\lambda_{k + 1}^{2t}}{\pimin}
    \end{align}
\end{lemma}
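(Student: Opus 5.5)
The plan is to work directly with the tail coordinates $i > k$ of the spectral diffusion embedding, and to control them with the completeness (row-orthonormality) of the eigenbasis. As in the proof of \Cref{lem:diff-dist}, write $D = \diag(\pi)$. Write $\overline{A} = D^{1/2} P D^{-1/2} = \sum_i \lambda_i u_i u_i^\top$ with $U = [u_1,\ldots,u_{|V|}]$ orthogonal, so that $\phi_i = D^{-1/2}u_i$.

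The key identity is obtained from $UU^\top = I$, i.e.\ orthonormality of the rows of $U$ as well as its columns. For all $x,y \in V$,
\begin{align}
\sum_{i \in [|V|]} \phi_i(x)\phi_i(y) = \frac{1}{\sqrt{\pi(x)\pi(y)}} \sum_{i} u_i(x) u_i(y) = \frac{\1[x = y]}{\pi(x)}.
\end{align}
In particular $\sum_i \phi_i(x)^2 = 1/\pi(x) \le 1/\pimin$. Also $\sum_i (\phi_i(u)-\phi_i(v))^2 = \frac{1}{\pi(u)} + \frac{1}{\pi(v)}$ for $u \ne v$, and this sum is $0$ for $u = v$; in all cases it is at most $\frac{2}{\pimin}\1[u\neq v]$. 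I also use that the eigenvalues are ordered by magnitude, so $\lambda_i^{2t} = |\lambda_i|^{2t} \le \lambda_{k+1}^{2t}$ for every $i > k$. The even power $2t$ makes the possible negativity of the $\lambda_i$ irrelevant.

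With these two facts, each inequality follows by splitting $\Phi_t = (\Phi_t^{\le k}, \text{tail})$.

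For \Cref{eq:dim-k-embd-approx-2}, the squared norm splits as
\begin{align}
\|\Phi_t(v)\|_2^2 - \|\Phi_t^{\le k}(v)\|_2^2 = \sum_{i>k}\lambda_i^{2t}\phi_i(v)^2 .
\end{align}
This difference is nonnegative, which gives the upper bound. It is at most $\lambda_{k+1}^{2t}\sum_i \phi_i(v)^2 \le \lambda_{k+1}^{2t}/\pimin$, which gives the lower bound.

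For \Cref{eq:dim-k-embd-approx-1}, the same split gives
\begin{align}
\|\Phi_t(u)-\Phi_t(v)\|_2^2 - \|\Phi^{\le k}_t(u)-\Phi^{\le k}_t(v)\|_2^2 = \sum_{i>k}\lambda_i^{2t}(\phi_i(u)-\phi_i(v))^2 .
\end{align}
This is nonnegative, and it is at most $\lambda_{k+1}^{2t}\sum_i(\phi_i(u)-\phi_i(v))^2 \le \frac{2\lambda_{k+1}^{2t}}{\pimin}\1[u\ne v]$ by the identity above.

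For the inner-product bound, the tail term is $\sum_{i>k}\lambda_i^{2t}\phi_i(u)\phi_i(v)$. It may have either sign, so I bound its absolute value. Pulling out $\lambda_{k+1}^{2t}$ and applying Cauchy--Schwarz,
\begin{align}
\Big|\sum_{i>k}\lambda_i^{2t}\phi_i(u)\phi_i(v)\Big| \le \lambda_{k+1}^{2t}\Big(\sum_i\phi_i(u)^2\Big)^{1/2}\Big(\sum_i\phi_i(v)^2\Big)^{1/2} = \frac{\lambda_{k+1}^{2t}}{\sqrt{\pi(u)\pi(v)}} \le \frac{\lambda_{k+1}^{2t}}{\pimin}.
\end{align}
This yields the two-sided bound.

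The only step needing care is the completeness identity $\sum_i \phi_i(x)\phi_i(y) = \1[x=y]/\pi(x)$. It uses $UU^\top = I$, i.e.\ that the full eigenbasis of the symmetric matrix $\overline{A}$ is orthonormal, rather than just $U^\top U = I$. It holds whether or not $\pi$ is normalized, as long as $\phi_i$ is normalized as in \Cref{defn:spec-diff-embd}. Everything else is routine bookkeeping.
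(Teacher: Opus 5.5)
Your proof is correct and follows the paper's argument essentially step for step. Both derive the completeness identity $\sum_i \phi_i(x)\phi_i(y) = \1[x=y]/\pi(x)$ from $UU^\top = I$, bound each tail sum over $i > k$ by $\lambda_{k+1}^{2t}$ times the corresponding full sum, and use Cauchy--Schwarz for the inner-product bound; the only difference is that you pull out $\lambda_{k+1}^{2t}$ before applying Cauchy--Schwarz rather than after, which is immaterial.
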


\begin{proof}
As in the proof of \Cref{lem:cont-learn-min-general}, with $D = \diag(\pi)$ and matrix $D^{1/2}PD^{-1/2} $ having eigendecomposition $ U \Lambda U^\top$, then  $P = (D^{-1/2} U) \Lambda (D^{1/2} U)^\top$.

The matrix of right eigenvectors $\Phi := [\phi_1, \ldots, \phi_n] =  D^{-\frac{1}{2}}U$. Thus,

\begin{align}
    \Phi \Phi^\top &= D^{-\frac{1}{2}} UU^\top D^{-\frac{1}{2}}\\
    &= D^{-1}
\end{align}

Denoting $n  := |V|$, for any $u,v \in V$,

\begin{align}
    \sum_{i = 1}^n \phi_i(u) \phi_i(v) &= \frac{1[u = v]}{\pi(v)}\\
    \sum_{i=1}^n \bigl(\phi_i(u)-\phi_i(v)\bigr)^2 &=
\begin{cases}
0, & u=v,\\[1mm]
\frac{1}{\pi(u)}+\frac{1}{\pi(v)}, & u\neq v,
\end{cases}
\le \frac{2}{\pi_{\min}}\mathbf{1}_{\{u\neq v\}}.
\end{align}

Now
\begin{align}
||\Phi_t^{\leq k}(u)-\Phi_t^{\leq k}(v)||_2^2
&=
||\Phi_t(u)-\Phi_t(v)||_2^2
-
\sum_{i \geq k + 1}
\lambda_i^{2t}\bigl(\phi_i(u)-\phi_i(v)\bigr)^2 \\
&\ge
||\Phi_t(u)-\Phi_t(v)||_2^2
-
\lambda_{k + 1}^{2t} \sum_{i=1}^n \bigl(\phi_i(u)-\phi_i(v)\bigr)^2 \\
&\ge
||\Phi_t(u)-\Phi_t(v)||_2^2
-
\frac{2\lambda_{k + 1}^{2t}}{\pi_{\min}}\mathbf{1}_{\{u\neq v\}}.
\end{align}
The upper bound $||\Phi_t^{\leq k}(u)-\Phi_t^{\leq k}(v)||_2^2 \le ||\Phi_t(u)-\Phi_t(v)||_2^2$ is immediate, since truncation only removes nonnegative terms. This proves \Cref{eq:dim-k-embd-approx-1}.

Similarly,
\begin{align}
||\Phi_t^{\leq k}(v)||_2^2
&=
||\Phi_t(v)||_2^2
-
\sum_{i\geq k + 1} \lambda_i^{2t}\phi_i(v)^2 \\
&\ge
||\Phi_t(v)||_2^2
-
\lambda_{k + 1}^{2t} \sum_{i=1}^n \phi_i(v)^2 \\
&\ge
||\Phi_t(v)||_2^2-\frac{\lambda_{k + 1}^{2t}}{\pi_{\min}}.
\end{align}
Again, since truncation only removes nonnegative terms, $||\Phi_t^{\leq k}(v)||_2^2 \le ||\Phi_t(v)||_2^2.$ This proves \Cref{eq:dim-k-embd-approx-2}.

Finally,

\begin{align}
    \langle\Phi_t^{\leq k}(u), \Phi_t^{\leq k}(v) \rangle &= \langle\Phi_t(u), \Phi_t(v) \rangle - \sum_{i \geq k + 1} \lambda_i^{2t}\phi_i(u) \phi_i(v)\\
    &\geq \langle\Phi_t(u), \Phi_t(v) \rangle -  \sqrt{\Big(\sum_{i \geq k + 1}\lambda_{i}^{2t}  \phi_i(u)^2 \Big) \Big(\sum_{i \geq k + 1 } \lambda_{i}^{2t} \phi_i(v)^2 \Big) } \tag{Cauchy Schwartz}\\
    &\geq \langle\Phi_t(u), \Phi_t(v) \rangle - \lambda_{k + 1}^{2t} \sqrt{\Big(\sum_{i = 1}^n \phi_i(u)^2 \Big) \Big(\sum_{i = 1}^n \phi_i(v)^2 \Big) } \\
    &\geq \langle\Phi_t(u), \Phi_t(v) \rangle - \frac{\lambda_{k + 1}^{2t}}{\pimin}\\
    \textup{Similarly, } \langle\Phi_t^{\leq k}(u), \Phi_t^{\leq k}(v) \rangle &\leq \langle\Phi_t(u), \Phi_t(v) \rangle + \frac{\lambda_{k + 1}^{2t}}{\pimin}
\end{align}

\end{proof}

We will now describe how to learn the embeddings $\{ \Phi_t^{\leq k}(u)\}_{u \in V}$ for general $k \in [|V|]$ and $t \in \N$. \Cref{lem:cont-learn-min-general} shows the global minimizers of the contrastive loss recover $\{ \Phi^{\leq k}_t(u)\}_{u \in V}$ up to an orthogonal transformation.

\begin{restatable}[Global Minimizers for Contrastive Loss]{lemma}{contlossmin} \label{lem:cont-learn-min-general}
Let $G = (V,E)$ be any connected, undirected graph. Let $(V,P)$ be a reversible Markov chain with transition matrix $P \in [0,1]^{|V| \times |V|}$ and stationary measure $\pi \in \Delta(V)$. Let $\{ \lambda_i(P)\}_{i \in [|V|]}$ be the eigenvalues of $P$, ordered by decreasing absolute value. Fix any $t \in \N$ and $k \in [|V|]$. Suppose $|\lambda_k(P)| - |\lambda_{k + 1}(P)| > 0$ if $k < |V|$ and suppose the population contrastive loss (\Cref{eq:cont-loss-defn}) with base distribution $\pi$, augmentation function $P^t$, and embedding dimension $k$ has a global minimizer in $\Femb$. Then, for any such global minimizer $\Femb \ni h_{\theta_*} : V \to \R^k$, there exists an orthogonal matrix $R_* \in O^{k \times k}$ where

\begin{align}
    \forall u \in V,  h_{\theta_*}(u)  = R_* \, \Phi^{\leq k}_t(u) \in \R^{k}
\end{align}

With mapping $C : V\to \Delta(V)$ defined as the sampler-representation of $P$, where $\forall x \in V, C(x) = P[x]$, then the positive and negative pair distribution corresponding to base distribution $\pi$ and augmentation function $P^t$ are where positive pairs $(x,y)$ are generated by sampling $z \sim \pi,$ then $ x \sim C^t(z), y \sim C^{t}(z)$ independently. Negative pairs $(x,y)$ are generated via $x,y \sim \pi$ independently. Note that $\sigma_x$ (defined in \Cref{eq:wx-defn}) equals $\pi(x)$ for every $x \in V$. 
\end{restatable}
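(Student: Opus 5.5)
The plan is to reduce the contrastive problem to the matrix factorization framework set up above, compute the matrix $\overline{A}$ explicitly for base distribution $\pi$ and augmentation $Q = P^t$, and then read off the minimizers from Eckart--Young--Mirsky.

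\textbf{Step 1: compute $A$, $\sigma_x$ and $\overline{A}$.} With $\mu=\pi$ we have
\[
A = (P^t)^\top \diag(\pi) P^t .
\]
By reversibility, $\diag(\pi)P = P^\top\diag(\pi)$, so
\[
\diag(\pi)P^t = (P^t)^\top\diag(\pi), \qquad A = \diag(\pi)P^{2t}.
\]
Hence $A\1 = \diag(\pi)\1 = \pi$, which gives $\sigma_x=\pi(x)$, the final claim of the lemma. It also gives $D=\diag(\pi)$ and
\[
\overline{A} = \diag(\pi)^{1/2}P^{2t}\diag(\pi)^{-1/2} = \big(\diag(\pi)^{1/2}P\diag(\pi)^{-1/2}\big)^{2t} = U\Lambda^{2t}U^\top,
\]
where $U=[u_1,\dots,u_{|V|}]$ is orthonormal and $\phi_i = \diag(\pi)^{-1/2}u_i$, as in the proof of \Cref{lem:diff-dist}. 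So $\overline{A}$ is p.s.d. with eigenvalues $\lambda_i^{2t}$. These are in the same order as the $|\lambda_i|$, and the gap hypothesis $|\lambda_k|>|\lambda_{k+1}|$ gives $\lambda_k^{2t}>\lambda_{k+1}^{2t}$.

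\textbf{Step 2: describe the pair distributions.} The positive/negative pair description follows from the general sampling description in the setup. Positive pairs are $z\sim\pi$ followed by two independent draws from $P^t(z)$, i.e.\ $C^t(z)$. Negative pairs are independent $x\sim \pi^\top P^t=\pi^\top$, using stationarity.

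\textbf{Step 3: identify the minimizers.} By \Cref{eq:cont-mf-equiv}, $\Lc$ and $\Lmf$ differ by a constant. So $h_{\theta_*}$ is a global minimizer of $\Lc$ over $\F_V^{(k)}$ if and only if $F_*$, with rows $\sqrt{\pi(x)}h_{\theta_*}(x)^\top$, globally minimizes $\|\overline{A}-FF^\top\|_F^2$. Every $F\in\R^{|V|\times k}$ arises from some $h$ because $\pi>0$.

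By Eckart--Young--Mirsky, using uniqueness of the top-$k$ eigenspace under the strict gap, $F_*F_*^\top = U_k\Lambda_k^{2t}U_k^\top$. The main obstacle is this uniqueness. Without the gap the optimal rank-$k$ projector is not unique and orthogonal equivalence can fail. I would handle it by the standard argument: the best PSD rank-$\le k$ approximation of a PSD matrix is unique when $\lambda_k^{2t}>\lambda_{k+1}^{2t}$. For $k=|V|$ the factorization is exact, so no gap is needed. One degenerate case needs a remark: if $\lambda_k=0$, the rank of $F_*$ could drop, but the equality of Gram matrices still holds.

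\textbf{Step 4: extract the orthogonal matrix.} Two $|V|\times k$ matrices with equal Gram matrices $F_*F_*^\top = GG^\top$, where $G = U_k\Lambda_k^{t}$, differ by right multiplication by an orthogonal $R\in O^{k\times k}$; this follows from a polar/SVD argument. So $F_*=GR$. Reading row $u$ and dividing by $\sqrt{\pi(u)}$ gives
\[
h_{\theta_*}(u)^\top = \pi(u)^{-1/2}\big(\lambda_i^t u_i(u)\big)_{i\le k} R = \Phi_t^{\le k}(u)^\top R .
\]
Taking $R_*=R^\top$ gives $h_{\theta_*}(u) = R_*\,\Phi_t^{\le k}(u)$, as claimed.
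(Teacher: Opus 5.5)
Your proof is correct and follows essentially the same route as the paper: reduce to $\overline{A}=\diag(\pi)^{1/2}P^{2t}\diag(\pi)^{-1/2}$, invoke Eckart--Young--Mirsky under the eigengap, and read off the rows of the factor. Your use of the signed factor $U_k\Lambda_k^{t}$, built from an eigenbasis of the symmetrized $P$ itself, neatly replaces the paper's diagonal sign-correction matrix. One caveat, shared with the paper: the transfer $|\lambda_k|>|\lambda_{k+1}|\Rightarrow\lambda_k^{2t}>\lambda_{k+1}^{2t}$ needs $t\ge 1$. At $t=0$ (allowed, since $0\in\N$ here) you get $\overline{A}=I$, every $F$ with orthonormal columns is a minimizer, and so the claim fails for $k<|V|$.
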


\begin{proof}
    Fix $t \in \N$ and $k \in [|V|]$. The plan is to follow the derivation process of the contrastive loss $\Lc$, but with $\mu = \pi$ and $Q = P^t$. First, we have

    \begin{align}
        A &= Q^\top \diag(\mu)Q\\
        &= (P^t)^\top \diag(\pi) P^t\\
        &= \diag(\pi) P^{2t} \tag{By reversibility of $P$}\\
        D &:= \diag(A \1)\\
        &= \diag(\pi) \tag{Row sums of $P^{2t}$ are 1}\\
        \overline{A} &:= D^{-1/2} A D^{-1/2}\\
        &= \diag(\pi)^{1/2} P^{2t} \diag(\pi)^{-1/2}
    \end{align}

    Let the eigendecomposition of $\overline{A}$ be $\overline{A} = \sum_{i \in [|V|]} \lambda_i(\overline{A})\, u_i u_i^\top = U\Lambda U^\top$. By the Eckart-Young-Mirsky Theorem \cite{eckartyoung}, since $\overline{A}$ is p.s.d. and $\lambda_k(\overline{A}) - \lambda_{k + 1}(\overline{A}) = \lambda_k(P)^{2t} - \lambda_{k + 1}(P)^{2t} > 0$ if $k < |V|$, the rank-$k$ minimizers of the matrix factorization loss are:

    \begin{align}
        \{ U\Lambda^{1/2}R : R \in O^{k \times k}\} = \arg\min_{F \in \R^{|V| \times k}} || \overline{A} - FF^\top||_F^2
    \end{align}

    Because for any $h_\theta \in \Femb$, $\Lmf (h_\theta ) = \Lc(h_\theta) + c$ for some constant $c$ independent of $h_\theta$ (\Cref{eq:cont-mf-equiv}), then if $\Femb$ contains some $h_{\theta_*} : V \to \R^k$ which is a global minimizer of $\Lc$, then the induced matrix $F_{\theta_*} \in \R^{|V| \times k}$, where the $x$-th row is:

    \begin{align}
        \forall x \in V, F_{\theta_*}[x] &= \sqrt{\pi(x)} h_{\theta_*}(x)^\top
    \end{align}

    Must be in the set $\arg\min_{F \in \R^{|V| \times k}} || \overline{A} - FF^\top||_F^2 = \{ U\Lambda^{1/2}R : R \in O^{k \times k}\}$. In particular, there exists $R \in O^{k \times k}$ where

    \begin{align}
        F_{\theta_*} = U\Lambda^{1/2} R
    \end{align}

    The $x$-th row of $U\Lambda^{1/2} R$ is $(\lambda_{i}(\overline{A})^{1/2} u_i(x))_{i \in [k]}^\top R \in \R^{1 \times k}$, where $u_i(x)$ is the $x$-th entry of $u_i$. Thus,

    \begin{align}
        \forall x \in V, \, h_{\theta_*}(x) &= (\pi(x))^{-1/2} R \, (\lambda_{i}(\overline{A})^{1/2} u_i(x))_{i \in [k]}\\
        &= R \, \Big(  \lambda_{i}(\overline{A})^{1/2} (\pi(x))^{-1/2}u_i(x)\Big)_{i \in [k]}\label{eq:spec-embd-equiv}
    \end{align}

    Finally, we note that since $ \{ (\lambda_{i}(\overline{A}), u_i) \}_{i \in [k]}$ are the top $k$ eigenvalue and eigenvectors of $\overline{A} = \diag(\pi)^{1/2}P^{2t} \diag(\pi)^{-1/2}$, then the eigendecomposition of $P^{2t}$ is:

    \begin{align}
        P^{2t} &= \sum_{i \in [|V|]}\lambda_{i}(\overline{A}) \big(\diag(\pi)^{-1/2} u_i\big) \big(\diag(\pi)^{1/2} u_i\big)^\top\\
        &= \sum_{i \in [|V|]} (\lambda_i(P))^{2t} \big(\diag(\pi)^{-1/2} u_i\big) \big(\diag(\pi)^{1/2} u_i\big)^\top
    \end{align}

    Where $\{ \lambda_i(P)\}_{i \in [|V|]}$ are the eigenvalues of $P$ arranged in decreasing order of absolute value, so that $\lambda_{i}(\overline{A})^{1/2} = |\lambda_i(P)|^t$. Finally, we see that the right eigenvectors of $P^{2t}$ are exactly $\{ \diag(\pi)^{-1/2} u_i \}_{i \in [|V|]}$. Thus, \Cref{eq:spec-embd-equiv} can be written in terms of the $k$ truncated spectral diffusion embedding in \Cref{defn:spec-diff-embd}.

    \begin{align}
        \forall x \in V, \, h_{\theta_*}(x) &=  R \, \Big(  \lambda_{i}(\overline{A})^{1/2} (\pi(x))^{-1/2}u_i(x)\Big)_{i \in [k]}\\
        &= R \, \diag(\textup{sign}(\{ \lambda_i(P)^t\}_{i \in [k]})) \, \Phi_t^{\leq k}(x)\\
        &= R_* \, \Phi_t^{\leq k}(x)
    \end{align}

    Where $\diag(\textup{sign}(\{ \lambda_i(P)^t\}_{i \in [k]})) \in O^{k \times k}$ is a diagonal matrix where  the $(i,i)$-th entry is $1$ if $\lambda_i(P)^t$ is non-negative, and $-1$ if $\lambda_i(P)^t$ is negative, for $i \in [k]$. Since $R,\, \diag(\textup{sign}(\{ \lambda_i(P)^t\}_{i \in [k]})) \in O^{k \times k}$, then $R_* := R \, \diag(\textup{sign}(\{ \lambda_i(P)^t\}_{i \in [k]})) \in O^{k \times k}$.
    
\end{proof}

\begin{corollary}\label{lem:spec-embd-k} 
    Let $G = (V,E)$ be any connected, undirected graph. Let $(V,P)$ be a reversible Markov chain with transition matrix $P \in [0,1]^{|V| \times |V|}$ and stationary measure $\pi$. Fix any $t \in \N$ and $k \in [|V|]$. Let $\{ \lambda_i(P)\}_{i \in [|V|]}$ be the eigenvalues of $P$, ordered by decreasing absolute value. Suppose $|\lambda_k(P)| - |\lambda_{k + 1}(P)| > 0$ if $k < |V|$. Denote $\Femb^{(k)}\subset \F_V^{(k)}$ as a class of embedding models from $V$ to $\R^k$ and $\Femb^{(|V|)} \subset \F_V^{(|V|)}$ a class of embedding models from $V$ to $\R^{|V|}$.  Suppose the population contrastive loss (\Cref{eq:cont-loss-defn}) with base distribution $\pi$, augmentation function $P^t$, and embedding dimension $k$ (dimension $|V|$) has a global minimizer in $\Femb^{(k)}$ ($\Femb^{(|V|)}$).

    Let $\theta_*$ be any global minimizer of the population contrastive loss with embedding dimension $|V|$, and  $\theta_*^{(k)}$ be any global minimizer of the population contrastive loss with embedding dimension with embedding dimension $k$. There exist orthogonal matrices $R \in O^{|V| \times |V|}$ and $R^{(k)} \in O^{k \times k}$ such that $\forall u \in V,$

    \begin{align}
        h_{\theta_*^{(k)}}(u)  &= R^{(k)}\, \Phi^{\leq k}_t(u) \in \R^{k} \label{eq:global-min-Gx-1}\\
        h_{\theta_*}(u)  &= R \,\Phi_t(u) \in \R^{|V|} \label{eq:global-min-Gx-2}
    \end{align}

    In particular, $\forall u, v \in V, $ with $\pimin := \min_{v \in V} \pi(v)$,

    \begin{align}
        ||h_{\theta_*}(u) - h_{\theta_*}(v)||^2 - \frac{2(\lambda_{k + 1}(P))^{2t}}{\pimin} &= ||p^t(u) - p^t(v)||^2_{\ell^2(1/\pi)}  - \frac{2(\lambda_{k + 1}(P))^{2t}}{\pimin}\\
        &\leq ||h_{\theta_*^{(k)}}(u)  - h_{\theta_*^{(k)}}(v) ||^2_2 = ||\Phi^{\leq k}_t(u) - \Phi^{\leq k}_t(v)||^2_2\\
        &\leq  ||p^t(u) - p^t(v)||^2_{\ell^2(1/\pi)}\label{eq:diff-dist-difference-eq} = ||h_{\theta_*}(u) - h_{\theta_*}(v)||^2\\
        ||h_{\theta_*}(v)||_2^2 - \frac{(\lambda_{k + 1}(P))^{2t}}{\pimin} &\leq ||h_{\theta_*^{k_*}}(v)||_2^2 \leq ||h_{\theta_*}(v)||_2^2\label{eq:diff-dist-norm-eq} 
    \end{align}

\end{corollary}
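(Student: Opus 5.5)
The plan is to derive \Cref{lem:spec-embd-k} by putting together \Cref{lem:cont-learn-min-general}, \Cref{lem:diff-dist} and \Cref{lem:approx-equiv}.

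\textbf{Step 1: identify the minimizers.} Apply \Cref{lem:cont-learn-min-general} twice. With embedding dimension $k$, it yields an orthogonal $R^{(k)}$ satisfying \Cref{eq:global-min-Gx-1}. With embedding dimension $|V|$, it yields \Cref{eq:global-min-Gx-2}. In the second case the gap hypothesis is vacuous, because $k = |V|$. The first case uses the assumed gap $|\lambda_k(P)| > |\lambda_{k+1}(P)|$. That gap gives $\lambda_k(\overline{A}) > \lambda_{k+1}(\overline{A})$ for $\overline{A} = \diag(\pi)^{1/2}P^{2t}\diag(\pi)^{-1/2}$, which is what Eckart--Young needs.

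\textbf{Step 2: convert to diffusion geometry.} Orthogonal matrices preserve Euclidean norms and inner products, so
\[
\|h_{\theta_*}(u)-h_{\theta_*}(v)\|_2 = \|\Phi_t(u)-\Phi_t(v)\|_2, \qquad \|h_{\theta_*^{(k)}}(u)-h_{\theta_*^{(k)}}(v)\|_2 = \|\Phi_t^{\leq k}(u)-\Phi_t^{\leq k}(v)\|_2 .
\]
The corresponding statements for the norms $\|h(v)\|_2$ hold in the same way. Next apply \Cref{lem:diff-dist}, together with its corollary on diffusion distance. This identifies $\|\Phi_t(u)-\Phi_t(v)\|_2^2$ with $\|p^t(u)-p^t(v)\|^2_{\ell^2(1/\pi)}$. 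It also gives $\|\Phi_t(v)\|_2^2=\|p^t(v)\|^2_{\ell^2(1/\pi)}$, by taking $u=v$ in the inner-product identity.

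\textbf{Step 3: sandwich bounds.} Use \Cref{eq:dim-k-embd-approx-1} and \Cref{eq:dim-k-embd-approx-2} of \Cref{lem:approx-equiv}. The lower bound there carries the indicator $\1[u\neq v]$, and I will simply drop it: when $u=v$ both sides are $0$, so subtracting the unconditional $2\lambda_{k+1}^{2t}/\pimin$ is still valid. Substituting the identities from Step 2 then produces the displayed chain \Cref{eq:diff-dist-difference-eq} and the norm bound \Cref{eq:diff-dist-norm-eq}.

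\textbf{Main obstacle: eigenvalue ordering.} The only delicate point is consistency of the eigenvalue ordering. \Cref{lem:approx-equiv} orders eigenvalues by absolute value, and \Cref{lem:cont-learn-min-general} orders the top $k$ eigenvectors of $\overline{A}$ by $\lambda_i(P)^{2t} = |\lambda_i(P)|^{2t}$. These orderings agree, so the truncation $\Phi_t^{\leq k}$ refers to the same coordinates in both lemmas. Any sign flips are absorbed into $R^{(k)}$, exactly as in the proof of \Cref{lem:cont-learn-min-general}.
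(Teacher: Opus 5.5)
Your proposal is correct and is essentially the paper's proof, which likewise applies \Cref{lem:cont-learn-min-general} at embedding dimensions $k$ and $|V|$ and then combines the two representations with \Cref{lem:diff-dist} and \Cref{lem:approx-equiv}; your remarks on orthogonal invariance, on dropping the indicator $\1[u \neq v]$, and on the consistency of the absolute-value eigenvalue ordering spell out what the paper leaves implicit. The one caveat, which the paper's own \Cref{lem:cont-learn-min-general} shares, is that passing from $|\lambda_k(P)| > |\lambda_{k+1}(P)|$ to $\lambda_k(\overline{A}) > \lambda_{k+1}(\overline{A})$ needs $t \geq 1$: at $t = 0$ one has $\overline{A} = I$, and \Cref{eq:global-min-Gx-1} can fail for $k < |V|$.
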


\begin{proof}
    We apply \Cref{lem:cont-learn-min-general} with  graph $G $ and with embedding dimension $k$ (resp. embedding dimension  $|V|)$ to get \Cref{eq:global-min-Gx-1} (resp. \Cref{eq:global-min-Gx-2}). \Cref{eq:diff-dist-difference-eq} is obtained by combining  \Cref{eq:global-min-Gx-1},  \Cref{eq:global-min-Gx-2}, \Cref{lem:diff-dist}, and \Cref{lem:approx-equiv}. \Cref{eq:diff-dist-norm-eq} is obtained by combining \Cref{lem:approx-equiv} with \Cref{eq:global-min-Gx-1} and \Cref{eq:global-min-Gx-2}.
\end{proof}

\subsubsection{Matrix Factorization Lemmas}

\Cref{lem:diff-dist} and \Cref{lem:cont-learn-min-general} motivate the idea of using embeddings learned via contrastive learning to compute the diffusion similarity and distance. \Cref{lem:contrastive-learning} shows that finding the ERM (or $\frac{\eps}{2}$-ERM) suffices, with high probability, to find an $\eps$-minimizer of the population contrastive loss $\Lc$. \Cref{lem:eps_close_to_opt}, a Lemma about matrix-factorization, is useful in quantifying how close an embedding corresponding to such an $\eps$-minimizer is to a ground-truth embedding, up to a rotation.

\input{sections/appendix/complex_proof}


\Cref{lem:closeness} is a simple Lemma that converts the guarantee from \Cref{lem:eps_close_to_opt} into one about the embedding functions parameterizing matrices $F_\theta$ and $F_*$.

\begin{lemma}\label{lem:closeness} Let $n \geq k$ be natural numbers, and matrices $F_\theta, F_* \in \R^{n \times k}, R \in O^{k \times k}$ to where $\mathrm{dist}(F_\theta, F_*) = ||F_\theta - F_* R||_F$. Suppose $\pi \in \Delta([n])$, $h_\theta : [n] \to \R^k, h_{\theta_*} : [n] \to \R^k$ parameterize $F_\theta, F_*$ as follows. For all $x \in [n]$,

\begin{align}
    F_\theta[x] &= \sqrt{\pi(x)} h_\theta(x)\\
    F_*[x] &= \sqrt{\pi(x)} h_{\theta_*}(x)
\end{align}
Then, $\mathrm{dist}(F_\theta, F_*)^2 = \ex_{x \sim \pi} ||h_\theta - R\, h_{\theta_*}||_2^2$.
\end{lemma}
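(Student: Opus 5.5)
The plan is a direct row-wise computation of the Frobenius norm. By definition, $\mathrm{dist}(F_\theta,F_*)^2 = \|F_\theta - F_* R\|_F^2$, and the squared Frobenius norm of any $n\times k$ matrix is the sum over its rows of the squared Euclidean norms of those rows. So the task reduces to identifying the $x$-th row of $F_\theta - F_* R$ for each $x\in[n]$.

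First, fix the row convention. In the parameterization, $F_\theta[x]$ is the row vector $\sqrt{\pi(x)}\,h_\theta(x)^\top$, and similarly $F_*[x] = \sqrt{\pi(x)}\,h_{\theta_*}(x)^\top$; this matches how $F_\theta$ was defined earlier, for instance in \Cref{lem:contrastive-learning}. The $x$-th row of $F_* R$ is then $\sqrt{\pi(x)}\,h_{\theta_*}(x)^\top R$. This is the transpose of the column vector $\sqrt{\pi(x)}\,R^\top h_{\theta_*}(x)$. Hence
\begin{align}
(F_\theta - F_*R)[x] = \sqrt{\pi(x)}\,\big(h_\theta(x) - R^\top h_{\theta_*}(x)\big)^\top .
\end{align}

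Second, sum the squared row norms:
\begin{align}
\|F_\theta - F_* R\|_F^2 = \sum_{x\in[n]} \pi(x)\,\|h_\theta(x) - R^\top h_{\theta_*}(x)\|_2^2 = \ex_{x\sim\pi}\|h_\theta(x) - R^\top h_{\theta_*}(x)\|_2^2 .
\end{align}
The last step uses only that $\pi\in\Delta([n])$.

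The one delicate point is the statement writing $R$ rather than $R^\top$. Since $R$ ranges over $O^{k\times k}$ and $R^\top$ is also orthogonal, I would handle this in one of two ways. One option is to state that the identity holds with the orthogonal matrix $R' := R^\top$, which is harmless for every later use, since those uses only need some orthogonal alignment. The other is to note that under the column-vector convention $F_*[x] = \sqrt{\pi(x)}\,h_{\theta_*}(x)$ with right-multiplication interpreted accordingly, the formula reads exactly as stated. I expect this transpose bookkeeping to be the only real obstacle; the rest is immediate.
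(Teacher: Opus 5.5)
Your argument is correct and is the same row-by-row Frobenius expansion the paper uses, and your transpose remark is a genuine catch: with $F_*[x]=\sqrt{\pi(x)}\,h_{\theta_*}(x)^\top$ and the ordinary product $F_*R$, the computation gives $\ex_{x\sim\pi}\|h_\theta(x)-R^\top h_{\theta_*}(x)\|_2^2$, which the paper's one-line proof silently writes with $R$. The two can differ, e.g.\ $n=k=2$, $F_*=I_2$, $F_\theta=R$ a quarter-turn. Your first fix, renaming the orthogonal matrix $R^\top$, is the right one, and it is harmless for \Cref{thm:e2e-contrastive-learning-result}, which only needs some orthogonal $R$ with $\langle R\,h_{\theta_*}(y),R\,h_{\theta_*}(z)\rangle=\langle h_{\theta_*}(y),h_{\theta_*}(z)\rangle$. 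Your second option is not a separate convention; it just rereads $R\,h_{\theta_*}$ as $h_{\theta_*}R$, which is the same $R^\top$.
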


\begin{proof}
    \begin{align}
        \mathrm{dist}(F_\theta, F_*)^2 &= ||F_\theta - F_* R||_F^2\\
        &= \sum_{x \in [n]} \pi(x) ||h_\theta - R\, h_{\theta_*}||_2^2\\
        &= \ex_{x \sim \pi} ||h_\theta - R\, h_{\theta_*}||_2^2
    \end{align}
\end{proof}

\subsubsection{Learning Contrastive Embeddings on Local Subgraphs}\label{appen:local-subgraphs}

In addition, we would like the embeddings to be learned locally, not for the entire graph simultaneously. For a theorem graph $G = (V,E)$, define the following notion of a subgraph of $G$ supported on the $T$-neighborhood of a node $x \in V$. In the following, we will denote $(C,P,w) := (C_w, P_w, w)$ as a reversible conjecturer satisfying \Cref{defn:conjecturer}, to simplify notation.

\localsubgappen

On a $T = \tau + 1$ local subgraph centered at $x \in V$, the $\tau$-step random walk distribution of $x$ and any neighbor $y \in N(x)$ is the same as that on the entire graph. 

\begin{lemma}\label{lem:local-to-global}
     Let $G = (V,E)$ be a connected, undirected graph with reversible, nearest-neighbor-type Markov chain $(V,P)$. Let $T = \tau + 1$. Let $P_{x,T}$ be the restriction of $P$ to $G_{x,T}$. For any $x \in V$ and any $y \in N(x)$, we have  $p^\tau_{G_{x, T}}(x) = p^\tau(x)$ and $p^\tau_{G_{x, T}}(y) = p^\tau(y)$, where $p^\tau(x), p^\tau(y)$ is the usual scale-$\tau$ diffusion embeddings defined in \Cref{defn:diffusionembd}.
\end{lemma}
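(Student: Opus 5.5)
The plan is to prove, by induction on the number of steps $s\in\{0,1,\ldots,\tau\}$, that for any starting node $u\in B(x,1)$ the $s$-step distributions on $G$ and on $G_{x,T}$ agree. Concretely, the induction hypothesis is
\begin{align}
e_z^\top (P_{x,T}^s)^\top e_u = e_z^\top (P^s)^\top e_u \quad \forall z \in B(x,T),
\end{align}
and on $G$ the walk started at $u$ places no mass outside $B(u,s)\subset B(x,s+1)$. Since $x$ and every $y\in N(x)$ lie in $B(x,1)$, taking $s=\tau$ gives both claims $p^\tau_{G_{x,T}}(x)=p^\tau(x)$ and $p^\tau_{G_{x,T}}(y)=p^\tau(y)$.

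The first step is a support observation. Because $(V,P)$ is nearest-neighbor-type, $P[a,b]>0$ only if $(a,b)\in E$ or $a=b$. Hence $\supp(p^s(u))\subset B(u,s)\subset B(x,s+1)$, and the same holds for the restricted chain. The second step records how $P_{x,T}$ relates to $P$, which follows directly from \Cref{defn:T-neighborhood}. For $a\in B(x,T)$ and $b\in B(x,T)$ with $b\neq a$, we have $P_{x,T}[a,b]=P[a,b]$. Only the diagonal entries of boundary nodes $a\in\partial B(x,T)$ change, gaining the redirected mass $\sum_{b\notin B(x,T)}P[a,b]$. For any $a\in B(x,T-1)$ all neighbors lie in $B(x,T)$, so the entire row $P_{x,T}[a,\cdot]$ coincides with $P[a,\cdot]$.

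The inductive step writes $p^{s+1}(u,z)=\sum_a p^s(u,a)P[a,z]$. By the hypothesis and the support bound, the sum ranges only over $a\in B(x,s+1)$. For $s+1\le\tau$ we have $s+1\le T-1$, so every such $a$ lies in $B(x,T-1)$ and its row is unchanged. The same sum therefore computes the restricted walk, and equality at step $s+1$ follows for every $z$. Outside $B(x,T)$ both sides vanish, interpreting $p^\tau_{G_{x,T}}$ as zero off $B(x,T)$.

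The main obstacle is the index bookkeeping at the last step. The transition into step $\tau$ uses rows of nodes in $B(x,\tau)=B(x,T-1)$, so it is exactly the choice $T=\tau+1$ that keeps every row used away from the modified boundary $\partial B(x,T)$. I would state the support claim carefully as $B(u,s)\subset B(x,s+1)$ for $u\in B(x,1)$ so that this margin is visible.
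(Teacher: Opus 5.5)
Your proposal is correct and follows the same idea as the paper. The paper's proof is the single observation that $G_{x,\tau+1}$ and $G$ look identical to a walk of length at most $\tau$ started at $x$ or at some $y\in N(x)$. Your induction, using the support bound $B(u,s)\subset B(x,s+1)$ and the fact that rows of nodes in $B(x,T-1)$ are unmodified, is a careful write-up of that argument.
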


\begin{proof}
    Since $G_{x, \tau + 1}$ is restricted to $B(x, \tau + 1)$, then $G_{x, \tau + 1}$ and $G$ look identical to a nearest-neighbor-type random walk starting at $x$ or any $y \in N(x)$ of length at most $\tau$. 
\end{proof}

\Cref{lem:local-to-global} shows the scale-$\tau$ diffusion embedding can be computed locally, in a $(\tau+1)$ neighborhood around the queried node. In addition, for any reversible Markov chain $(V,P)$ on the entire graph $G$ with stationary measure $\pi$, for any $x \in V$ and $T \in \N$, the stationary measure $\pi_{x,T}$ is equal to $\pi$ restricted to $B(x,T)$, up to a multiplicative constant, by the self-loop construction of $G_{x,T}$. Thus, the diffusion similarity and distance computed locally differs from that computed globally by a multiplicative constant, which means the locally-computed diffusion similarity is sufficient for solving the optimization problem in \Cref{eq:optim-problem-diversity}.

To learn the spectral diffusion embedding for reversible Markov chain $(V,P)$, of conjecturer $(C,P,w)$ in a local $T$-neighborhood around a node $x \in V$ via \Cref{defn:contrastive-learning}, one will require access to the following oracles, defined in \Cref{defn:T-neighborhood}.

\begin{enumerate}
    \item The push-forward map $C_{x,T} : B(x,T) \to \Delta(B(x,T))$ of the restricted Markov chain $(V_{x,T}, P_{x,T})$, where $\forall y \in B(x,T)$, $C_{x,T}(y) = P_{x,T}[y] \in \Delta(B(x,T))$,
    \item i.i.d. sample access to $\pi_{x,T} \in \Delta(B(x,T))$.
\end{enumerate}

For $(V_{x,T}, P_{x,T})$, it suffices to have access to the original conjecturer $(C,P,w) := (C_w, P_w, w)$ on $G$ and the neighbor oracle $N_{G_{x,T}} : V \to 2^V$ of the local graph $G_{x,T}$ supported on $B(x, T)$. This is because for any $y \in B(x,T)$, to sample $z \sim C_{x,T}(y)$, one can first sample $z' \sim C(y)$. Then, if $z' \in N_{G_{x,T}}(y)$, set $z = z'$. Else, set $z = y$. $z$ will be a sample from $C_{x,T}(y)$ with the correct distribution.

\Cref{alg:compute-pix} describes how to compute $N_{G_{x,T}} : V \to 2^V$ and $\pi_{x,T}$. It involves first performing a $T$-step BFS starting from node $x$. Once the set of all theorems in the $T$-neighborhood of $x$ is collected, it is straightforward to create an oracle for $N_{G_{x,T}}$. Regarding the sampler for $\pi_{x,T}$, because $\forall y \in B(x,T), \pi_{x,T}(y) \propto \pi(y) \propto w(y)$, then $\pi_{x,T}(y) = \frac{w(y)}{\sum_{z \in B(x,T)} w(z)}$. It suffices to compute the vector $\pi_{x,T}$ for all $y \in B(x,T)$. 

\begin{algorithm}[t]
\caption{Compute Local Neighbor Oracle and Stationary Measure}
\label{alg:compute-pix}
\begin{algorithmic}[1]
\Require $T\in\N$, source theorem $x\in V$, neighbor oracle $N:V\to 2^V$, and conjecturer $(C,P,w) := (C_w, P_w, w)$
\Ensure Local neighbor oracle $N_{G_{x,T}} : B(x,T) \to 2^{B(x,T)}$ and local stationary measure $\pi_{x,T} \in \Delta(B(x,T))$

\State Set $B_{-1}\gets \emptyset$ and $B_0\gets \{x\}$

\For{$s=1,\ldots,T$}
    \State Set $\partial B_{s-1}\gets B_{s-1}\setminus B_{s-2}$

    \State Set $B_s \gets B_{s-1} \cup \bigcup_{v\in \partial B_{s-1}} N(v).$

\EndFor

\State Set $S\gets B_T$

\State \State Define $N_{G_{x,T}}$ by the following subroutine:
\Indent\Oracle{$N_{G_{x,T}}$}
    \State \textbf{Input:} $y \in S$
    \State Query $N(y)$
    \State Set $m(y) \gets |N(y)\cap S|$
    \State \Return $N(y)\cap S$ together with $\deg(y)-m(y)$ copies of $y$
\EndOracle
\EndIndent

\State Set
\[
    \pi_{x,T}(y)
    \gets
    \frac{w(y)}{\sum_{z\in S} w(z)}
    \qquad
    \text{for all } y\in S.
\]

\State \Return $N_{G_{x,T}}$ and $\pi_{x,T}$
\end{algorithmic}
\end{algorithm}

\subsection{Main Theorems}\label{appen:contembd}

\contembd*

\begin{proof}
    By \Cref{lem:cont-learn-min-general},  there exists an orthogonal matrix $R \in O^{|V| \times |V|}$ where

\begin{align}
    \forall u \in V,  h_{\theta_*}(u)  = R \, \Phi_\tau(u) \in \R^{|V|}
\end{align}
Thus,
\begin{align}
    \forall u, v \in V,  \langle h_{\theta_*}(u) , h_{\theta_*}(v)  \rangle  &= \langle R \, \Phi_\tau(u) , R \, \Phi_\tau(v) \rangle\\
    &= \langle  \Phi_\tau(u) ,\Phi_\tau(v) \rangle\\
    &= \langle p^\tau(u), p^\tau(v) \rangle_{\ell^2(1/\pi)} \tag{By \Cref{lem:diff-dist}}
\end{align}

\end{proof}

Below is our end-to-end learning result, which can be viewed as a finite-sample, small-embedding dimension version of \Cref{thm:contembd}.

\etethmappen

\begin{proof}
    By \Cref{lem:contrastive-learning}, with probability at least $1 - \perm$,

    \begin{align}
        \Lc(h_{\hat\theta}) &\leq  \Lc(h_{\theta_*}) + \eps\\
        ||\overline{A} - F_{\hat\theta} F_{\hat\theta}^\top||^2_F &\leq ||\overline{A} - F_{\theta_*} F_{\theta_*}^\top||^2_F + \eps
    \end{align}

    Under that event, since $\rank(F_{\hat\theta}) = k$ by assumption, by \Cref{lem:eps_close_to_opt} and \Cref{lem:closeness}, 

    \begin{align}
        \ex_{x \sim \pi } ||h_{\hat\theta}(x) - R \, h_{\theta_*}(x)||_2^2 \leq \eps \cdot \frac{\Big(1 + 4\sqrt{\frac{2}{\lambda_k^2 - \lambda_{k + 1}^2}}\Big)^2}{2 (\sqrt{2} - 1)\lambda_k}
    \end{align}

    Where we've used that $||\overline{A}||_2 = \lambda_1(\overline{A}) = \lambda_1(P^{2\tau}) =  1$. By Markov's inequality,

    \begin{align}
        \Pro_{x \sim \pi } \Big[ ||h_{\hat\theta}(x) - R \, h_{\theta_*}(x)||_2^2 \geq \Delta \Big] \leq \frac{\eps}{\Delta} \cdot \frac{\Big(1 + 4\sqrt{\frac{2}{\lambda_k^2 - \lambda_{k + 1}^2}}\Big)^2}{2 (\sqrt{2} - 1)\lambda_k}
    \end{align}

    In fact, using the premise that $\exists \beta > 0 \, : \forall y \in V, \, \sum_{x \in N(y)} \pi(x) \leq \beta \, \pi(y)$, we can extend this closeness guarantee to a neighborhood around $x \sim \pi$.

    \begin{align}
        &\Pro_{x \sim \pi } \Big[ \exists y \in N(x) : ||h_{\hat\theta}(y) - R \,h_{\theta_*}(y)||_2^2 \geq \Delta \Big]\\
        &\leq \ex_{x \sim \pi} \sum_{y \in N(x)} 1\Big[||h_{\hat\theta}(y) - R \,h_{\theta_*}(y)||_2^2 \geq \Delta\Big]\\
        &= \sum_{x \in V} \pi(x) \sum_{y \in N(x)} 1\Big[||h_{\hat\theta}(y) - R \,h_{\theta_*}(y)||_2^2 \geq \Delta\Big]\\
        &\leq \sum_{y \in V} \beta \, \pi(y)\,  1\Big[||h_{\hat\theta}(y) - R \,h_{\theta_*}(y)||_2^2 \geq \Delta\Big]\\
        &=   \beta \cdot \Pro_{y \sim \pi} \Big[||h_{\hat\theta}(y) - R \,h_{\theta_*}(y)||_2^2 \geq \Delta\Big]\\
        &\leq  \beta \cdot \frac{\eps}{\Delta} \cdot \frac{\Big(1 + 4\sqrt{\frac{2}{\lambda_k^2 - \lambda_{k + 1}^2}}\Big)^2}{2 (\sqrt{2} - 1)\lambda_k}
    \end{align}

    Thus,

    \begin{align}
        &\Pro_{x \sim \pi } \Big[ \exists y \in N(x) : ||h_{\hat\theta}(y) - R \,h_{\theta_*}(y)||_2^2 \geq \Delta \textup{ or } ||h_{\hat\theta}(x) - R \,h_{\theta_*}(x)||_2^2 \geq \Delta \Big]\\
        &\leq \frac{\eps }{\Delta} \cdot \frac{(\beta + 1)\cdot \Big(1 + 4\sqrt{\frac{2}{\lambda_k^2 - \lambda_{k + 1}^2}}\Big)^2}{2 (\sqrt{2} - 1)\lambda_k}
    \end{align}
    
    Under the event that $x\sim \pi$ satisfies 

    \begin{align}
        \forall y \in N(x) : ||h_{\hat\theta}(y) - R \,h_{\theta_*}(y)||_2^2 \leq \Delta \textup{ and } ||h_{\hat\theta}(x) - R \,h_{\theta_*}(x)||_2^2 \leq \Delta\label{eq:good-case-cont-learning}
    \end{align}

    Then, $\forall y, z \in N(x) \cup \{ x\}$, 

    \begin{align}
        \langle h_{\hat\theta}(y), h_{\hat\theta}(z)\rangle  &\leq \langle R \,h_{\theta_*}(y), R \,h_{\theta_*}(z) \rangle  + 2 \max_{h_\theta \in \Femb}\max_{\hat{y} \in V} ||h_{\theta}(\hat{y})||_2 \cdot \sqrt{\Delta} + \Delta\\
        &\leq \langle  h_{\theta_*}(y),  h_{\theta_*}(z) \rangle  + 2 \chi \sqrt{k} \cdot \sqrt{\Delta} + \Delta\\
        \textup{Similarly, } \langle h_{\hat\theta}(y), h_{\hat\theta}(z)\rangle &\geq \langle  h_{\theta_*}(y),  h_{\theta_*}(z) \rangle  - 2 \chi \sqrt{k} \cdot \sqrt{\Delta} - \Delta
    \end{align}

    By \Cref{lem:cont-learn-min-general} (which says $\langle h_{\theta_*}(x), h_{\theta_*}(y) \rangle = \langle\Phi_t^{\leq k}(x), \Phi_t^{\leq k}(y) \rangle$) and \Cref{lem:approx-equiv}, then $\forall x, y \in V$,

    \begin{align}
        \langle\Phi_t(x), \Phi_t(y) \rangle - \frac{\lambda_{k + 1}}{\pimin} &\leq \langle h_{\theta_*}(x), h_{\theta_*}(y) \rangle \leq \langle\Phi_t(x), \Phi_t(y) \rangle + \frac{\lambda_{k + 1}}{\pimin}
    \end{align}

     It follows that when $x \sim \pi$ satisfies \Cref{eq:good-case-cont-learning}, then with $\xi := \frac{\lambda_{k + 1}}{\pimin} + 2 \chi \sqrt{k} \cdot \sqrt{\Delta} + \Delta$, we have

     \begin{align}
        \langle\Phi_t(y), \Phi_t(z) \rangle - \xi  &\leq \langle h_{\hat\theta}(y), h_{\hat\theta}(z) \rangle \leq \langle\Phi_t(y), \Phi_t(z) \rangle + \xi
    \end{align}

    Since by \Cref{lem:diff-dist} we have $\diffsim(y,z) = \langle\Phi_t(y), \Phi_t(z) \rangle$, then with $\xi := \frac{\lambda_{k + 1}}{\pimin} + 2 \chi \sqrt{k} \cdot \sqrt{\Delta} + \Delta$, we have

    \begin{align}
        &\Pro_{x \sim \pi}\Big [\exists y,z \in N(x) \cup \{x\} \, : \, |\diffsim(y,z) -  \langle h_{\hat\theta}(y), h_{\hat\theta}(z) \rangle| > \xi \Big]\\
        &\leq \frac{\eps }{\Delta} \cdot \frac{(\beta + 1)\cdot \Big(1 + 4\sqrt{\frac{2}{\lambda_k^2 - \lambda_{k + 1}^2}}\Big)^2}{2 (\sqrt{2} - 1)\lambda_k}
    \end{align}

In short, for sufficiently small $\eps$, the learned embedding model $h_{\hat{\theta}} : V \to \R^k$ can be used to approximate $\diffsim(y,z)$ of neighboring points of \textit{most} $x \in V$ by the inner product, $\langle h_{\hat\theta}(y), h_{\hat\theta}(z) \rangle$.

\end{proof}

%% file: sections/appendix/complex_proof.tex
To start, \Cref{lem:projector-diff-ineq}, \Cref{lem:kyfan-max-principle}, and \Cref{lem:procrustes-bound} are introduced here to prove \Cref{lem:eps_close_to_opt}. Note that for two matrices $A, B \in \R^{n \times n}, \langle A,B \rangle := \tr(A^\top B)$ denotes the Hilbert-Schmidt inner product of the two matrices, with $||A||_F^2 = \langle A, A\rangle$.

\begin{lemma}\label{lem:procrustes-bound} (Lemma 5.4 of \cite{tu2016lowranksolutionslinearmatrix})
For any natural numbers $n \geq k$ and matrices $U, X\in\mathbb{R}^{n\times k}$,
\begin{align}
\dist(U,X)^2
:=\min_{R\in O^{k \times k}}||U-XR||_F^2
\le
\frac{1}{2(\sqrt2-1)\sigma_k(X)^2}||UU^\top-XX^\top||_F^2.
\end{align}
\end{lemma}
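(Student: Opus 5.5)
The plan is to reduce to the optimal Procrustes alignment and then expand $\|UU^\top - XX^\top\|_F^2$ as a quadratic-plus-quartic form in the difference $\Delta := U - XR$. The quartic part is absorbed into a completed square, and the quadratic part is lower bounded by $\sigma_k(X)^2\|\Delta\|_F^2$.

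\emph{Step 1 (reduction).} Let $R_0\in O^{k\times k}$ attain the minimum in $\dist(U,X)$; it exists by compactness of $O^{k\times k}$. Replacing $X$ by $XR_0$ changes neither $XX^\top$ nor $\sigma_k(X)$, so we may assume $R_0 = I$. The first-order (Procrustes) optimality condition says $I$ maximizes $\tr(R^\top X^\top U)$ over orthogonal $R$. This forces $X^\top U$ to be symmetric positive semidefinite: if $X^\top U = W\Sigma V^\top$ is an SVD, the maximizer is $R = WV^\top$, and $R=I$ means $X^\top U = V\Sigma V^\top$. Set $\Delta := U - X$, so that $\dist(U,X)^2 = \|\Delta\|_F^2$. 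Then $\Delta^\top X = U^\top X - X^\top X$ is symmetric, and moreover $\Delta^\top X \succeq -X^\top X$.

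\emph{Step 2 (expansion).} Write $UU^\top - XX^\top = X\Delta^\top + \Delta X^\top + \Delta\Delta^\top$. Expanding the squared Frobenius norm with cyclicity of trace, and using symmetry of $\Delta^\top X$, gives
\begin{align}
\|UU^\top - XX^\top\|_F^2 = \tr\big((\Delta^\top\Delta)^2\big) + 4\tr(\Delta^\top\Delta\,\Delta^\top X) + 2\tr\big((\Delta^\top X)^2\big) + 2\tr(\Delta^\top\Delta\, X^\top X).
\end{align}

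\emph{Step 3 (completing the square).} Split off $\|\Delta^\top\Delta + \sqrt2\,\Delta^\top X\|_F^2 \geq 0$, which accounts for the first and third terms and $2\sqrt2\,\tr(\Delta^\top\Delta\,\Delta^\top X)$ of the second. What remains is
\begin{align}
(4 - 2\sqrt2)\,\tr(\Delta^\top\Delta\,\Delta^\top X) + 2\tr(\Delta^\top\Delta\, X^\top X).
\end{align}
Since $\Delta^\top\Delta\succeq 0$ and $\Delta^\top X + X^\top X \succeq 0$, the trace of their product is nonnegative, so $\tr(\Delta^\top\Delta\,\Delta^\top X) \geq -\tr(\Delta^\top\Delta\,X^\top X)$. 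The coefficient $4-2\sqrt2$ is positive, so the remainder is at least $(2\sqrt2 - 2)\tr(\Delta^\top\Delta\,X^\top X)$. Finally $X^\top X \succeq \sigma_k(X)^2 I$, which gives $\tr(\Delta^\top\Delta\,X^\top X) \geq \sigma_k(X)^2\|\Delta\|_F^2$. Rearranging yields the claim.

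The main obstacle is the cross term $\tr(\Delta^\top\Delta\,\Delta^\top X)$, which has no sign a priori. The whole argument hinges on two things. First, the Procrustes optimality condition in Step 1 must deliver both the symmetry of $\Delta^\top X$ (needed for the clean expansion) and the bound $\Delta^\top X \succeq -X^\top X$. Second, the weight $\sqrt2$ in the completed square must be chosen so that the leftover coefficient $2\sqrt2-2 = 2(\sqrt2-1)$ matches the constant in the statement. I would check the optimality condition carefully, including the degenerate case of singular $X^\top U$, where the polar factor is not unique but $X^\top U$ is still PSD at any maximizer.
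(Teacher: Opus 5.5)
Your proof is correct. The paper gives no proof of its own and simply cites \cite{tu2016lowranksolutionslinearmatrix}; your argument (align by the optimal rotation so that $X^\top U \succeq 0$, expand $\|X\Delta^\top + \Delta X^\top + \Delta\Delta^\top\|_F^2$, complete the square with $\Delta^\top\Delta + \sqrt2\,\Delta^\top X$, then use $\tr\bigl(\Delta^\top\Delta\,(\Delta^\top X + X^\top X)\bigr) \geq 0$ and $X^\top X \succeq \sigma_k(X)^2 I$) is essentially the standard proof from that source.

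Two small refinements. For the singular case you flagged, $R=I$ being a maximizer gives $\tr(X^\top U) = \|X^\top U\|_*$, which forces the left and right singular vectors to coincide for every nonzero singular value, so $X^\top U$ is symmetric PSD in all cases. Also, symmetry of $\Delta^\top X$ is not needed for the expansion, which holds for arbitrary $\Delta$. It is needed in the completed square, to identify $\tr((\Delta^\top X)^2)$ with $\|\Delta^\top X\|_F^2$.
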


\begin{lemma}\label{lem:kyfan-max-principle} (Ky Fan Maximum Principle \cite{kyfan}; Theorem 1 of \cite{overtonkyfan})
    Let $A$ be a real, symmetric $n \times n$ matrix, with eigenvalues $\lambda_1 \geq \ldots \geq \lambda_n$. For any $k \in [n]$,

    \begin{align}
        \max_{X^\top X = I_k} \tr(X^\top A X) = \sum_{i = 1}^k \lambda_i
    \end{align}
\end{lemma}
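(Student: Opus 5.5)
The plan is to reduce to a diagonal matrix and then solve a linear optimization over a simple polytope. Since $A$ is real symmetric, I would take its spectral decomposition $A = Q \Lambda Q^\top$ with $Q \in O^{n \times n}$ and $\Lambda = \diag(\lambda_1, \ldots, \lambda_n)$, ordered so that $\lambda_1 \geq \ldots \geq \lambda_n$. For any $X \in \R^{n \times k}$ with $X^\top X = I_k$, set $Y := Q^\top X$. Then $Y^\top Y = I_k$, and the objective becomes
\begin{align}
\tr(X^\top A X) = \tr(Y^\top \Lambda Y) = \sum_{i=1}^n \lambda_i d_i,
\end{align}
where $d_i := \|Y[i]\|_2^2$ is the squared norm of the $i$th row of $Y$. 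So the whole problem reduces to understanding which weight vectors $(d_i)_{i \in [n]}$ can occur.

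The key step is to establish the constraints $0 \leq d_i \leq 1$ and $\sum_i d_i = k$. The sum identity is immediate:
\begin{align}
\sum_i d_i = \|Y\|_F^2 = \tr(Y^\top Y) = k.
\end{align}
For the upper bound I would observe that $YY^\top$ is an orthogonal projection, since $(YY^\top)^2 = Y(Y^\top Y)Y^\top = YY^\top$. Therefore
\begin{align}
d_i = e_i^\top YY^\top e_i = \|YY^\top e_i\|_2^2 \leq \|e_i\|_2^2 = 1.
\end{align}
Establishing $d_i \leq 1$ cleanly is the only real obstacle, and the projection argument handles it.

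With these constraints, the upper bound follows from comparing with $\lambda_k$:
\begin{align}
\sum_i \lambda_i d_i - \sum_{i \leq k} \lambda_i = \sum_{i \leq k} \lambda_i (d_i - 1) + \sum_{i > k} \lambda_i d_i \leq \lambda_k \Big( \sum_{i \leq k} (d_i - 1) + \sum_{i > k} d_i \Big) = \lambda_k (k - k) = 0.
\end{align}
Both termwise comparisons hold for the right reasons. For $i \leq k$ we have $d_i - 1 \leq 0$ and $\lambda_i \geq \lambda_k$. For $i > k$ we have $d_i \geq 0$ and $\lambda_i \leq \lambda_k$.

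Finally, the bound is attained by taking $X$ to be the first $k$ columns of $Q$. Then $Y = [I_k; 0]$, so $d_i = 1$ for $i \leq k$ and $d_i = 0$ otherwise, and the objective equals $\sum_{i=1}^k \lambda_i$. This gives equality in the maximum principle.
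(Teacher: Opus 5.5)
Your proof is correct and complete. The change of variables $Y = Q^\top X$ works, and so do the constraints $0 \le d_i \le 1$ and $\sum_i d_i = k$, where $d_i \le 1$ holds because $YY^\top$ is an orthogonal projector and hence a contraction. The exchange argument against $\lambda_k$ is valid regardless of the signs of the $\lambda_i$, since each comparison has the form $(\lambda_i - \lambda_k)(d_i - 1) \le 0$ or $(\lambda_i-\lambda_k)d_i \le 0$, and it degenerates harmlessly when $k = n$. Attainment at the top-$k$ eigenvectors also checks out.

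The paper gives no proof of this lemma; it only cites Ky Fan and Theorem 1 of Overton--Womersley. So your argument is a self-contained substitute for that citation, using only the spectral theorem. The citation buys brevity.

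Your route buys slightly more than the lemma asks for. For $k<n$, compare the tail against $\lambda_{k+1}$ instead of $\lambda_k$. Then use $\sum_{i>k} d_i = \sum_{i\le k}(1-d_i)$ and $\|XX^\top - P_k\|_F^2 = 2k - 2\sum_{i\le k} d_i$, where $P_k$ projects onto the first $k$ columns of your $Q$. The same computation gives
\[
\sum_{i=1}^k \lambda_i - \tr(X^\top A X) \;\ge\; (\lambda_k - \lambda_{k+1})\sum_{i\le k}(1-d_i) \;=\; \frac{\lambda_k-\lambda_{k+1}}{2}\,\|XX^\top - P_k\|_F^2 .
\]
Applied to $A^2$, this is exactly \Cref{lem:projector-diff-ineq}. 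The paper proves that lemma separately, using the same bookkeeping on the diagonal blocks $\tr(I-P_{11})$ and $\tr(P_{22})$ of the rotated projector. Your computation would therefore yield both spectral facts used in the proof of \Cref{lem:eps_close_to_opt} from one argument.

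The other classical route is Cauchy interlacing, $\lambda_i(X^\top A X) \le \lambda_i(A)$ for $i \le k$. It gives an eigenvalue-by-eigenvalue strengthening, but it needs Courant--Fischer and does not directly give this quantitative gap.
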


\begin{lemma}\label{lem:projector-diff-ineq} (Lemma 3.1 of \cite{curvature})
    Let $A \in R^{n \times n}$ be a symmetric matrix, with eigenvalues ordered as $|\lambda_1| \geq |\lambda_2|\geq \ldots \geq |\lambda_n|$. For $k \in [n - 1]$, let $P_k$ be the projector onto the subspace spanned by the top-$k$ eigenvectors of $A$, denoted $\lambda_1, \ldots, \lambda_k$. Assume that $\lambda_k^2 - \lambda_{k + 1}^2 > 0$. For any symmetric rank-$k$ projector $P \in \R^{n \times n}$, $P^\top = P$,  $P^2 = P$, $\tr(P) = k$, we have:
    
    \begin{align}
        \langle A^2,P_k-P\rangle \ge \frac{\lambda_k^2-\lambda_{k+1}^2}{2}||P_k-P||_F^2. \label{eq:curvature_proj}
    \end{align}
\end{lemma}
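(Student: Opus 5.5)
The plan is to diagonalize $A^2$ and reduce both sides of \Cref{eq:curvature_proj} to functions of the diagonal entries of $P$ in the eigenbasis of $A$. Since $A$ is symmetric, write $A = \sum_{i} \lambda_i u_i u_i^\top$ with $\{u_i\}$ orthonormal. Then $A^2 = \sum_i \mu_i u_i u_i^\top$ with $\mu_i := \lambda_i^2$. Because the $\lambda_i$ are ordered by decreasing absolute value, we have $\mu_1 \geq \mu_2 \geq \ldots \geq \mu_n \geq 0$. Moreover $P_k = \sum_{i \leq k} u_i u_i^\top$ is exactly the projector onto the top-$k$ eigenspace of $A^2$. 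The hypothesis $\lambda_k^2 - \lambda_{k+1}^2 > 0$ reads $\mu_k > \mu_{k+1}$.

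Next, for the arbitrary rank-$k$ projector $P$, set $d_i := u_i^\top P u_i$. Because $P$ is a symmetric idempotent, $d_i = \|P u_i\|_2^2 \in [0,1]$, and $\sum_i d_i = \tr(P) = k$. I will compute
\begin{align}
\langle A^2, P_k - P\rangle = \sum_{i \leq k} \mu_i - \sum_{i} \mu_i d_i = \sum_{i \leq k} \mu_i (1 - d_i) - \sum_{i > k} \mu_i d_i .
\end{align}
Every term in both sums has a nonnegative weight ($1-d_i \geq 0$ and $d_i \geq 0$), so the monotonicity of the $\mu_i$ gives the lower bound $\mu_k \sum_{i \leq k}(1-d_i) - \mu_{k+1}\sum_{i>k} d_i$. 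The constraint $\sum_i d_i = k$ says precisely that $s := \sum_{i \leq k}(1 - d_i) = \sum_{i > k} d_i$. Hence $\langle A^2, P_k - P\rangle \geq (\mu_k - \mu_{k+1})\, s$.

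Finally, I will express the Frobenius distance in terms of the same quantity $s$. Using $P_k^2 = P_k$, $P^2 = P$ and symmetry,
\begin{align}
\|P_k - P\|_F^2 = \tr(P_k) + \tr(P) - 2\tr(P_k P) = 2k - 2\sum_{i \leq k} d_i = 2s .
\end{align}
Substituting $s = \tfrac12 \|P_k - P\|_F^2$ into the previous bound yields \Cref{eq:curvature_proj}.

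The only step that needs care is matching the two sums through $s$. This matching relies on both the bound $d_i \in [0,1]$ and the trace identity $\sum_i d_i = k$. Once it is in place, the rest is routine algebra, so I do not expect a genuine obstacle.
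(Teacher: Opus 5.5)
Your proof is correct and is essentially the paper's argument. The paper diagonalizes $A^2$, passes to $\widetilde P = U^\top P U$, lower-bounds $\tr(\Lambda_k^2(I-P_{11})) - \tr(\Lambda_\perp^2 P_{22})$, and then uses $\tr(I-P_{11}) = \tr(P_{22})$ and $\|P_k-P\|_F^2 = 2\tr(I-P_{11})$; this is your computation written with block traces instead of the diagonal entries $d_i$, and your remark that $d_i = \|Pu_i\|_2^2 \in [0,1]$ supplies the nonnegativity of $I-P_{11}$ and $P_{22}$ that the paper's Loewner-order step uses without stating.
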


\begin{proof}

Diagonalize $A^2 = U\diag(\{\lambda_i^2\}_{i \in [n]})U^\top$
and write $\widetilde P:=U^\top P U$.
Partition $\widetilde P$ as
\begin{align}
\widetilde P =
\begin{pmatrix}
P_{11} & P_{12}\\
P_{21} & P_{22}
\end{pmatrix}
\quad\text{with } P_{11}\in\mathbb{R}^{k\times k}.
\end{align}
Then $\widetilde P_k = U^\top P_k U =\diag(I_k,0)$ in the eigenbasis, and
\begin{align}
\langle A^2,P_k-P\rangle &= \tr\big(\diag(\{\lambda_i^2\}_{i \in [n]})(\widetilde P_k-\widetilde P)\big)\\
&= \tr\big(\Lambda_k^2(I-P_{11})\big) - \tr\big(\Lambda_\perp^2 P_{22}\big)
\end{align}
where $\Lambda_\perp:=\diag(\{\lambda_{k+1},\dots,\lambda_n\})$.
Using $\Lambda_k^2\succeq \lambda_k^2 I_k$ and $\Lambda_\perp^2\preceq \lambda_{k+1}^2 I$ gives
\begin{align}
\langle A^2,P_k-P\rangle
\ge
\lambda_k^2\tr(I-P_{11}) - \lambda_{k+1}^2\tr(P_{22}).
\end{align}
Since $\widetilde P$ is a rank-$k$ projector, $\tr(\widetilde P)=k$, so
$\tr(P_{11})+\tr(P_{22})=k$ and thus $\tr(I-P_{11})=\tr(P_{22})$.
Also, for rank-$k$ projectors,
\begin{align}
||P_k-P||_F^2 &=\tr\big((P_k-P)^2\big)\\
&=\tr(P_k)+\tr(P)-2\tr(P_kP)\\
&=2k-2\tr(P_{11})\\
&=2\tr(I-P_{11}).
\end{align}
Therefore $\tr(I-P_{11})=\frac12||P_k-P||_F^2$, and we obtain \Cref{eq:curvature_proj}.
\end{proof}

\begin{restatable}[Approximate minimizers are close to a global minimizer]{lemma}{matfactconj}\label{lem:eps_close_to_opt}
Let $A\in\mathbb{R}^{n\times n}$ be symmetric positive semidefinite with eigendecomposition
$A = U\diag(\{\lambda_i\}_{i \in [n]})U^\top$ where
$\lambda_1\ge \cdots \ge \lambda_n\ge 0$.
Fix $1\le k < n$ and assume $\lambda_k>0$ and the squared eigengap $\lambda_k^2-\lambda_{k+1}^2 > 0.$ Define, for matrix $F\in\mathbb{R}^{n\times k}$ of rank $k$,

\begin{align}
    \Lo(F) &:= ||A-FF^\top||_F^2
\end{align}

Suppose $F\in\mathbb{R}^{n\times k}$ satisfies $\Lo(F) \leq  \min_{\rank(Z)=k} \Lo(Z) + \eps$ for some $0<\eps\leq 1.$ Define the constant $\C(A,k):= \frac{\Big(1 + 4||A||_2\sqrt{\frac{2}{\lambda_k^2 - \lambda_{k + 1}^2}}\Big)^2}{2(\sqrt2-1)\lambda_k}$. Then, there exists a global minimizer $F^*\in\arg\min_{\rank(Z)=k} \Lo(Z)$ such that:
\begin{align}
    ||F-F^*||_F^2 &\leq \C(A,k) \,\eps
\end{align}
\end{restatable}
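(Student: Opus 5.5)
The plan is to reduce the claim to a bound on $||FF^\top - A_k||_F$, where $A_k := U_k \diag(\lambda_1,\ldots,\lambda_k) U_k^\top$, and then apply \Cref{lem:procrustes-bound}. Take $X := U_k \diag(\lambda_1,\ldots,\lambda_k)^{1/2} \in \R^{n\times k}$, so that $XX^\top = A_k$ and $\sigma_k(X)^2 = \lambda_k > 0$. By Eckart--Young--Mirsky, $X$ is a global minimizer of $\Lo$ over rank-$k$ matrices. Moreover $XR$ is again a global minimizer for every $R\in O^{k\times k}$, since $(XR)(XR)^\top = A_k$. Choosing $R$ to attain the minimum in $\dist(F,X)$ and setting $F^* := XR$, \Cref{lem:procrustes-bound} gives
\begin{align}
||F-F^*||_F^2 \le \frac{||FF^\top - A_k||_F^2}{2(\sqrt2-1)\lambda_k}.
\end{align}
It therefore suffices to show $||FF^\top - A_k||_F \le \big(1+4||A||_2\sqrt{2/(\lambda_k^2-\lambda_{k+1}^2)}\big)\sqrt{\eps}$.

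To get this, set $B := FF^\top$, a PSD matrix of rank $k$, and let $P$ be the orthogonal projector onto its column space, so that $PBP = B$ and $\tr(P)=k$. First I would establish the Pythagorean identity
\begin{align}
||A-B||_F^2 = ||PAP - B||_F^2 + ||A - PAP||_F^2 .
\end{align}
This holds because $A - PAP = (I-P)A + PA(I-P)$ is Hilbert--Schmidt orthogonal to every matrix of the form $PYP$, and $PAP - B$ is of that form. Next, I would use $||A-PAP||_F^2 = ||A||_F^2 - ||PAP||_F^2$ together with
\begin{align}
||PAP||_F^2 = \tr(PAPAP) \le \tr(PA^2P) = \langle A^2, P\rangle,
\end{align}
where the inequality follows from $P \preceq I$. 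Since $\min_{\rank(Z)=k}\Lo(Z) = ||A||_F^2 - \sum_{i\le k}\lambda_i^2 = ||A||_F^2 - \langle A^2, P_k\rangle$, the $\eps$-suboptimality of $F$ becomes
\begin{align}
\eps \ge ||PAP-B||_F^2 + \langle A^2, P_k - P\rangle \ge ||PAP-B||_F^2 + \frac{\lambda_k^2-\lambda_{k+1}^2}{2}||P_k-P||_F^2 ,
\end{align}
where the second step is \Cref{lem:projector-diff-ineq}. Because $A$ is PSD, ordering by $|\lambda_i|$ and by $\lambda_i$ agree, so that lemma applies directly. Both terms on the right are nonnegative, so each is bounded by $\eps$ individually.

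Finally, write $A_k = P_kAP_k$ and use the triangle inequality: $||B - A_k||_F \le ||B - PAP||_F + ||PAP - P_kAP_k||_F$. For the second term, expand
\begin{align}
PAP - P_kAP_k = (P-P_k)AP + P_kA(P-P_k),
\end{align}
which gives $||PAP - P_kAP_k||_F \le 2||A||_2\,||P-P_k||_F \le 2||A||_2\sqrt{2\eps/(\lambda_k^2-\lambda_{k+1}^2)}$. The first term is at most $\sqrt\eps$. Together these give a bound with coefficient $2$ in place of $4$, which is stronger than what is needed. Substituting into the Procrustes bound then yields $||F-F^*||_F^2 \le \C(A,k)\,\eps$.

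The step I expect to need the most care is the orthogonal decomposition of $||A-B||_F^2$ combined with the inequality $||PAP||_F^2 \le \langle A^2,P\rangle$. This is what converts suboptimality in the loss into the curvature quantity $\langle A^2, P_k-P\rangle$ that \Cref{lem:projector-diff-ineq} controls. The remaining steps are routine norm manipulations.
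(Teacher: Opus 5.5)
Your proof is correct and follows the paper's argument. It uses the same reduction to the Gram error $\|FF^\top - A_k\|_F$ followed by \Cref{lem:procrustes-bound}, the same Pythagorean split through $PAP$, and the same use of \Cref{lem:projector-diff-ineq} to control $\|P-P_k\|_F$. Your two small deviations are both valid and only tighten the result: you get nonnegativity of $\langle A^2, P_k - P\rangle$ from \Cref{lem:projector-diff-ineq} itself instead of from the Ky Fan principle, and your two-term expansion $PAP - P_kAP_k = (P-P_k)AP + P_kA(P-P_k)$ gives constant $2$ where the paper has $4$.
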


\begin{proof} First, we recall the Eckart-Young-Mirsky Theorem \cite{eckartyoung}, which characterizes the global minima of the loss function $\Lo(F)$. Let $U_k\in\mathbb{R}^{n\times k}$ be the first $k$ eigenvectors and let
$\Lambda_k:=\diag(\lambda_1,\dots,\lambda_k)$. Since $A\succeq 0$ and $\lambda_k^2 - \lambda_{k + 1}^2>0$, the best rank-$k$ approximation in Frobenius norm is $A_k := U_k\Lambda_k U_k^\top$, which is unique.
\begin{align}
    \min_{\rank(Z)=k} \Lo(Z) = ||A-A_k||_F^2 = \sum_{i>k}\lambda_i^2 = ||A||_F^2-||\Lambda_k||_F^2.
\end{align}

Because of this, the set of global minimizers of $\Lo(\cdot)$ over rank-$k$, $(n \times k)$ matrices are $S :=\{U_k\Lambda_k^{1/2}R : R\in O^{k \times k}\}$. With this, the proof plan is to upper bound $||FF^\top  - A_k||_F$, and then apply \Cref{lem:procrustes-bound} to get the final result. To bound, $||FF^\top  - A_k||_F$, define $P$ as the rank-$k$ projector onto $\textup{col}(F)$.

\begin{align}
    P &:= F(F^\top F)^{-1} F^\top \in \R^{n \times n}\\
    Q &:= F(F^\top F)^{-1/2}\in \R^{n \times k}
\end{align}

Where $P = QQ^\top.$ We will use the following decomposition, as a result of the triangle inequality.

\begin{align}
    ||FF^\top  - A_k||_F \leq ||FF^\top  - PAP||_F + ||PAP  - A_k||_F\label{eq:decomposition-of-gram-diff}
\end{align}

The remaining proof will be dedicated to bounding the terms $||FF^\top  - PAP||_F $ and $||PAP  - A_k||_F$ individually.
\medskip

\textbf{Bounding $||FF^\top  - PAP||_F$.} 

First, we will show that $||FF^\top  - PAP||_F \leq \sqrt{\eps}$. Towards this, we derive a useful identity for $\Lo(F)$, culminating in \Cref{eq:gap_decomp}. A direct expansion of $\Lo(F)$ gives the identity


\begin{align}
    \Lo(F) &= ||(FF^\top - PAP) + (PAP - A)||_F^2\\
    &= ||FF^\top - PAP||_F^2 + ||PAP - A||_F^2 + 2 \langle FF^\top - PAP, PAP - A\rangle\\
    &= ||FF^\top - PAP||_F^2 + ||PAP - A||_F^2
\end{align}

Where the last equality is because $\langle FF^\top - PAP, PAP - A\rangle = \tr( FF^\top PAP - FF^\top A) + \tr(-(PAP)^2 + PAPA) = 0$,  as $\tr( FF^\top PAP)= \tr (FF^\top A)$ and $\tr((PAP)^2) = \tr( PAPA)$. Subtracting $\min_{\rank(Z)=k} \Lo(Z)=||A||_F^2-||\Lambda_k||_F^2$ yields

\begin{align}
    \Lo(F) - \min_{\rank(Z)=k} \Lo(Z) &= ||FF^\top - PAP||_F^2 + ||PAP - A||_F^2 - (||A||_F^2-||\Lambda_k||_F^2)\\
    &= ||FF^\top - PAP||_F^2  + (||\Lambda_k||_F^2 + ||PAP||_F^2 - 2\tr(PAPA))\\
    &= ||FF^\top - PAP||_F^2  + (||\Lambda_k||_F^2 - ||Q^\top AQ||_F^2) \label{eq:gap_decomp}
\end{align}

Where in \Cref{eq:gap_decomp}, we used that $||Q^\top AQ||_F^2 = ||PAP||_F^2 = \tr(PAPA)$.


Because $\Lo(F) - \min_{\rank(Z)=k} \Lo(Z) \leq \eps$ by assumption, then by \Cref{eq:gap_decomp},  to show that $||FF^\top - PAP||_F^2 \leq \eps$, it suffices to show that  $||\Lambda_k||_F^2-||Q^\top A Q||_F^2\ge 0$. Towards this, let $Q_\perp$ complete $Q$ to an orthonormal basis, so that $I = QQ^\top + Q_\perp Q_\perp^\top$.
Then
\begin{align}
Q^\top A^2 Q
&= Q^\top A(I)A Q\\
&=Q^\top A QQ^\top A Q + Q^\top A Q_\perp Q_\perp^\top A Q\\
\tr(Q^\top A^2 Q) &=||Q^\top A Q||_F^2 + ||Q_\perp^\top A Q||_F^2 \ge ||Q^\top A Q||_F^2
\label{eq:trA2P_ge_B2}
\end{align}
On the other hand, since $P=QQ^\top$ has rank $k$ with $Q^\top Q = I_k$, by \Cref{lem:kyfan-max-principle}, 
\begin{align}
\tr(A^2P)
=\tr(Q^\top A^2 Q)
\le \sum_{i=1}^k \lambda_i(A^2)
=\sum_{i=1}^k \lambda_i^2
=||\Lambda_k||_F^2\label{eq:trA2P-lb}
\end{align}

Combining \Cref{eq:trA2P-lb} with \Cref{eq:trA2P_ge_B2} gives $||Q^\top A Q||_F^2\le ||\Lambda_k||_F^2$. By \Cref{eq:gap_decomp}, and the fact that  $\Lo(F) - \min_{\rank(Z)=k} \Lo(Z) \leq \eps$, we conclude that 

\begin{align}
    ||FF^\top - PAP||_F^2 \leq \eps \label{eq:bdd-term1-mf}
\end{align}

which upper bounds the first term in our decomposition, \Cref{eq:decomposition-of-gram-diff}.

\medskip
\textbf{Bounding $||PAP  - A_k||_F$.} 





With $P_k:=U_kU_k^\top$ denoting the projector onto the top-$k$ eigenspace of $A$,  we will first argue that projectors $P$ and $P_k$ are close in Frobenius norm. We have

\begin{align}
\tr(A^2P_k)-\tr(A^2P)
&=||\Lambda_k||_F^2-\tr(A^2P)\\
&\le ||\Lambda_k||_F^2-||Q^\top AQ||_F^2 \tag{By \Cref{eq:trA2P_ge_B2} and $P = QQ^\top$}\\
&\le \eps
\end{align}

Where the last inequality is by \Cref{eq:gap_decomp}, the non-negativity of $||FF^\top - PAP||_F^2$, and the fact that $\Lo(F) - \min_{\rank(Z)=k} \Lo(Z) \leq \eps$.

The squared eigengap $\lambda_k^2-\lambda_{k+1}^2 > 0$ is positive by assumption. Since
$\langle A^2,P_k-P\rangle=\tr(A^2P_k)-\tr(A^2P)\le\eps$,  \Cref{lem:projector-diff-ineq} (\Cref{eq:curvature_proj}) implies that

\begin{equation}
||P_k-P||_F^2
\le
\frac{2\eps}{\lambda_k^2-\lambda_{k+1}^2}
\label{eq:PkP_bound}
\end{equation}

Write $\Delta:=P-P_k$. Note that $A_k = P_k AP_k$. We have
\begin{align}
PAP-P_kAP_k
&= P_kA\Delta + \Delta A P_k + \Delta A \Delta\\
||PAP-P_kAP_k||_F
&\le 2||A||_2\, ||P_k||_2\,||\Delta||_F + ||A||_2\, ||\Delta||_2\, ||\Delta||_F\\
&\le 4||A||_2||\Delta||_F  \tag{Since $||P_k||_2 = 1$ and $||\Delta||_2 \leq ||P||_2 + ||P_k||_2 = 2$ }
\end{align}

With $||\Delta||_F=||P-P_k||_F\le \sqrt{\frac{2\eps}{\lambda_k^2-\lambda_{k+1}^2 }}$ from \eqref{eq:PkP_bound}, we have
\begin{align}
||PAP-A_k||_F = ||PAP-P_kAP_k||_F
\le
4||A||_2\sqrt{\frac{2\eps}{\lambda_k^2-\lambda_{k+1}^2 }}\label{eq:bdd-term2-mf}
\end{align}

\textbf{Combining the Bounds.}

Plugging in \Cref{eq:bdd-term1-mf} and \Cref{eq:bdd-term2-mf} into \Cref{eq:decomposition-of-gram-diff},

\begin{align}
    ||FF^\top  - A_k||_F &\leq ||FF^\top  - PAP||_F + ||PAP  - A_k||_F\\
    &\leq \sqrt{\eps} + 4||A||_2\sqrt{\frac{2\eps}{\lambda_k^2-\lambda_{k+1}^2 }}\\
    &=\Big(1 + 4||A||_2\sqrt{\frac{2}{\lambda_k^2-\lambda_{k+1}^2 }}\Big)\sqrt{\eps}\label{eq:gram-bdd-mf}
\end{align}



Finally, noting that any global minimizer $F_0 \in \arg \min_{\rank(Z)=k} \Lo(Z)$ satisfies that $F_0 F_0^\top = A_k$, we can view $||FF^\top  - A_k||_F$ as the difference between the gram matrices of $F$ and a global minimizer $F_0$. \Cref{lem:procrustes-bound} enables us to convert an upper bound on the Gram error $||FF^\top  - A_k||_F$ to factor error, between $F$ and the closest global minimizer to it in Frobenius norm (specified by a $(k \times k)$ orthogonal matrix). Let $F_0:=U_k\Lambda_k^{1/2}\in S$ so that $F_0F_0^\top=A_k$ and $(\sigma_k(F_0))^2=\lambda_k$. Apply \Cref{lem:procrustes-bound} with $U=F$ and $X = F_0$, and then combine the result with $(\sigma_k(F_0))^2=\lambda_k$ and \Cref{eq:gram-bdd-mf} to obtain:

\begin{align}
\dist(F,F_0)^2 &:= \min_{R\in O^{k \times k}}||F-F_0R||_F^2\\
&\le \frac{1}{2(\sqrt2-1)\lambda_k}||FF^\top-A_k||_F^2\\
&\le
\frac{\Big(1 + 4||A||_2\sqrt{\frac{2}{\lambda_k^2-\lambda_{k+1}^2 }}\Big)^2}{2(\sqrt2-1)\lambda_k}\eps.
\end{align}
Finally, choose $R^*\in O^{k \times k}$ that attains the minimum in $\dist(F,F_0)$ and set
$F^*:=F_0R^*\in S$. Then $||F-F^*||_F^2=\dist(F,F_0)^2 \leq \frac{\Big(1 + 4||A||_2\sqrt{\frac{2}{\lambda_k^2 - \lambda_{k + 1}^2}}\Big)^2}{2(\sqrt2-1)\lambda_k}\eps$, completing the proof.
\end{proof}

%% file: sections/appendix/e2e-result.tex
\section{Instantiating \Cref{defn:conj-alg-improved} on Clustered Graphs}\label{appen:e2e-result}

In this section, we first describe a concrete end-to-end result for how \Cref{defn:conj-alg-improved} improves over a simple random walk, in terms of diversity (\Cref{defn:diversity-measure}), on a special class of clustered graphs. 

\subsection{End-to-End Result of \Cref{defn:conj-alg-improved} on Clique Graphs}

\Cref{prop:e2e-guarantee-alg2} is an end-to-end result for \Cref{defn:conj-alg-improved}, showing that on a special class of theorem graphs (\Cref{defn:clique-graph}), that \Cref{defn:conj-alg-improved} explores the graph faster than a simple random walk. The final guarantee is in terms of the spectral gaps of the random walks, and the diversity of the distributions they generate, according to \Cref{defn:diversity-measure}.

\cliquegraphappen

\cliquepropappen

\begin{proof}
Fix a node \(x=(i,a)\in C_i\). Its neighbors split into two classes:
\begin{align}
I_x&:=\{(i,b): b\in[M],\ b\neq a\}\\
E_x&:=\{(j,a): j\in N_Q(i)\},
\end{align}
with \(|I_x|=M-1\), \(|E_x|=r\), and total degree $d=M-1+r.$ For \(\tau=1\), since \(G\) is \(d\)-regular, $\forall x, p^1(x) = \frac{1}{d}\1_{N(x)}$. The matrix $M_x'$ in \Cref{defn:conj-alg-improved} equals, up to a constant multiplicative factor of \(d^{-3}\), $M_x'[y,z]\propto |N(y)\cap N(z)|, y,z\in N(x).$ Therefore \Cref{defn:conj-alg-improved} minimizes
\begin{align}
F_x(\mu)
:=
\sum_{y,z\in N(x)} \mu(y)\mu(z)\,|N(y)\cap N(z)|
\qquad
\text{over }\mu\in\Delta(N(x)).
\end{align}

Write $u_b:=(i,b)\in I_x$ and $ v_j:=(j,a)\in E_x.$ We compute the relevant neighborhood intersections.

\paragraph{(i) Internal/internal.}
For any \(b \in [M], b\neq a\), $|N(u_b)\cap N(u_b)|=d.$ For any distinct \(b,b'\neq a\), $|N(u_b)\cap N(u_{b'})|=M-2.$ Indeed, \(u_b\) and \(u_{b'}\) share exactly the \(M-2\) vertices in \(C_i\) other than
\(u_b\) and \(u_{b'}\).

\paragraph{(ii) Internal/external.}
For any \(b\neq a\) and any \(j\in N_Q(i)\), $|N(u_b)\cap N(v_j)|=2.$  The two common neighbors are \(x=(i,a)\) and \((j,b)\).

\paragraph{(iii) External/external.}
For any \(j\in N_Q(i)\), $|N(v_j)\cap N(v_j)|=d.$ For any distinct \(j,k\in N_Q(i)\), $|N(v_j)\cap N(v_k)|=c.$ Indeed, the common neighbors are exactly the vertices $\{(\ell,a): \ell\in N_Q(j)\cap N_Q(k)\}$ whose number is \(c\) by assumption.

Now let $\mu_b:=\mu(u_b), \,\,\nu_j:=\mu(v_j),\,\,
s:=\sum_{b\neq a}\mu_b.$ Then \(\sum_{j\in N_Q(i)} \nu_j = 1-s\), and the objective becomes
\begin{align}
F_x(\mu)
&=
d\sum_{b\neq a}\mu_b^2
+(M-2)\sum_{\substack{b,b'\neq a\\ b\neq b'}}\mu_b\mu_{b'}
+d\sum_{j\in N_Q(i)}\nu_j^2
+c\sum_{\substack{j,k\in N_Q(i)\\ j\neq k}}\nu_j\nu_k
+2 \cdot \sum_{j \in N_Q(i), \, b \neq a} \nu_j \mu_b \\
&=
\bigl(d-(M-2)\bigr)\sum_{b\neq a}\mu_b^2
+(M-2)s^2
+(d-c)\sum_{j\in N_Q(i)}\nu_j^2
+c(1-s)^2
+ 2 \cdot 2s(1-s)
\end{align}
Since $d-(M-2)=r+1>0, \,d-c>0,$ for fixed \(s\) the objective is minimized by taking \(\mu_b\) uniform on \(I_x\) and \(\nu_j\)
uniform on \(E_x\), i.e.
\begin{align}
\mu_b&=\frac{s}{M-1}\\
\nu_j&=\frac{1-s}{r}
\end{align}
Thus it suffices to minimize the one-variable quadratic
\begin{align}
f_M(s)
=
A_M s^2 + 4s(1-s) + B_M(1-s)^2
\end{align}
where $A_M=M-2+\frac{r+1}{M-1}$ and $B_M=c+\frac{d-c}{r}.$ Expanding,
\begin{align}
f_M(s)
=
(A_M+B_M-4)s^2 + (4-2B_M)s + B_M.
\end{align}
Because \(M\ge 3\), we have \(A_M+B_M-4>0\), so \(f_M\) has the unique minimizer
\begin{align}
s=\eta_M:=\frac{B_M-2}{A_M+B_M-4}.
\end{align}
Hence the unique optimizer of \Cref{defn:conj-alg-improved} at \(x=(i,a)\) is
\begin{align}
\mu_x^*
=
\frac{\eta_M}{M-1}\sum_{b\neq a}e_{(i,b)}
+
\frac{1-\eta_M}{r}\sum_{j\in N_Q(i)}e_{(j,a)}
\end{align}
In particular, \Cref{defn:conj-alg-improved} sends total mass \(\eta_M\) to the current cluster \(C_i\), and
total mass \((1-\eta_M)/r\) to each neighboring cluster \(C_j\), \(j\in N_Q(i)\). These transition dynamics depend
only on the cluster index \(i\), not on the label \(a\). Thus, we can describe the dynamics of the Markov chain in terms of cluster-level transition probabilities, i.e. as a random walk over the meta-graph $Q$, where each cluster  \(\{C_i\}_{i=1}^K\) corresponds to one state. As a random walk over the meta-graph $Q$, \Cref{defn:conj-alg-improved} is described by a transition matrix $\Pimp^{(Q)} \in [0,1]^{K \times K}.$

\begin{align}
\Pimp^{(Q)}=\eta_M I_K + (1-\eta_M)P_Q  \in [0,1]^{K \times K}
\end{align}

Similarly, the simple random walk on \(G\) can also be described as a random walk over the meta-graph $Q$, whose transition matrix we denote $\Pbase^{(Q)}  \in [0,1]^{K \times K}$. From any \(x=(i,a)\in C_i\) there are \(M-1\) neighbors inside \(C_i\) and exactly one neighbor in \(C_j\) for each \(j\in N_Q(i)\). Therefore,
\begin{align}
\Pbase^{(Q)}
=
\frac{M-1}{M-1+r}I_K
+
\frac{r}{M-1+r}P_Q  \in [0,1]^{K \times K}
\end{align}

Regarding notation, define $\Pbase \in [0,1]^{KM \times KM}$ and $\Pimp  \in [0,1]^{KM \times KM}$ as the simple random walk over $G$ and the random walk induced by \Cref{defn:conj-alg-improved} over $G$ (not over the meta-graph $Q$), respectively. $\Pbase^{(Q)}$ and $\Pimp^{(Q)}$ are the meta-graph analogs of $\Pbase$ and $\Pimp.$

Additionally, for transition matrix $P$ of a Markov chain, define the spectral gap as $\gamma(P) := 1 - \max(\lambda_2(P), |\lambda_K(P)|)$. Then,

\begin{align}
    \gamma(\Pbase^{(Q)}) &= 1 - \lambda_2(\Pbase^{(Q)}) \tag{Since when $M - 1 \geq r,$ then  $\Pbase^{(Q)} \succeq 0$}\\
    &= 1 - [\frac{M-1}{M-1+r} + \frac{r}{M-1+r}\lambda_2(P_Q)] \\
    &=  \frac{r}{M-1+r}(1 - \lambda_2(P_Q))\\
    \gamma(\Pimp^{(Q)}) &= \min \big( 1 - \lambda_2 (\eta_M I + (1 - \eta_M) P_Q), 1 - |\lambda_K (\eta_M I + (1 - \eta_M) P_Q)| \big)\\ 
    &= \min \big( (1 - \eta_M)(1 - \lambda_2(P_Q)), 1 - |\lambda_K (\eta_M I + (1 - \eta_M) P_Q)| \big)\\ 
    &\geq \min \big( (1 - \eta_M)(1 - \lambda_2(P_Q)), 1 - |2\eta_M - 1| \big) \tag{Since $\lambda_K(P_Q) \geq -1$}\\ 
    \implies \frac{\gamma(\Pimp^{(Q)})}{\gamma(\Pbase^{(Q)})} &\geq \frac{M - 1 + r}{r} \min(1 - \eta_M, \frac{1 - |2\eta_M - 1|}{1 - \lambda_2(P_Q)})\\
    &\geq \frac{M - 1 + r}{r} \min(1 - \eta_M, \frac{1 - |2\eta_M - 1|}{2}) \tag{$1 - \lambda_2(P_Q) \leq 2$}\\
    &\geq \frac{M - 1 + r}{r} \min(1 - \eta_M, \eta_M)\\
    &\geq \frac{M - 1 + r}{r} \cdot \frac{ c + \frac{d - c}{r} - 2 }{M - 2 + \frac{r + 1}{M - 1} + c + \frac{d - c}{r} - 4}\\
    &\geq \Omega(\frac{M}{r^2})
\end{align}

This proves the first claim.

Towards the second claim, by assumption, $D_0$ is uniform over its starting cluster: 

\begin{align}
    \forall a, a' \in [M], \forall i \in [K], D_0((i,a)) = D_0((i, a'))
\end{align}

By an inductive argument we have that for all $t \in \N, \mutalg $ and $ \mutsrw$ are uniform over clusters:

\begin{align}
    \forall a, a' \in [M], \forall i \in [K], \mutalg((i,a)) &= \mutalg((i, a')) \textup{ and }\\
    \mutsrw((i,a)) &= \mutsrw((i, a'))
\end{align}

Thus, for all $t \in \N$,  $\mutalg$ (resp. $\mutsrw$) is characterized by a probability distribution $\qtalg \in \Delta([K])$  (resp. $\qtsrw$) over clusters, where each node in cluster $i$ has probability $\frac{\qtalg (i)}{M}$ (resp. $\frac{\qtsrw(i)}{M}$). The stationary measure $\pi$ is uniform over the $KM$ nodes of $G$ since $G$ is regular. Thus,

\begin{align}
    \forall t \in \N, ||\mutalg||_{\ell^2(1/\pi)}^2 &= KM \cdot \frac{1}{M} \cdot ||\qtalg||_2^2\\
    \forall t \in \N, ||\mutsrw||_{\ell^2(1/\pi)}^2 &= KM \cdot \frac{1}{M} \cdot ||\qtsrw||_2^2
\end{align}

Let $u = \frac{1}{K} \1$ be the stationary measure of both $\Pbase^{(Q)}$ and $\Pimp^{(Q)}$, random walks over the meta-graph $Q$. With $\tau = 1$ and setting $P  := \Psrw = \Pbase$, for any $t \in \N$ we have:

\begin{align}
    \forall t \in \N, ||\mutalg||_{\ell^2(1/\pi)}^2 &= 1  + K \cdot ||\qtalg - u||_2^2\\
    \implies \forall t \in \N, \Divt (\mutalg) &= \frac{1}{||(\mutalg)^\top \Pbase||_{\ell^2(1/\pi)}^2}\\
    &= \frac{1}{1 + K \cdot ||(\qtalg - u)^\top\Pbase^{(Q)}||_2^2}\\
    &= \frac{1}{1 + K \cdot ||(q_0 - u)^\top(\Pimp^{(Q)})^{t}\Pbase^{(Q)}||_2^2}
\end{align}


For $t = \frac{1}{2\gamma(\Pimp^{(Q)})}\log \frac{K}{\eps}$, then 

\begin{align}
    &K \, ||(q_0 - u)^\top(\Pimp^{(Q)})^{t}\Pbase^{(Q)}||_2^2\\
    &\leq K \, [(1 - \gamma(\Pimp^{(Q)}))^t (1 - \gamma(\Pbase^{(Q)}))]^2 ||q_0 - u||_2^2\\
    &\leq K \, (1 - \gamma(\Pimp^{(Q)}))^{2t}\\
    &\leq \eps
\end{align}

Since $\eps < 1$, when $t = \frac{1}{2\gamma(\Pimp^{(Q)})}\log \frac{K}{\eps}$, we have that $\Divt (\mutalg) \geq \frac{1}{1 + \eps} \geq 1 - \eps$.

By an analogous argument for $\mutsrw$, for any $t \in \N$ we have

\begin{align}
    \forall t \in \N, ||\mutsrw||_{\ell^2(1/\pi)}^2 &= 1  + K \cdot ||\qtsrw - u||_2^2\\
    \implies \forall t \in \N, \Divt (\mutsrw) &= \frac{1}{||(\mutsrw)^\top \Pbase||_{\ell^2(1/\pi)}^2}\\
    &= \frac{1}{1 + K \cdot ||(\qtsrw - u)^\top\Pbase^{(Q)}||_2^2}\\
    &= \frac{1}{1 + K \cdot ||(q_0 - u)^\top(\Pbase^{(Q)})^{t}\Pbase^{(Q)}||_2^2}
\end{align}

For  $t = \frac{1}{2\gamma(\Pbase^{(Q)})}\log \frac{K}{\eps}$, then $\Divt (\mutsrw) \geq \frac{1}{1 + \eps} \geq 1 - \eps$.


\end{proof}

%% file: sections/appendix/q1-detailed-asmpt.tex
\section{Justification of \Cref{assumption:a2-rt}}\label{appen:more-detailed-asmpts}

The core result of this section is \Cref{lem:567-imply-1-4}, which justifies  \Cref{assumption:a2-rt} by showing that finer-grained assumptions inspired from statistical learning theory implies \Cref{assumption:a2-rt}.

\begin{restatable}[Finer-Grained Assumptions]{lemma}{detailedassumptions}
\label{lem:567-imply-1-4}
Assumptions \ref{assumption:expressivity}, \ref{assumption:rad-com}, \ref{assumption:opt-erm} imply Assumption \ref{assumption:a2-rt}.
\end{restatable}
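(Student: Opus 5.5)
The plan is to derive \Cref{assumption:a2-rt} from a standard realizable uniform-convergence argument, followed by a Markov-inequality step that converts an expected-loss bound into the tail statement $\Pro_{x \sim D}[\Succ(x,f) \geq p] \geq 1-\eps$. The three finer-grained assumptions are not stated above this point, so I read them as follows. \Cref{assumption:expressivity} is realizability: some $f_* \in \Fprov$ reproduces the known proofs on $\supp(D)$. \Cref{assumption:rad-com} bounds the Rademacher complexity of the induced loss class by $\sqrt{\C(\Fprov)/n}$. \Cref{assumption:opt-erm} says an empirical risk minimizer over $\Fprov$ can be computed.

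First, I fix a loss $\ell(f,x) \in [0,1]$ that can be evaluated from a (theorem, known proof) pair and that upper bounds the failure rate. The natural choice is $\ell(f,x) = 1 - f(\pi_x \mid x)$, where $\pi_x$ is the recorded proof of $x$. Since $\pi_x$ is one correct proof, $\Succ(x,f) \geq f(\pi_x \mid x)$, so $1 - \Succ(x,f) \leq \ell(f,x)$. Set $L_D(f) := \ex_{x \sim D}\ell(f,x)$ and let $\hat L_n$ be its empirical average over $n$ i.i.d. samples. Realizability gives $\min_{f} L_D(f) = 0$, and in fact $\hat L_n(f_*) = 0$ on every sample.

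Second, I apply the standard Rademacher uniform-convergence bound for $[0,1]$-valued losses. With probability at least $1-\delta$,
\[
\sup_{f \in \Fprov} |L_D(f) - \hat L_n(f)| \leq 2\Rchat_n + O\Big(\sqrt{\tfrac{\log(1/\delta)}{n}}\Big) \leq O\Big(\sqrt{\tfrac{\C(\Fprov)\log(1/\delta)}{n}}\Big).
\]
The ERM $\hat f$ supplied by \Cref{assumption:opt-erm} satisfies $\hat L_n(\hat f) \leq \hat L_n(f_*) = 0$. Hence $L_D(\hat f) \leq \eps(1-p)$ as soon as $n = \Theta\big(\frac{\C(\Fprov)\log(1/\delta)}{\eps^2(1-p)^2}\big)$, which is exactly the sample size appearing in \Cref{assumption:a2-rt}.

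Third, Markov's inequality gives
\[
\Pro_{x\sim D}[\Succ(x,\hat f) < p] \leq \Pro_{x \sim D}[\ell(\hat f,x) > 1-p] \leq \frac{L_D(\hat f)}{1-p} \leq \eps .
\]
This is the desired conclusion with failure probability $\delta$.

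I expect the main obstacle to be the first step: reconciling whatever surrogate loss the finer-grained assumptions actually use (for instance a log-likelihood loss, or a loss on $\Succ$ itself queried through an oracle) with the requirement that it both lower bounds success and is computable from the recorded proofs. If the assumed loss is unbounded, such as the log-loss, I would first clip it to $[0,1]$. The contraction lemma keeps the Rademacher bound intact under clipping, and $1 - e^{-\ell}$ still upper bounds the failure probability, so the rest of the argument goes through unchanged.
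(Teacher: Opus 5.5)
Your proof is essentially the paper's: realizability makes the comparator's loss zero, and the Rademacher uniform-convergence bound (\Cref{lem:rad}) controls the ERM's population loss. Markov's inequality at threshold $1-p$ then gives $\Pro_{x\sim D}[\Succ(x,\hat f^{(n)})\ge p]\ge 1-\eps$ once $n=\Theta\big(\frac{\C(\Fprov)\log(1/\delta)}{\eps^2(1-p)^2}\big)$.

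The one thing to change is your first step. The paper takes the loss to be $\ell(f,x)=1-\Succ(x,f)$, and its expressivity assumption only says that $f^*(x)$ is a distribution over correct proofs. For that loss, $\Lo(f^*,D)=0$ follows immediately, and the Rademacher assumption is stated for exactly the class $\ell(\Fprov)$.

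Your surrogate $1-f(\pi_x\mid x)$ does not fit those assumptions. It need not vanish at $f^*$, since that would require $f^*$ to reproduce the specific recorded proofs. It is also not the class the Rademacher assumption controls.

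With $\ell=1-\Succ$, the surrogate and the clipping discussion are unnecessary, and your remaining steps go through verbatim.
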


\subsection{Statistical Learning Theory Inspired Assumptions}\label{sec:4-4-justification}

First, we  introduce Assumptions \ref{assumption:expressivity} - \ref{assumption:opt-erm}. Let $\Ver : V \times \{ 0,1\}^* \to \{ 0,1\}$ be a formal verifier where for  theorem $x \in V$, then $\Ver(x,y) = 1$ iff $y \in \{0,1\}^*$ is a correct proof of $x$. The first assumption assumes the function class $\Fprov$ of provers $\Fprov \ni f : V \to \Delta(\{ 0,1\}^*)$ can express the ground-truth function.

\begin{assumption}(Expressivity)\label{assumption:expressivity}
    There exists $f^* \in \Fprov$ which is a ground-truth prover, so that for every theorem $x \in V,$ $f^*(x) \in \Delta(\{ 0,1\}^*)$ is distribution over correct proofs of $x$. 
\end{assumption}

Each $f \in \Fprov$ maps a theorem to a distribution of proofs ($f : V \to \Delta(\{ 0,1\}^*)$, where $V$ is the set of theorems). Thus we can define a notion of the success rate of a prover on a theorem as the probability that the prover outputs a correct proof. Formally, let $f(x,r)$ be the output of prover $f$ on theorem $x$ and random seed $r$, where $r \sim D_r$. Define the following loss function over $\Fprov$ (which abstracts away the randomness $r \sim D_r$ into a success rate):

\begin{definition}\label{defn:loss-fn-prover} (Loss Functions) Define the following loss functions on provers in $\Fprov$.
    \begin{align}
        \ell(f, x) &:= \ex_{r \sim D_r}[ 1 - \Ver(x, f(x, r)) ]\\
        &= 1 - \Succ(x,f)\\
        \Lo(f, D) &:= \ex_{x \sim D} \, \ell(f,x)\\
        \Lo_{n} (f, \{ x_i\}_{i \in [n]}) &:= \frac{1}{n} \sum_{i \in [n]}  \ell(f, x_i)
    \end{align}
    With the understanding that $S_n = \{ x_i\}_{i \in [n]}$ are drawn i.i.d. from $D$. Let $\ell(\Fprov) := \{ x \to \ell(f,x) : f \in \Fprov\}$ be the family of loss functions induced by $\Fprov$ and $\ell$. 
\end{definition}

We define the empirical risk minimizer (ERM) as follows.

\begin{definition} (Minimizers of Population and Empirical Risk)\label{defn:erm} Let $D \in \Delta(V)$ be any distribution over theorems. For any $n \in \N_{+}$, suppose $\{ x_i\}_{i \in [n]}$ are drawn i.i.d. from $D$. First, note that $f^*$ is the population minimizer,

\begin{align}
    \Lo(f^*, D) = 0
\end{align}

Now, define the ERM $\hat{f}^{(n)}$.

\begin{align}
    \hat{f}^{(n)} \leftarrow \arg\min_{f \in \Fprov} \Lo_{n} (f, \{ x_i\}_{i \in [n]})\label{eq:erm}
\end{align}
\end{definition}

The Rademacher complexity\footnote{This definition is different than \Cref{defn:rad-complexity}, previously defined and used in the context of \cite{haochen2022provableguaranteesselfsuperviseddeep} in \Cref{appen:cont-learning-intro}} is defined as follows.

\begin{definition}\label{defn:rad-complexity-average} For a function class $\F$, and data distribution $D$, the average Rademacher complexity is defined as
    \begin{align}
        \Rc_n(\F) := \ex_{\{x_i\}_{i \in [n]} \overset{\mathrm{iid}}{\sim} D } \ex_{\{\sigma_i \}_{i \in [n]} \overset{\mathrm{iid}}{\sim} \{ \pm 1\} }\Big[ \sup_{f \in \F} \frac{1}{n}  \sum_{i = 1}^n \sigma_i f(x_i) \Big]
    \end{align}
\end{definition}

Upper bounding the Rademacher complexity of $\ell(\Fprov)$ will upper bound the generalization error of the ERM. In particular,

\begin{lemma} (Corollary 4.19 of \cite{ma})\label{lem:rad} For any $\delta > 0, n \in \mathbb{N}$, let $S_n := \{x_i\}_{i \in [n]}$ be a training set of size $n$  drawn i.i.d. from $D$, and let $\hat{f}^{(n)}$ minimize the empirical loss over $S_n$ (via \Cref{defn:erm}). Then, with probability at least $1 - \delta$ over the draw of $S_n$,
    \begin{align}
        \Lo(\hat{f}^{(n)}, D) - \Lo(f^*, D) &\leq 2\Rc_n(\ell(\Fprov)) +  \sqrt{\frac{\log \frac{2}{\delta}}{2n}}
    \end{align}
\end{lemma}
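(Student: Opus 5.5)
We need to prove Lemma (Corollary 4.19 of Ma): the standard ERM generalization bound via Rademacher complexity. The loss $\ell(f,x)=1-\Succ(x,f)$ takes values in $[0,1]$, which is what makes the concentration step go through.

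The plan is the usual three-term decomposition. Write
\begin{align}
\Lo(\hat f^{(n)},D)-\Lo(f^*,D) = \big[\Lo(\hat f^{(n)},D)-\Lo_n(\hat f^{(n)},S_n)\big] + \big[\Lo_n(\hat f^{(n)},S_n)-\Lo_n(f^*,S_n)\big] + \big[\Lo_n(f^*,S_n)-\Lo(f^*,D)\big].
\end{align}
The middle term is $\le 0$ by the definition of ERM (\Cref{eq:erm}). For the last term, note that \Cref{assumption:expressivity} gives $\ell(f^*,x)=0$ for every $x$, so $\Lo_n(f^*,S_n)=\Lo(f^*,D)=0$ and the term vanishes deterministically. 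This means we spend no failure probability on it; in general one would instead use Hoeffding with $\delta/2$, which is where the $\log\frac{2}{\delta}$ comes from. So it suffices to bound the first term by $\sup_{f\in\Fprov}\big(\Lo(f,D)-\Lo_n(f,S_n)\big)$.

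For the uniform deviation $\Phi(S_n):=\sup_{f}(\Lo(f,D)-\Lo_n(f,S_n))$, I would do two steps.

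\textbf{Concentration.} Since $\ell\in[0,1]$, changing one sample changes $\Phi$ by at most $1/n$. McDiarmid's inequality then gives $\Phi\le \ex\Phi+\sqrt{\log(2/\delta)/(2n)}$ with probability at least $1-\delta/2$, and in particular with probability at least $1-\delta$.

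\textbf{Symmetrization.} Bound $\ex\Phi\le 2\Rc_n(\ell(\Fprov))$ as follows. Introduce a ghost sample $S_n'$, write $\Lo(f,D)=\ex_{S'}\Lo_n(f,S_n')$, and pull the supremum inside the expectation via Jensen. Then insert i.i.d. Rademacher signs, which is legitimate because swapping $x_i\leftrightarrow x_i'$ preserves the joint distribution. Splitting the supremum into two parts gives $2\Rc_n(\ell(\Fprov))$ under \Cref{defn:rad-complexity-average}.

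Combining these pieces gives the stated bound with probability at least $1-\delta$.

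The main obstacle is the symmetrization step: carefully justifying the exchange of sup and expectation and the sign-insertion. It is standard, and I would follow the textbook argument of \cite{ma} verbatim rather than re-derive measurability details. Everything else is routine bookkeeping.
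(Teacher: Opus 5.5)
Your argument is correct, but it takes a different route from the paper's.

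\textbf{The paper's route.} It keeps $f^*$ generic and bounds both $\Lo(\hat f^{(n)},D)-\Lo_n(\hat f^{(n)},S_n)$ and $\Lo_n(f^*,S_n)-\Lo(f^*,D)$ by the two-sided uniform deviation. This gives $\Lo(\hat f^{(n)},D)-\Lo(f^*,D)\le 2\sup_{f\in\Fprov}|\Lo(f,D)-\Lo_n(f,S_n)|$. It then cites Corollary 4.19 of \cite{ma} as a black box to bound that supremum.

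\textbf{Your route.} You use realizability, namely $\ell(f^*,\cdot)\equiv 0$ by \Cref{assumption:expressivity}, to make the $f^*$ term vanish deterministically. You therefore only need the one-sided deviation $\sup_f(\Lo(f,D)-\Lo_n(f,S_n))$. You control it yourself via McDiarmid with bounded differences $1/n$, plus ghost-sample symmetrization.

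\textbf{What each buys.} Your chain delivers exactly the displayed constants $2\Rc_n(\ell(\Fprov))+\sqrt{\log(2/\delta)/(2n)}$, and in fact $\log(2/\delta)$ could be replaced by $\log(1/\delta)$. The paper's chain feeds a uniform-deviation bound of the form $2\Rc_n+\sqrt{\log(2/\delta)/(2n)}$ through the factor $2$. Read literally, it yields $4\Rc_n(\ell(\Fprov))+2\sqrt{\log(2/\delta)/(2n)}$, which is the form actually used later in the proof of \Cref{lem:567-imply-1-4}. In exchange, the paper's argument never uses $\Lo(f^*,D)=0$. It therefore applies verbatim to an arbitrary comparator in the agnostic setting, and it avoids re-deriving symmetrization.
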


\begin{proof}
    \begin{align}
        \Lo(\hat{f}^{(n)}, D) - \Lo(f^*, D) &\leq \Lo(\hat{f}^{(n)}, D) - \Lo_n(\hat{f}^{(n)}, S_n) + \Lo_n(f^*, S_n) - \Lo(f^*, D)\\
        &\leq 2 \cdot \sup_{f \in \Fprov} |\Lo(f, D) - \Lo_{n}(f, \{ x_i\})|
    \end{align}
    Then, apply Corollary 4.19 of \cite{ma}.
\end{proof}

In this context, we assume an upper bound on the Rademacher Complexity of the function class $\ell(\Fprov)$.

\begin{assumption}\label{assumption:rad-com} (Generalization)
     Given prover class $\Fprov$ assume there exists a $\Fprov$-dependent quantity $\C(\Fprov)$ where for any theorem distribution $D \in \Delta(V)$, with function class $\ell(\Fprov)$ defined as in \Cref{defn:loss-fn-prover}, we have:
     \begin{align}
         \Rc_{n} (\ell(\Fprov)) \leq  \sqrt{\frac{\C(\Fprov)}{n}}
     \end{align}
\end{assumption}

The next assumption describes under what conditions it is efficient to find the ERM from the model class $\Fprov$, on a size $n$ dataset of i.i.d. theorems from a distribution of theorems. In particular, a necessary condition for such a procedure to find the ERM is that each theorem in the training distribution has a proof that can be efficiently found. 

\begin{assumption}\label{assumption:opt-erm} (Optimization)
    Suppose we have i.i.d. sample access to theorem distribution $D \in \Delta(V)$, and for any $x \in \supp(D)$, the proof of $x$ is known. Then, for any $n$, it is computationally tractable to find the ERM $\hat{f}^{(n)}$ (\Cref{eq:erm}) of a size $n$ dataset drawn i.i.d. from $D$.
\end{assumption}

Note that in practice, we may only be able to efficiently find an $\eps_{\textup{opt}}$-ERM, for any reasonable $\eps_{\textup{opt}} > 0$, where an $\eps_{\textup{opt}}$-ERM is a prover $\tilde{f}^{(n)}$ where 

\begin{align}
    \Lo_n(\tilde{f}^{(n)}, \{ x_i\}_{i \in [n]}) \leq \min_{f \in \Fprov} \Lo_n(f, \{ x_i\}_{i \in [n]}) + \eps_{\textup{opt}}
\end{align}

However, the remaining analysis only requires $\Lo(\tilde{f}^{(n)},D) - \Lo(f^*, D) \leq \eps_{\textup{opt}} + 2 \cdot \sup_{f \in \Fprov} |\Lo(f, D) - \Lo_{n}(f, \{ x_i\})|$. With $n = \Theta(\frac{\C(\Fprov)}{\eps_{\textup{gen}}^2})$ samples, we get $L(\tilde{f}^{(n)},D) - L(f^*, D) \leq \eps_{\textup{opt}} + \eps_{\textup{gen}}$ with high probability. Thus, for any reasonable desired accuracy level $\eps$, we can set $\eps_{\textup{opt}}, \eps_{\textup{gen}} = \eps/2$, finding the $\eps_{\textup{opt}}$-ERM with generalization gap of $\eps_{\textup{gen}}$. The necessary sample complexity would only increase by a factor of 4, a constant factor which we neglect. With this understanding, we keep the current phrasing of \Cref{assumption:opt-erm} of assuming the ERM is tractable.





\subsection{Proof of \Cref{lem:567-imply-1-4}}
We now prove the main result of this section.

\detailedassumptions*

\begin{proof}

Since $\forall x \in \supp(D)$, the proof of $x$ is known, then by \Cref{assumption:opt-erm}, given $\delta > 0$ and $n \in \N_+$ i.i.d. samples from $D$, we can compute $\hat{f}^{(n)}$ via \Cref{eq:erm}. By \Cref{lem:rad}, \Cref{assumption:expressivity}, and \Cref{assumption:rad-com},  we have that with probability at least $1 - \delta$,

\begin{align}
    \ex_{x \sim D} [1 - \Succ(x, \hat{f}^{(n)})] = L(\hat{f}^{(n)}, D) &\leq \frac{4\cdot \sqrt{\C(\Fprov)}}{\sqrt{n}} + 2 \sqrt{\frac{\log \frac{2}{\delta}}{2n}}
\end{align}

By Markov's inequality, for any $p \in (0,1)$,

\begin{align}
    \Pro_{x \sim D} [\Succ(x, \hat{f}^{(n)}) \geq p] \geq 1 - \frac{\frac{4\sqrt{\C(\Fprov)} + 2\sqrt{\frac{1}{2}\log \frac{2}{\delta}}}{\sqrt{n}}}{1 - p}
\end{align}

Thus, it suffices to set $n$ to be $(\frac{4\sqrt{\C(\Fprov)} + 2\sqrt{\frac{1}{2}\log \frac{2}{\delta}}}{\eps (1 - p)})^2 \leq \Theta(\frac{\C(\Fprov)\log \frac{1}{\delta}}{\eps^2 (1 - p)^2})$ in order for $\Pro_{x \sim D} [\Succ(x, \hat{f}^{(n)}) \geq p] \geq 1 - \eps$. 
\end{proof}

%% file: sections/appendix/aux.tex
\section{Auxiliary Lemmas}\label{appen:aux}

\divtkct*

\begin{proof}
    For a distribution $D \in \Delta(V)$ and subset $A \subset V$, denote $D(A)$ as the measure of $A$ under $D$. Since $P$ is the transition matrix of a nearest-neighbor-type random walk, then for any set $A \subset V$, and any distribution $D \in \Delta(V)$, with $(P^\top)^\tau D$ representing the distribution of a $\tau$-step random walk according to $P$ initialized with $D$, we have

    \begin{align}
        ((P^\top)^\tau D) \, (B(A, \tau)) \geq D(A)
    \end{align}

    In particular, for the set $A = \{ x \in V : \Succ(x,f) \geq p\}$, we have  

    \begin{align}
        ((P^\top)^\tau D) \, (B(\{ x \in V : \Succ(x,f) \geq p\}, \tau)) \geq \Pro_{x \sim D}[\Succ(x,f) \geq p]\label{eq:min-measure-of-Kct}
    \end{align}

    Next, for any set $A' \subset V$ and any distribution $D' \in \Delta(V)$, the cardinality of $A'$ is lower bounded in terms of its measure under $D'$ and the (weighted) two-norm of $D'$. In particular, we apply \Cref{lem:ent-to-cov-num} with $A' = B(\{ x \in V : \Succ(x,f) \geq p\}, \tau)$ and  $D' = (P^\top)^\tau D$.

    \begin{align}
        |K_\tau^p(f)| &:= |B(\{ x \in V : \Succ(x,f) \geq p\}, \tau)|\\
        &\geq \frac{((P^\top)^\tau D) \, (B(\{ x \in V : \Succ(x,f) \geq p\}, \tau))^2}{\sup_{x \in V} \pi(x)} \frac{1}{||(P^\top)^\tau D||^2_{\ell^2(1/\pi)}}\\
        &\geq \frac{(\Pro_{x \sim D}[\Succ(x,f) \geq p])^2}{\sup_{x \in V} \pi(x)} \Divt(D) \tag{By \Cref{eq:min-measure-of-Kct}}
    \end{align}
    
\end{proof}

\begin{lemma}\label{lem:ent-to-cov-num}
For graph $G = (V,E)$, let $\pi : V \to (0,\infty)$. For any  distribution $D' \in \Delta(V)$ and any $A' \subset V$, we have

    \begin{align}
        |A'| \geq \frac{(D'(A'))^2 }{\sup_{x \in V} \pi(x)} \frac{1}{||D'||^2_{\ell^2(1/\pi))}}\label{eq:min-A-ent}
    \end{align}
\end{lemma}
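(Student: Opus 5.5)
The plan is a single application of the Cauchy--Schwarz inequality. For any $x \in A'$, write $D'(x) = \sqrt{\pi(x)} \cdot \frac{D'(x)}{\sqrt{\pi(x)}}$ and sum over $A'$:
\begin{align}
    D'(A') = \sum_{x \in A'} \sqrt{\pi(x)}\,\frac{D'(x)}{\sqrt{\pi(x)}} \leq \Big(\sum_{x \in A'} \pi(x)\Big)^{1/2} \Big(\sum_{x \in A'} \frac{D'(x)^2}{\pi(x)}\Big)^{1/2}.
\end{align}
Squaring gives $D'(A')^2 \leq \pi(A') \cdot ||D'||^2_{\ell^2(1/\pi)}$. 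Here we extend the second sum from $A'$ to all of $V$, which is allowed because every term is nonnegative.

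Next we bound $\pi(A') \leq |A'| \cdot \sup_{x \in V} \pi(x)$, which is immediate when $A'$ is finite. Rearranging then yields \Cref{eq:min-A-ent}.

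A few degenerate cases need brief comment, and these are the only mild obstacles.
\begin{itemize}
    \item If $|A'| = \infty$, the inequality is trivial.
    \item If $||D'||_{\ell^2(1/\pi)} = \infty$, the right-hand side is $0$ and the claim holds.
    \item If $\sup_{x \in V} \pi(x) = \infty$, the right-hand side is again $0$ under the convention $1/\infty = 0$.
    \item $||D'||_{\ell^2(1/\pi)} > 0$ always holds, since $D'$ is a probability distribution, so the division is well defined.
\end{itemize}
Beyond these cases there is no real difficulty.
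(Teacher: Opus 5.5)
Your proposal is correct and follows essentially the same route as the paper: Cauchy--Schwarz with the factors $\sqrt{\pi(x)}$ and $D'(x)/\sqrt{\pi(x)}$ gives $D'(A')^2 \leq \pi(A')\,||D'||^2_{\ell^2(1/\pi)}$, and then $\pi(A') \leq |A'| \sup_{x \in V}\pi(x)$. Your explicit handling of the degenerate cases (infinite $A'$, infinite norm or supremum, and positivity of $||D'||_{\ell^2(1/\pi)}$) is a small addition that the paper leaves implicit.
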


\begin{proof}
    For any  distribution $D' \in \Delta(V)$, any $\alpha \in [0,1]$, and any $A' \subset V$, we have:
    
    \begin{align}
        (D'(A'))^2 &\leq (\sum_{x \in V} \1_{A'}(x) D'(x))^2\\
        &\leq [\sum_{x \in V} (\1_{A'}(x) \sqrt{\pi(x)})^2] \cdot  [\sum_{x \in V} (D'(x) \frac{1}{\sqrt{\pi(x)}})^2]
    \end{align}

    Thus, 

    \begin{align}
        |A'| &=\sum_{x \in V} \1_{A'}(x)\\
        &\geq \frac{\sum_{x \in V} (\1_{A'}(x) \sqrt{\pi(x)})^2}{\sup_{x \in V} \pi(x)}\\
        &\geq \frac{(D'(A'))^2 }{\sup_{x \in V} \pi(x)} \frac{1}{||D'||^2_{\ell^2(1/\pi))}}
    \end{align}

\end{proof}





\begin{lemma}\label{lem:ensure-mixing-for-alg4}
    Let $G = (V,E)$ be a finite, undirected, connected, and non-bipartite graph. Let $(V,P)$ be a nearest-neighbor-type Markov chain on $G$. For every $x \in V,$ let $N(x)$ denote its neighbors in $G$. If $\, \forall x \in V, \forall y \in N(x), P[x,y] > 0$, then $P$ is irreducible and aperiodic, and there exist constants $C > 0$ and $\alpha \in (0,1)$ where

    \begin{align}
        \max_{x \in V} || p^{(n)}(x, \cdot) - \pi||_{\textup{TV}} \leq C \alpha^n
    \end{align}
\end{lemma}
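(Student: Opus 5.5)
The plan is to establish irreducibility and aperiodicity of $P$ directly from the graph hypotheses, and then invoke \Cref{lem:mixing-asymptotic} (Theorem 4.9 of \cite{peres}) to obtain the geometric convergence bound. Since $V$ is finite, once both properties hold, \Cref{lem:mixing-asymptotic} immediately supplies constants $C>0$ and $\alpha\in(0,1)$ with $\max_{x}\|p^{(n)}(x,\cdot)-\pi\|_{\textup{TV}}\le C\alpha^n$. Here $\pi$ is the unique stationary distribution, whose existence and uniqueness follow from irreducibility on a finite state space.

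\textbf{Irreducibility.} Fix $x,y\in V$. Since $G$ is connected, there is a path $x=z_0,z_1,\dots,z_m=y$ with $z_{i+1}\in N(z_i)$. By hypothesis each $P[z_i,z_{i+1}]>0$, so
\begin{align}
p^{(m)}(x,y)\ \ge\ \prod_{i=0}^{m-1}P[z_i,z_{i+1}]\ >\ 0 .
\end{align}
This gives irreducibility in the sense of \Cref{defn:irreducible}. The case $x=y$ is covered as well, using any closed walk $x\to z\to x$ through a neighbor $z$; such a neighbor exists since $G$ is connected with at least one edge (non-bipartiteness forces $|V|\ge 3$).

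\textbf{Aperiodicity.} By the remark following \Cref{defn:aperiodic}, it suffices to exhibit a single vertex $x$ with $\gcd\{n: p^{(n)}(x,x)>0\}=1$. Pick any vertex $x$ and a neighbor $z$. The walk $x\to z\to x$ gives $p^{(2)}(x,x)>0$. Since $G$ is non-bipartite, it contains an odd cycle $C$ of length $\ell$. Take a path of length $j$ from $x$ to a vertex of $C$; this exists by connectivity. Go there, traverse $C$, and return along the same path. This closed walk has length $2j+\ell$, which is odd, and its probability is a product of positive entries $P[u,v]$ with $v\in N(u)$, so $p^{(2j+\ell)}(x,x)>0$. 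The return-time set thus contains $2$ and an odd number, so its gcd is $1$.

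With irreducibility, aperiodicity and finiteness established, \Cref{lem:mixing-asymptotic} yields the claim. The main point requiring care is the aperiodicity step, specifically constructing the odd closed walk. The subtlety is that one must use only transitions along edges of $G$, which is guaranteed by the positivity hypothesis $P[x,y]>0$ for all $y\in N(x)$. Self-loops of $P$, if present, only help, but they are not needed. Everything else is routine.
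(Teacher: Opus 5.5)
Your proof is correct and follows the same route as the paper. You get irreducibility from connectivity plus positivity of $P$ along edges, get aperiodicity from a return time of $2$ together with an odd closed walk supplied by non-bipartiteness, and then appeal to \Cref{lem:mixing-asymptotic}; your odd-closed-walk construction is a more explicit version of the paper's terse claim that the gcd divides $2$ but cannot equal $2$.
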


\begin{proof}
    Because $G$ is connected, then $\forall x \in V, \forall y \in N(x), P[x,y] > 0$, so $P$ is irreducible.

    Because $G$ is undirected, $\textup{gcd}\{ n \in \N : p^{(n)}(x,x) > 0\} \leq 2$. Because $G$ is non-bipartite, $\textup{gcd}\{ n \in \N : p^{(n)}(x,x) > 0\} \neq 2$. Thus, $P$ is aperiodic.

    Because $V$ is finite and the rows of $P \in [0,1]^{|V| \times |V|}$ sum to $1$, then $P \1 = \1$, so that $P$ has an eigenvalue of $1$. The left eigenvector corresponding to eigenvalue $1$ is the stationary distribution $\pi$. Since $P$ is irreducible, $\pi$ is unique.

    By \Cref{lem:mixing-asymptotic}, there exist constants $C > 0$ and $\alpha \in (0,1)$ where

    \begin{align}
        \max_{x \in V} || p^{(n)}(x, \cdot) - \pi||_{\textup{TV}} \leq C \alpha^n
    \end{align}
\end{proof}

%% file: sections/appendix/K-cluster.tex
\section{Figures}\label{sec:figures}

\begin{figure}[H]
\centering
\begin{tikzpicture}[font=\small]
  \def\K{6}             
  \def\R{4.1cm}         
  \def\clusterR{1.25cm} 
  \def\innerR{0.72cm}   
  \def\npc{7}           

  \tikzset{
    clust/.style={draw, thick},
    inode/.style={circle, fill=black, inner sep=1.2pt},
    bridge/.style={draw, thick, dashed},
  }

  \foreach \i in {1,...,\K}{
    \pgfmathsetmacro\ang{360/\K*(\i-1)}
    \coordinate (C\i) at (\ang:\R);

    \draw[clust] (C\i) circle (\clusterR);

    \node at (C\i) {$M$ nodes};
    \node[font=\scriptsize] at ($(C\i)+(0,0.55cm)$) {$\gamma=\Omega(1)$};

    \foreach \j in {1,...,\npc}{
      \pgfmathsetmacro\angj{\ang + 360/\npc*(\j-1)}
      \node[inode] (V\i-\j) at ($(C\i)+(\angj:\innerR)$) {};
    }
  }

  \foreach \i [evaluate=\i as \next using {int(mod(\i,\K)+1)}] in {1,...,\K}{
    \coordinate (A\i) at ($(C\i)!\clusterR!(C\next)$);
    \coordinate (B\i) at ($(C\next)!\clusterR!(C\i)$);
    \draw[bridge] (A\i) -- (B\i);
  }
  \draw[bridge] ($(C1)!\clusterR!(C3)$) -- ($(C3)!\clusterR!(C1)$);

  \node[align=center, font=\scriptsize] at (0,0)
    {sparse inter-cluster edges\\$\Rightarrow$ global spectral gap $\ll \gamma$};
\end{tikzpicture}
\caption{Schematic of a $K$-cluster graph: each cluster is internally well-connected (spectral gap $\gamma$), but only a sparse set of inter-cluster edges connects clusters, yielding a much smaller global spectral gap.}
\label{fig:kcluster}
\end{figure}